\documentclass{article}

\PassOptionsToPackage{numbers, compress}{natbib}
\usepackage[final]{neurips_2020}

\usepackage[utf8]{inputenc} 
\usepackage[T1]{fontenc}    
\usepackage{nicefrac}       
\usepackage{microtype}      

\usepackage[ruled,vlined,linesnumbered,procnumbered,longend]{algorithm2e}
\usepackage{bm}


\usepackage{amsmath,amssymb}
\usepackage{bbm}
\usepackage{mathabx}

\usepackage{etex}
\usepackage{cvpr-abbriv}
\usepackage{array}
\usepackage{url}
\usepackage{graphicx}
\usepackage{amsmath,amssymb} 
\makeatletter
\let\proof\@undefined                        
\let\endproof\@undefined                  
\makeatother

\usepackage{amsthm}
\usepackage{amsxtra}
\usepackage{stmaryrd}
\usepackage{multirow,rotating}
\usepackage{afterpage}
\usepackage{dblfloatfix}
\usepackage{setspace}
\usepackage{booktabs}
\usepackage{color, colortbl}
\usepackage{tikz}                 
\usetikzlibrary{patterns,fit,external}
\usepackage{pgfplots}             
\usepackage{overpic}
\usepackage{tocloft}
\setlength\cftparskip{2pt}
\setlength\cftbeforesecskip{2pt}
\setlength\cftaftertoctitleskip{2pt}

\usepackage{bbm}
\usepackage[absolute]{textpos}
\usepackage{atbeginend}
\usepackage{units}
\usepackage{xifthen}
\usepackage{chngcntr}


\usepackage{times}
\usepackage{tikzscale}
\usepackage{marginnote}
\tikzexternalize[prefix=pics_auto/] 
\usepackage[textwidth=2cm,figwidth=\columnwidth,colorinlistoftodos]{todonotes}
\makeatletter
\renewcommand{\todo}[2][]{\tikzexternaldisable\@todo[#1]{#2}\tikzexternalenable}
\makeatother

\newcounter{mycomment} 

\usepackage{booktabs}
\usepackage{multirow}
\usepackage{array}
\usepackage{longtable}
\usepackage{tabu} 
\usepackage[most]{tcolorbox}
\usepackage{pbox}
\newlength{\luw}
\newlength{\luh}

\usepackage{xifthen}

%

\usepackage{hyperref}
\hypersetup{draft=false, pagebackref=false,breaklinks=true,colorlinks,bookmarks=false,pdftex,hidelinks,colorlinks=false,linkbordercolor={1 1 1}}
\DeclareMathAlphabet{\mathcalligra}{T1}{calligra}{m}{n}
\DeclareMathAlphabet{\mathantt}{OT1}{antt}{li}{it}
\DeclareMathAlphabet{\mathpzc}{OT1}{pzc}{m}{it}

\DeclareMathOperator{\sign}{sign}

\DeclareMathOperator{\detach}{detach}

\renewcommand{\mid}{\,|\,}

\newcommand{\Real}{\mathbb{R}}

\newcommand{\Bool}{\mathbb{B}}

\newcommand{\E}{\mathbb{E}}
\renewcommand{\Pr}{\mathbb{P}}

\def\T{^{\mathsf T}}

\newcounter{myRomanCounter}

\newcommand{\gray}{\color[rgb]{0.5,0.5,0.5}}
\newcommand{\red}{\color[rgb]{1,0,0}}

\newlength{\figwidth}

\newcommand{\IF}{\mbox{ \rm if }}

\newcommand{\OTHERWISE}{\mbox{ \rm otherwise}}

\makeatletter

\makeatother

\setlength{\arraycolsep}{0.2em}

\makeatletter

\makeatother
\newcommand{\revisit}[1][]{%
\ifthenelse{\equal{#1}{}}{
\ensuremath{\red \triangle}\xspace}{%
{\ensuremath{\red \rhd}\xspace}%
{\gray #1}%
{\ensuremath{\red \lhd}\xspace}%
}%
}

\def\anchor [#1]#2{%
\phantomsection{}#1\label{#2}%
\def\arga{#2}%
\global\expandafter\def\csname#2\endcsname{%
\hyperref[#2]{#1}\xspace%
}%
}%

\def\codefunction [#1]#2{%
\phantomsection{}\label{#2}{\ttfamily #1\xspace}%
\def\arga{#2}%
\global\expandafter\def\csname#2\endcsname{%
\hyperref[#2]{\ttfamily #1}\xspace%
}%
}

\usepackage{array}
\newcolumntype{L}[1]{>{\raggedright\let\newline\\\arraybackslash\hspace{0pt}}m{#1}}
\newcolumntype{C}[1]{>{\centering\let\newline\\\arraybackslash\hspace{0pt}}m{#1}}
\newcolumntype{R}[1]{>{\raggedleft\let\newline\\\arraybackslash\hspace{0pt}}m{#1}}

\SetKwFor{forever}{while true}{}{end while}

\def\={\,{=}\,}

\hyphenation{mar-gi-nals}

\newlength{\myskip}
\setlength{\myskip}{\baselineskip}
%

	%
%


\raggedbottom
\topsep=0pt 
\partopsep=0pt 
\itemsep=0pt 
\setlength{\tabcolsep}{0pt}
\arraycolsep = 1.2\arraycolsep
\floatsep = 0.7\myskip 
\setlength{\textfloatsep}{12pt plus 1.0pt}
\setlength{\dbltextfloatsep}{\textfloatsep}
\setlength{\dblfloatsep}{\floatsep}
\intextsep = 5mm 
\raggedbottom
\itemsep=0pt 
%
%

\setcounter{totalnumber}{50}
\setcounter{topnumber}{50}
\setcounter{bottomnumber}{50}

\setcounter{totalnumber}{99}
\setcounter{topnumber}{99}
\setcounter{bottomnumber}{99}

\SetAlgoSkip{skipalgomyskip}
\SetAlgoInsideSkip{}

\makeatletter
\let\corollary\@undefined
\let\c@corollary\@undefined
\let\endcorollary\@undefined
\let\definition\@undefined
\let\c@definition\@undefined
\let\enddefinition\@undefined
\let\proof\@undefined
\let\endproof\@undefined
\let\theorem\@undefined
\let\c@theorem\@undefined
\let\endtheorem\@undefined
\let\lemma\@undefined
\let\c@lemma\@undefined
\let\endlemma\@undefined
\let\example\@undefined
\let\c@example\@undefined
\let\endexample\@undefined
\let\remark\@undefined
\let\c@remark\@undefined
\let\endremark\@undefined
\let\proposition\@undefined
\let\c@proposition\@undefined
\let\endproposition\@undefined
\let\property\@undefined
\let\endproperty\@undefined
\makeatother

\usepackage{thmtools}

\newtheoremstyle{tightItalic}
  {0.5\myskip}
  {0\myskip}
  {}
  {}
  {\itshape}
  {.}
  { }
  {}

\newtheoremstyle{tightBf}
  {0.5\myskip}
  {0.5\myskip}
  {}
  {}
  {\bf}
  {.}
  {.5em}
  {}


%

\theoremstyle{definition}
\theoremstyle{tightBf}


%



\theoremstyle{tightItalic}
\newtheorem*{proof}{Proof}
\BeforeEnd{proof}{\qed}

%


\makeatletter
\setlength{\@fptop}{0pt}
\setlength{\@fpsep}{10pt}
\setlength{\@fpbot}{0pt plus 1fil}
\makeatother


\setcounter{totalnumber}{99}
\setcounter{topnumber}{99}
\setcounter{bottomnumber}{99}

\usepackage{thmtools, thm-restate}
\usepackage[capitalise,nameinlink]{cleveref}
\crefformat{equation}{(#2#1#3)}
\crefname{observation}{Observation}{Observations}

\usepackage[bottom]{footmisc}

%

\usepackage{listings}

\definecolor{dkgreen}{rgb}{0,0.6,0}
\definecolor{gray}{rgb}{0.5,0.5,0.5}
\definecolor{mauve}{rgb}{0.58,0,0.82}
\lstset{frame=tb,
  language=Python,
  aboveskip=3mm,
  belowskip=3mm,
  showstringspaces=false,
  columns=flexible,
  basicstyle={\small\ttfamily},
  numbers=none,
  numberstyle=\tiny\color{gray},
  keywordstyle=\color{blue},
  commentstyle=\color{dkgreen},
  stringstyle=\color{mauve},
  breaklines=true,
  breakatwhitespace=true,
  tabsize=3,
	basicstyle=\fontsize{9}{9}\selectfont\ttfamily
}

\DeclareMathOperator{\softmax}{softmax}

\DeclareMathOperator{\sgn}{sgn}
\addtocontents{toc}{\protect\setcounter{tocdepth}{0}}

\def\REINFORCE{{\sc reinforce}\xspace}
\def\ARM{{\sc arm}\xspace}
\def\PSA{{\sc psa}\xspace}
\def\ST{{\sc st}\xspace}


\def\mytitle{Path Sample-Analytic Gradient Estimators \\ for Stochastic Binary Networks}
\title{\mytitle}

%

\author{%
  Alexander Shekhovtsov\\
	Czech Technical University in Prague\\
  \texttt{shekhovt@cmp.felk.cvut.cz} \\
  \And 
	Viktor Yanush\\
	Lomonosov Moscow State University\\
  \texttt{yanushviktor@gmail.com} \\
  \And
	Boris Flach\\
	Czech Technical University in Prague\\
  \texttt{flachbor@cmp.felk.cvut.cz} \\	
}

\begin{document}
\maketitle

\begin{abstract}
In neural networks with binary activations and or binary weights the training by gradient descent is complicated as the model has piecewise constant response.
We consider stochastic binary networks, obtained by adding noises in front of activations.
The expected model response becomes a smooth function of parameters, its gradient is well defined but it is challenging to estimate it accurately.
We propose a new method for this estimation problem combining sampling and analytic approximation steps. The method has a significantly reduced variance at the price of a small bias which gives a very practical tradeoff in comparison with existing unbiased and biased estimators.
We further show that one extra linearization step leads to a deep straight-through estimator previously known only as an ad-hoc heuristic. 
We experimentally show higher accuracy in gradient estimation and demonstrate a more stable and better performing training in deep convolutional models with both proposed methods.
\end{abstract}

\section{Introduction}
Neural Networks with binary weights and binary activations are very computationally efficient. \citet{RastegariORF16} report up to $58{\times}$ speed-up compared to floating point computations. There is a further increase of hardware support for binary operations: matrix multiplication instructions in recent NVIDIA cards, specialized projects on spike-like (neuromorphic) computation~\cite{Cassidy2016TrueNorthAH,Loihi}, \etc.

Binarized (or more generally quantized) networks have been shown to close up in performance to real-valued baselines~\cite{Bethge-18,peters2019probabilistic,Roth-19,Tang2017HowTT,Hubara-16,courbariaux2016binarized}. We believe that good training methods can improve their performance further. 
The main difficulty with binary networks is that unit outputs are computed using $\sign$ activations, which renders common gradient descent methods inapplicable. Nevertheless, experimentally oriented works ever so often define the lacking gradients in these models in a heuristic way. We consider the more sound approach of stochastic Binary Networks (SBNs)~\cite{Neal:1992,Raiko-14}. This approach introduces injected noises in front of all $\sign$ activations. The network output becomes smooth in the expectation and its derivative is well-defined. Furthermore, injecting noises in all layers makes the network a deep latent variable model with a very flexible predictive distribution. 

Estimating gradients in SBNs is the main problem that we address. We focus on handling {\em deep dependencies through binary activations}, which we believe to be the crux of the problem. That is why we consider all weights to be real-valued in the present work. An extension to binary weights would be relatively simple, \eg by adding an extra stochastic binarization layer for them. 

\paragraph{SBN Model}

Let $x^0$ denote the input to the network (\eg. an image to recognize). 
We define a {\em stochastic binary network} (SBN) with $L$ layers with neuron outputs $X^{1 \dots L}$ and injected noises $Z^{1 \dots L}$ as follows (\cref{fig:SBN} left):
\begin{align}\label{SBN}
\textstyle X^0 = x^0; \ \ \ \ \ \textstyle X^{k} = \sgn(a^k(X^{k-1}; \theta^k) - Z^k);  \ \ \ \ \ \ F = f(X^L; \theta^{L+1}). 
\end{align}
The output $X^{k}$ of layer $k$ is a vector in $\Bool^n$, where we denote binary states $\Bool = \{-1,1\}$. 
The network input $x^0$ is assumed real-valued. The noise vectors $Z^k$ consist of $n$ independent variables with a known distribution (\eg, logistic). The network {\em pre-activation} functions $a^k(X^{k-1}; \theta^k)$ are assumed differentiable in parameters $\theta^k$ and will be typically modeled as affine maps (\eg, fully connected, convolution, concatenation, averaging, \etc). Partitioning the parameters $\theta$ by layers as above, incurs no loss of generality since they can in turn be defined as any differentiable mapping $\theta = \theta(\eta)$ and handled by standard backpropagation.

The network {\em  head function} $f(x^L; \theta^{L+1})$ denotes the remainder of the model not containing further binary dependencies. For classification problems we consider the softmax predictive probability model $p(y|x^L; \theta^{L+1}) = \softmax(a^{L+1}(x^L;\theta^{L+1}))$,
where the affine transform $a^{L+1}$ computes class scores from the last binary layer. The function $f(X^L; \theta^{L+1})$ is defined as the cross-entropy of the predictive distribution $p(y|x^L; \theta^{L+1})$ relative to the training label distribution $p^*(y|x^0)$.

\begin{figure}
\centering
\includegraphics[width=0.48\linewidth]{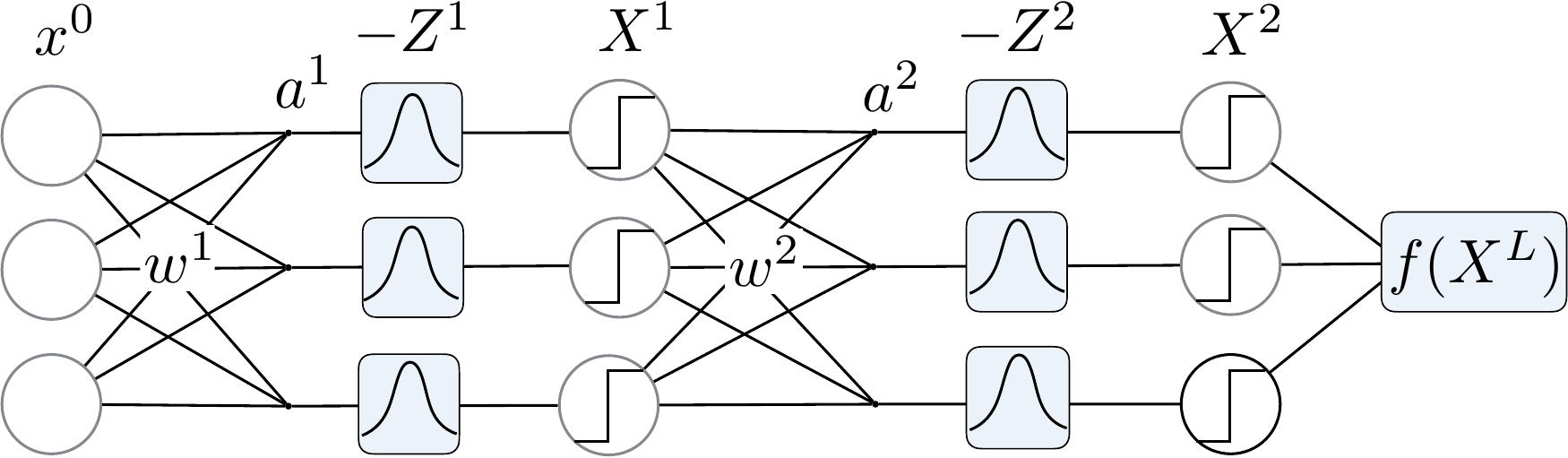}\ \ \ 
\includegraphics[width=0.50\linewidth]{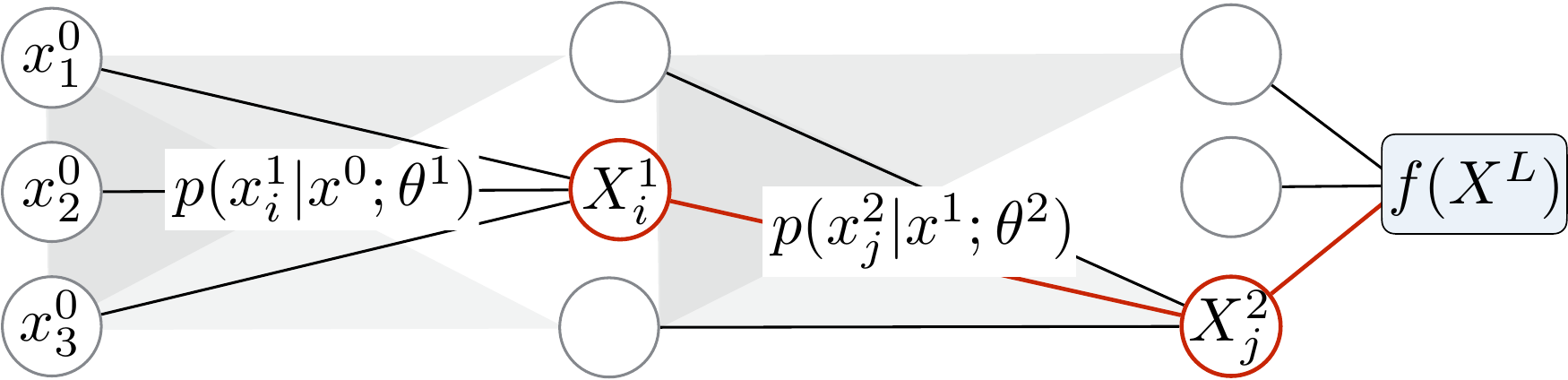}
\caption{\label{fig:SBN}%
Stochastic binary network with 2 hidden layers.
{\em Left}: latent variable model view (injected noises). {\em Right}: directed graphical model view (Bayesian network). The \PSA method performs explicit summation along paths (highlighted).
}
\end{figure}

Due to the injected noises, all states $X$ become random variables and their joint distribution given the input $x^0$ 
takes the form of a Bayesian network with the following structure (\cref{fig:SBN} right):
\begin{subequations}\label{prod-model}
\begin{align}
\textstyle p(x^{1\dots L} \mid x^0 ;\theta) = \prod_{k=1}^{L}p(x^k|x^{k-1};\theta^k),\ \ \ \ \ \ 
\textstyle p(x^k|x^{k-1};\theta^k) = \prod_{i=1}^n p(x^k_i|x^{k-1}; \theta^k). \label{prod-model-factors}
\end{align}
\end{subequations}
The equivalence to the injected noise model is established with 
\begin{align}\label{cond-model}
\textstyle
p(x^k_j=1|x^{k-1}; \theta^k) = \Pr\big( a^k_j{-}Z^k_j > 0 \big) = F_Z(a^k_j),
\end{align}
where $F_Z$ is the noise cdf. If we consider noises with logistic distribution, $F_Z$ becomes the common sigmoid logistic function and the network with linear pre-activations $a^k(x^{k-1})_j = \sum_{j}w^k_{ij} x^{k-1}_i$ becomes the well-known {\em sigmoid belief network}~\cite{Neal:1992}. 
\paragraph{Problem}
The central problem for this work is to estimate the gradient of the expected loss:
\begin{align}\label{problem}
\textstyle \frac{\partial }{\partial \theta} \E_Z [F(\theta)] = \frac{\partial }{\partial \theta} \sum_{x^{1\dots L}}p(x^{1\dots L}| x^0;\theta) f(x^L; \theta).
\end{align}
Observe that when the noise cdf $F_Z$ is smooth, the expected network output $\E_Z [F(\theta)]$ is differentiable in parameters despite having binary activations and a head function possibly non-differentiable in $x^L$. This can be easily seen from the Bayesian network form on the right of~\eqref{problem}, where all functions are differentiable in $\theta$. 
The gradient estimation problem of this kind arises in several learning formulations, please see~\cref{sec:learning} for discussion.

\paragraph{Bias-Variance Tradeoff}
A number of estimators for the gradient~\eqref{problem} exist, both biased and unbiased. Estimators using certain approximations and heuristics have been applied to deep SBNs with remarkable success. The use of approximations however introduces a {\em systematic error}, \ie these estimators are {\em biased}. Many theoretical works therefore have focused on development of lower-variance unbiased stochastic estimators, but encounter serious limitations when applied to deep models. At the same time, allowing a small bias may lead to a considerable reduction in variance, and more reliable estimates overall. We advocate this approach and compare the methods using metrics that take into account both the bias and the variance, in particular the mean squared error of the estimator. When the learning has converged to 100\% training accuracy (as we will test experimentally for the proposed method) the fact that we used a biased gradient estimator no longer matters.


\paragraph{Contribution}
The proposed {\em Path Sample-Analytic} (\PSA) method is a biased stochastic estimator. It takes one sample from the model and then applies a series of derandomization and approximation steps. It efficiently approximates the expectation of the stochastic gradient by explicitly computing summations along multiple paths in the network \cref{fig:SBN}\,(right). Such explicit summation over many configurations gives a huge variance reduction, in particular for deep dependencies. The approximation steps needed for keeping the computation simple and tractable, are clearly understood linearizations. They are designed with the goal to obtain a method with the same complexity and structure as the usual backpropagation, including convolutional architectures. This allows to apply the method to deep models and to compute the gradients in parameters of all layers in a single backwards pass.
%

A second simplification of the method is obtained by further linearizations in \PSA and leads to the {\em Straight-Through} (ST) method.
We thus provide the first theoretical justification of straight-through methods for deep models as derived in the SBN framework using a clearly understood linearization and partial summation along paths. This allows to eliminates guesswork and obscurity in the practical application of such estimators as well as opens possibilities for improving them. Both methods perform similar in learning of deep convolutional models, delivering a significantly more stable and better controlled training than preceding techniques. 

%

%
\subsection{Related Work}

\paragraph{Unbiased estimators} 
A large class of unbiased estimators is based on the {\sc reinforce}~\cite{Williams1992}. Methods developed to reduce its variance include learnable input-dependent~\cite{mnih14} and linearization-based~\cite{MuProp} control variates. Advanced variance reduction techniques have been proposed: {\sc rebar}~\citep{Tucker-17-REBAR}, {\sc relax}~\cite{grathwohl18-relax}, {\sc arm}~\cite{yin18-arm}.
However the latter methods face difficulties when applied to deep belief models and indeed have never been applied to SBNs with more than two layers. One key difficulty is that they require $L$ passes through the network\footnote{See {\sc rebar} section 3.3, {\sc relax} section 3.2, {\sc arm} Alg.~2. Different propositions to overcome the complexity limitation appear in appendices of several works, including also~\cite{cong2018go}, which however have not been tested in deep models.}, leading to a quadratic complexity in the number of layers. We compare to {\sc arm}, which makes a strong baseline according to comparisons in~\cite{yin18-arm,grathwohl18-relax,Tucker-17-REBAR}, in a small-problem setting. Previous {\em direct} comparison of estimator accuracy at the same point was limited to a single neuron setting~\cite{yin18-arm}, where \PSA would be simply exact. We also compare to {\sc MuProp}~\cite{MuProp} and variance-reduced {\sc reinforce}, which run in linear time, in the deep setting in~\cref{sec:muprop}. We observe that the variance of these estimators stays prohibitive for a practical application.

In the case of one hidden layer, our method coincides with several existing unbiased techniques~\cite{cong2018go,Titsias-15,Tokui-17}. The {\sc ram} estimator~\cite{Tokui-17} can be applied to deep models and stays unbiased but scales quadratically in the number of variables. Our estimator becomes biased for deep models but scales linearly. 
\paragraph{Biased Estimators} 
Several works demonstrate successful training of deep networks using biased estimators.
One such estimator, based on smoothing the $\sign$ function in the SBN model is the {\em concrete relaxation}~\cite{maddison2016concrete,jang2016categorical}. It has been successfully applied for training large-scale SBN in~\cite{peters2019probabilistic}.
Methods that propagate moments analytically, known as {\em assumed density filtering} (ADF), \eg,~\cite{shekhovtsov18-cat,Gast18} perform the full analytic approximation of the expectation. ADF has been successfully used in~\cite{Roth-19}. 

Many experimentally oriented works successfully apply {\em straight-through} estimators ({\sc ste}).
Originally considered by~\citet{HintonST} for deep auto-encoders with Bernoulli latent variables and by~\citet{bengio2013estimating} for conditional computation, these simple, but not theoretically justified methods were later adopted for training deterministic networks with binary weights and activations~\cite{courbariaux2016binarized, zhou2016dorefa,Hubara-16}. 
The method simply pretends that the $\sign$ function has the derivative of the identity, or of some other function. There have been recent attempts to justify why this works. \citet{yin2019understanding} considers the expected loss over the training data distribution, which is assumed to be Gaussian, and show that in networks with 1 hidden layer the true expected gradient positively correlates with the deterministic {\sc ste} gradient. \citet{cheng2019straight} show for networks with 1 hidden layer that {\sc ste} is approximately related to the projected Wasserstein gradient flow method proposed there. In the case of one hidden layer~\citet[sec. 6.4]{Tokui-17} derived {\sc ste} as a linearization of their {\sc ram} estimator for SBN. We derive deep {\sc ste} in the SBN model by making extra linearizations in our {\sc psa} estimator.  

\section{Method}
To get a proper view of the problem, we first explain the exact chain rule for the gradient, relating it to the {\sc reinforce} and the proposed method. Throughout this section we will consider the input $x^0$ fixed and omit it from the conditioning such as in $p(x | x^0)$.

\subsection{Exact Chain Rule}
The expected loss in the Bayesian network representation~\eqref{prod-model} can be written as
\begin{align}\label{mc-model}
\textstyle \E_Z[F] = \textstyle \sum_{x^1 \in \Bool^n} p(x^1;\theta^1)\sum_{x^2 \in \Bool^n} p(x^2|x^1;\theta^2)
 \dots \sum_{x^L \in \Bool^n} p(x^L|x^{L-1};\theta^L) f(x^L;\theta^{L+1}) 
\end{align}
and can be related to the forward-backward marginalization algorithm for Markov chains. Indeed, let $P^k = p(x^k|x^{k-1};\theta^k)$ be transition probability matrices of size $2^n{\times}2^n$ for $k>1$ and $P^1$ be a row vector of size $2^n$. 
Then~\eqref{mc-model} is simply the product of these matrices and the vector $f$ of size $2^n$. 
The computation of the gradient is as ``easy'', \eg for the gradient in $\theta^1$ we have: 
\begin{align}\label{mc-grad}
g^1 := \textstyle \frac{\partial}{\partial \theta^1}\E_Z[F] = & D^1 P^2 P^3 \dots P^L f,
\end{align}
where $D^1$ is the size $\dim(\theta^1){\times}2^n$ transposed Jacobian $\frac{\partial p(x^1;\theta^1)}{\partial \theta^1}$. Expression~\eqref{mc-grad} is a product of transposed Jacobians of a deep model. Multiplying them in the right-to-left order requires only matrix-vector products and it is the exact back-propagation algorithm. As impractical as it may be, it is still much more efficient than the brute force enumeration of all $2^{nL}$ joint configurations.

The well-known {\sc reinforce}~\cite{Williams1992} method replaces the intractable summation over $x$ with sampling $x$ from $p(x^{1\dots L};\theta)$ and uses the stochastic estimate
\begin{align}
\textstyle \frac{\partial}{\partial \theta^1}\E_Z[F] \approx \frac{\partial p(x^1;\theta^1)}{\partial \theta^1}\frac{f(x^L)}{p^1(x^1;\theta^1)}.
\end{align}
While this is cheap to compute (and unbiased), it utilizes neither the shape of the function $f$ beyond its value at the sample nor the dependence structure of the model.
\subsection{PSA Algorithm}
We present our method for a deep model. The single layer case, is a special case which is well covered in the literature~\cite{Tokui-17,Titsias-15,cong2018go} and is discussed in~\cref{L:singlelayer}.
Let us consider the gradient in parameters $\theta^l$ of layer $l$. Starting from the RHS of~\eqref{problem}, we can move derivative under  the sum and directly differentiate the product distribution~\eqref{prod-model}: 
\begin{align}\label{direct-part-0}
g^l := \textstyle \frac{\partial}{\partial \theta^l} \sum_{x} p(x; \theta) f(x^L; \theta^{L+1}) = \textstyle\sum_{x} f(x^L) \frac{\partial }{\partial \theta^l}p(x; \theta) = \sum_{x} \frac{p(x) f(x^L)}{p(x^l | x^{l-1})} \frac{\partial}{\partial \theta^l} p(x^l| x^{l-1}; \theta^l),
\end{align}
where the dependence on $\theta$ in $p$ and $f$ is omitted once it is outside of the derivative. The fraction $\frac{p(x)}{p(x^l | x^{l-1})}$ is a convenient way to write the product $\prod_{k=1| k \neq l}^L p(k^{k}|x^{k-1})$, \ie with factor $l$ excluded.

At this point expression~\eqref{direct-part-0} is the exact chain rule completely analogous to~\eqref{mc-grad}. 
A tractable back propagation approximation is obtained as follows.
Since $p(x^l| x^{l-1}; \theta^l) = \prod_{i=1}^n p(x^l_i|x^{l-1}; \theta^l)$, its derivative in~\eqref{direct-part-0} results in a sum over units $i$:
\begin{align}\label{d^l}
g^l = \textstyle \sum_{x} \sum_{i} \frac{p(x)}{p(x^l_i | x^{l-1})} D^l_i(x) f(x^L),\ \ \ \ \ \ \
\text{where}\ \textstyle D^l_i(x) = \frac{\partial}{\partial \theta^l} p(x^l_i| x^{l-1}; \theta^l).
\end{align}
We will apply a technique called {\em derandomization} or Rao-Blackwellization~\citep[ch. 8.7]{Owen-13} to $x^l_i$ in each summand $i$. Put simply, we sum over the two states of $x^l_i$ explicitly. The summation computes a portion of the total expectation in a closed form and as such, naturally and guaranteed, reduces the variance. The estimator with this derandomization step is an instance of the general {\em local expectation gradient}~\cite{Titsias-15}.
The derandomization results in the occurrence of the differences of products:
\begin{align}\label{prod-diff}
\textstyle \prod_{j} p(x^{l+1}_j | x^l_{\downarrow i}) -\textstyle \prod_{j} p(x^{l+1}_j | x^l),
\end{align}
where $x^l_{\downarrow i}$ denotes the state vector in layer $l$ with the sign of unit $i$ flipped. 
We approximate this difference of products by {\em linearizing} it (making a 1st order Taylor approximation) in the differences of its factor probabilities, \ie, replacing the difference~\eqref{prod-diff} with 
\begin{align}\label{approx-prod}
\textstyle \sum_j \prod_{j' \neq j} p(x^{k+1}_{j'} | x^k) \Delta^{k+1}_{i,j}(x), \ \  \ \ \text{where } \Delta^{k+1}_{i,j}(x) = p(x^{k+1}_{j}{|}x^{k}) - p(x^{k+1}_{j}{|}x^k_{\downarrow i}).
\end{align}
The approximation is sensible when $\Delta^{k+1}_{i, j}$ are small. This holds \eg, in the case when the model has many units in layer $k$ that all contribute to preactivation so that the effect of flipping a single $x^k_i$ is small.

Notice that the approximation results in a sum over units $j$ in the next layer $l{+}1$, which allows us to isolate a summand $j$ and derandomize $x^{l+1}_j$ in turn. Chaining these two steps, derandomization and linearization, from layer $l$ onward to the head function, we obtain summations over units in all layers $l\dots L$, equivalent to considering all paths in the network, and derandomize over all binary states along each such path (see~\cref{fig:SBN} right). The resulting approximation of the gradient $g^l$ gives 
\begin{align}\label{matrix-prod}
\textstyle \tilde g^l = \sum_x p(x) D^l(x) \Delta^{l+1}(x) \cdots \Delta^{L}(x) df(x)  =: \sum_x p(x) \hat g^l(x),
\end{align}
where $D^l$, defined in~\eqref{d^l}, is the Jacobian$\T$ of layer probabilities in parameters (a matrix of size $\dim(\theta^l){\times}n$); $\Delta^k_{i,j}$ defined in~\eqref{approx-prod} are $n{\times}n$ matrices, which we call {\em discrete Jacobians$\T$}; and $df$ is a column vector with coordinates $f(x^L) - f(x^L_{\downarrow i})$, \ie a {\em discrete gradient} of $f$. 
Thus a stochastic estimate $\hat g^l(x)$ 
is a product of Jacobians$\T$ in the ordinary activation space and can therefore be conveniently computed by back-propagation, \ie multiplying the factors in~\eqref{matrix-prod} from right to left.

This construction allows us to approximate a part of the chain rule for the Markov chain with $2^n$ states as a chain rule in a tractable space and to do the rest via sampling. 
This way we achieve a significant reduction in variance with a tractable computational effort. This computation effort is indeed optimal is the sense that it matches the complexity of the standard back-propagation, which matches the complexity of forward propagation alone, \ie, the cost of obtaining a sample $x\sim p(x)$.

\paragraph{Derivation}
The formal construction is inductive on the layers. Consider the general case $l{<}L$. We start from the expression~\eqref{d^l} and apply to it~\cref{prop-delta} below recurrently, starting with $J^l = D^l$. The matrices $J^k$ will have an interpretation of composite Jacobians$\T$ from layer $l$ to layer $k$.
\begin{restatable}{proposition}{Pdelta}\label{prop-delta}
Let $J^k_i(x)$ be functions that depend only on $x^{1\dots k}$ and are {\em odd} in $x^k_i$: $J^k_i(x^k) = -J^k_i(x^k_{\downarrow i})$ for all $i$.
Then
\begin{align}\label{direct-prop-eq1}
\textstyle \sum_{x} p(x) \sum_i \frac{J^k_i(x) f(x^L) }{p(x^k_i | x^{k-1})} 
\approx 
\textstyle \sum_{x} p(x) \sum_j \frac{J^{k+1}_j(x) f(x^L)}{p(x^{k+1}_j | x^{k})}, \ \ \ \ \text{where $J^{k+1}_{j} = \sum_i J^k_i(x) \Delta^{k+1}_{i, j}(x)$} 
\end{align}
and the approximation made is the linearization~\eqref{approx-prod}. Functions $J^{k+1}_{j}$ are odd in $x^{k+1}_j$ for all $j$. 
\end{restatable}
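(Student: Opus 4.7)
\begin{proofsketch}
My plan is to derandomize (explicitly sum out) the variable $x^k_i$ inside each summand of the LHS, apply the linearization~\eqref{approx-prod} to the resulting product difference, and then reassemble the outcome into a $p(x)$-weighted sum over the units $j$ of layer $k{+}1$.

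First I would factor $p(x) = p(x^{1\dots k-1})\prod_{i'} p(x^k_{i'}|x^{k-1})\prod_{k'>k} p(x^{k'}|x^{k'-1})$ and cancel the factor $p(x^k_i|x^{k-1})$ against the denominator in the $i$-th summand. The remaining weight depends on $x^k_i$ only through the product $C(x) := \prod_j p(x^{k+1}_j|x^k)$. Carrying out $\sum_{x^k_i \in \{\pm 1\}}$ and using the oddness $J^k_i(x) = -J^k_i(x_{\downarrow i})$ collapses this inner sum to $J^k_i(x)\bigl[C(x) - C(x_{\downarrow i})\bigr]$ (the combination $J^k_i\,[C(x)-C(x_{\downarrow i})]$ is even in $x^k_i$, so either of its two values yields the same result). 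This identity is exact.

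Next I would invoke the linearization~\eqref{approx-prod}, which replaces $C(x) - C(x_{\downarrow i})$ by $\sum_j \bigl[\prod_{j'\neq j} p(x^{k+1}_{j'}|x^k)\bigr]\Delta^{k+1}_{i,j}(x)$, introducing the sum over layer-$(k{+}1)$ units $j$. To return the expression to the $\sum_x p(x)[\cdot]$ form of the RHS, I would reinsert the summation over $x^k_i$ weighted by $p(x^k_i|x^{k-1})$: because the relevant part of the summand is even in $x^k_i$ and $p(x^k_i|x^{k-1})$ sums to one, the re-insertion changes the coefficient $\prod_{j'\neq j} p(x^{k+1}_{j'}|x^k)$ only by replacing its value at a fixed $x^k_i$ with its $p(x^k_i|x^{k-1})$-weighted average over the two states of $x^k_i$, an error of the same first order as the linearization itself. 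Multiplying and dividing by $p(x^{k+1}_j|x^k)$ then produces exactly $\sum_x p(x) \sum_j \frac{J^{k+1}_j(x) f(x^L)}{p(x^{k+1}_j|x^k)}$ with $J^{k+1}_j(x) = \sum_i J^k_i(x)\,\Delta^{k+1}_{i,j}(x)$, matching the RHS.

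For the oddness claim, I would use that $p(-u|\cdot) = 1 - p(u|\cdot)$ for any binary conditional, so $\Delta^{k+1}_{i,j}(x) = p(x^{k+1}_j|x^k) - p(x^{k+1}_j|x^k_{\downarrow i})$ flips sign under $x^{k+1}_j \mapsto -x^{k+1}_j$, that is, it is odd in $x^{k+1}_j$. Since $J^k_i(x)$ depends only on $x^{1\dots k}$, it is constant in $x^{k+1}_j$, so each term of $J^{k+1}_j = \sum_i J^k_i\,\Delta^{k+1}_{i,j}$ is odd in $x^{k+1}_j$, whence so is $J^{k+1}_j$. The delicate step will be the reassembly in the middle paragraph: one must keep careful track of which evaluation of the coefficient $\prod_{j'\neq j} p(x^{k+1}_{j'}|x^k)$ appears after the derandomization and argue that the resulting discrepancy is swept into the same first-order error as the linearization, so that the approximate equality $\approx$ in the proposition reflects exactly the single linearization step and nothing further.
\end{proofsketch}
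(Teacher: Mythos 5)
Your argument is correct and follows essentially the same route as the paper's proof: derandomize $x^k_i$ using oddness, linearize the product difference via~\eqref{approx-prod}, and reassemble into a $p(x)$-weighted sum over the units of layer $k{+}1$, with the oddness of $J^{k+1}_j$ following from $p(-u|\cdot)=1-p(u|\cdot)$ exactly as in the paper. The only divergence is the order of your last two steps, and the paper's order dissolves the delicate point you flag: it multiplies the exactly $x^k_i$-invariant derandomized quantity $\big(p(x^{k+1}\,|\,x^k)-p(x^{k+1}\,|\,x^k_{\downarrow i})\big)J^k_i(x)$ by $1=\sum_{x^k_i}p(x^k_i\,|\,x^{k-1})$ \emph{before} linearizing, so the re-insertion is exact and the single approximation incurred is the linearization itself, applied pointwise inside the full sum over $x$.
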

The structure of~\eqref{direct-prop-eq1} shows that we will obtain an expression of the same form but with the dividing probability factor from the next layer, which allows to apply it inductively. To verify the assumptions at the induction basis, observe that according to~\eqref{cond-model}, 
$\textstyle D^l_i(x) = \frac{\partial}{\partial \theta^l} p(x^l_i| x^{l-1}; \theta^l) 
= p_Z(a^l_i) x^l_i \frac{\partial}{\partial \theta^l} a^l_i(x^{l-1}; \theta^l),
$
hence it depends only on $x^{l-1}$, $x^l$ and is odd in $x^l_i$.

In the last layer, the difference of head functions occurs instead of the difference of products. Thus instead of~\eqref{direct-prop-eq1}, we obtain for $k=L$ the expression
\begin{align}\label{last-layer-case}
\textstyle \sum_{x} p(x) \sum_i \frac{J^L_i(x) f(x^L) }{p(x^L_i | x^{L-1})} 
= \sum_{x} \sum_i p(x) J^L_i(x) df_i(x).
\end{align}
Applying~\cref{prop-delta} inductively, we see that the initial $J^l$ is multiplied with a discrete Jacobian$\T$ $\Delta^{k+1}$ on each step $k$ and finally with $df$. Note that neither the matrices $\Delta^k$ nor $df$ depend on the layer we started from.
The final result of inductive application of~\cref{prop-delta} is exactly the expected matrix product~\eqref{matrix-prod}. The key for computing a one-sample estimate $\hat g(x)^l = D^l \Delta^{l+1} \cdots \Delta^{L} df$
is to perform the multiplication from right to left, which only requires matrix-vector products. Observe also that the part $\Delta^{k} \cdots \Delta^{L} df$ is common for our approximate derivatives in all layers $l<k$ and therefore needs to be evaluated only once.

\paragraph{Algorithm} Since backpropagation is now commonly automated, we opt to define forward propagation rules such that their automatic differentiation computes what we need. The algorithm in this form is presented in~\cref{alg:PSA}. First, we have substituted the noise model~\eqref{cond-model} to compute $\Delta^{k}_{i,j}$ as in~\cref{Delta in alg}. The {\tt detach} method available in PyTorch~\cite{pytorch} obtains the value of the tensor but excludes it from back-propagation. It is applied to $\Delta^{k}$ since it is already the Jacobian$\T$ and we do not want to differentiate it.
The recurrent computation of $q^k$ in \cref{def2:q^k} serves to generate the computation of $\hat g^l(x)$ on the backward pass. Indeed, variables $q^k$ always store just zero values as ensured by~\cref{q^k-detach} but have non-zero Jacobians as needed. A similar construct is applied for the final layer. 
It is easy to see that differentiating the output $E$ \wrt $\theta^l$ recovers exactly $\hat g(x)^l = D^l \Delta^{l+1} \cdots \Delta^{L} df$.

This form is simpler to present and can serve as a reference. The large-scale implementation detailed in~\cref{sec:impl} defines custom backward operations, avoiding the overhead of computing variables $q^k$ in the forward pass. The overhead however is a constant factor and the following claim applies.

\begin{figure}[t]
\centering
\begin{minipage}[t]{0.5\linewidth}
\begin{algorithm}[H]
\small
\KwIn{Network parameters $\theta$, input $x^0$}
\KwOut{The expression $E$ generating the derivative}
Initialize: $q^0 = 0$\;
\For{layer $k$ with Bernoulli output}{
		$a^k_{j} = a^k_{j}(x^{k-1}; \theta^k)$\;
		Sample layer output state $x^k_j\in\{-1,1\}$ with probability of $1$ given by $F_Z(a^k_{j})$\;
		Compute discrete Jacobians$\T$:\\
		$\Delta^{k}_{i,j}{=}\detach(x^k_j (F_Z(a^k_{j}){-}F_Z(a^k_{j}(x^{k-1}_{\downarrow i}; \theta^k))))$\label{Delta in alg};\hskip-5pt\\
		Generate chain dependence:\\
		\label{def2:q^k}
		{$q^k_j = x^k_j F_Z(a^k_{j})	 + \sum_{i} \Delta^k_{i,j} q^{k-1}_i$}\;
		\label{q^k-detach}
		{$q^k : = q^k - \detach(q^k)$}\;
}
Last layer:\\
 \label{last-layer}
	$E = f(x^L; \theta^{L+1}) + \sum_i \detach(f(x^L) - f(x^L_{\downarrow i})) q^L_i$\;
\Return{E}
\caption{Path Sample-Analytic (\bf PSA)\label{alg:PSA}}
\end{algorithm}
\end{minipage}
\ \ \ \ \ \begin{minipage}[t]{0.4\linewidth}
\begin{algorithm}[H]
\small
\KwIn{Network parameters $\theta$, input $x^0$}
\KwOut{The expression $E$ generating the derivative}
\For{layer $k$ with Bernoulli output}{
		$a^k_{j} = a^k_{j}(x^{k-1}; \theta^k)$\;
		Sample layer output states $x^k_j\in\{-1,1\}$ with probability of $1$ given by $F_Z(a^k_{j})$\;
		Compute $\tilde x^k_j = 2 F_Z(a^k_j)$\;
		Binary state with a derivative generator:
		$x^k := x^k + \tilde x^k -\detach(\tilde x^k)$\;
}
Last layer's output is the derivative generator:\\
\Return{$E = f(x^L)$}
\caption{Straight-Through ({\bf ST})\label{alg:ST}}
\end{algorithm}
\end{minipage}
\end{figure}

\begin{restatable}{proposition}{Pcomplexity}\label{prop:complexity}
The computation complexity of \PSA gradient in all parameters of a network with convolutional and fully connected layers is the same as that of the standard back-propagation. 
\end{restatable}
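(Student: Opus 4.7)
The plan is to compare the \PSA algorithm (\cref{alg:PSA}) operation by operation against standard back-propagation and show that each step has complexity matching either a forward or backward pass of the same network. I would decompose the cost per layer $k$ into three pieces: (i) constructing the discrete Jacobian$\T$ $\Delta^{k}$, (ii) the matrix-vector product of $\Delta^{k}$ against an incoming vector (implicit in \cref{def2:q^k} and performed on the backward pass), and (iii) auxiliary $O(n)$ work such as sampling $x^k$ and evaluating $F_Z(a^k_j)$. Piece (iii) is clearly dominated by the cost of the forward pass, so the argument reduces to (i) and (ii).

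The main observation is that for an affine pre-activation $a^k_j(x^{k-1};\theta^k) = b^k_j + \sum_i W^k_{ji} x^{k-1}_i$ (which covers both fully connected and convolutional layers), flipping a single coordinate $x^{k-1}_i$ changes $a^k_j$ by the closed-form increment $-2 W^k_{ji} x^{k-1}_i$. Hence $a^k_j(x^{k-1}_{\downarrow i};\theta^k) = a^k_j - 2 W^k_{ji} x^{k-1}_i$ can be evaluated in $O(1)$ from the already-computed $a^k_j$, and consequently each entry $\Delta^k_{i,j} = x^k_j\bigl(F_Z(a^k_j)-F_Z(a^k_j - 2 W^k_{ji} x^{k-1}_i)\bigr)$ is $O(1)$ given the forward pass. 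The crucial point is that $\Delta^k_{i,j} = 0$ whenever $W^k_{ji} = 0$, because then the two $F_Z$ evaluations coincide. Thus the support pattern of $\Delta^k$ is identical to that of $W^k$: it is dense for fully connected layers and has the same receptive-field sparsity for convolutional layers. The total cost of forming $\Delta^k$ is therefore proportional to the number of nonzero weights in layer $k$, which is exactly the cost of one forward propagation through that layer.

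For piece (ii), the recursion in \cref{def2:q^k} that produces the backward multiplication $\Delta^{l+1} \cdots \Delta^{L} df$ consists of a chain of matrix-vector products $v \mapsto (\Delta^k)\T v$. Because $\Delta^k$ shares the support structure of $W^k$, each such product can be implemented by the same primitive (matrix multiplication for fully connected layers, a transposed/backward convolution for convolutional layers) that standard back-propagation already uses to propagate the gradient through layer $k$; only the numerical values of the ``weights'' change. Its cost is therefore identical to one backward pass through layer $k$. Furthermore, as noted after~\eqref{matrix-prod}, the chain of products $\Delta^{k}\cdots\Delta^{L} df$ is common to all layers $l<k$, so it is computed once in a single right-to-left sweep and reused, mirroring how ordinary back-propagation amortizes its intermediate gradients. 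Finally, multiplying the result by $D^l$ to obtain the gradient $\hat g^l$ in parameters $\theta^l$ is the same vector-Jacobian product in $\theta^l$ that appears in ordinary back-propagation, differing only by the scalar factor $p_Z(a^l_j) x^l_j$, and is therefore of the same cost.

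Summing over layers, the forward construction of the $\Delta^k$'s costs one forward pass, and the backward sweep that realizes~\eqref{matrix-prod} costs one backward pass; both share the asymptotic cost of standard back-propagation, which establishes the claim. The main technical obstacle I expect is not any deep inequality but the bookkeeping for convolutional layers: one must argue precisely that the operation $v \mapsto (\Delta^k)\T v$ can be carried out by a single transposed convolution of the same kernel shape as the original layer, rather than by forming $\Delta^k$ as a dense matrix. This amounts to exhibiting $\Delta^k$ as a data-dependent reweighting of $W^k$ with the same sparsity pattern, after which the standard convolutional back-propagation primitive applies unchanged.
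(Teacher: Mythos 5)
Your proposal is correct and follows essentially the same route as the paper's proof: the $O(1)$ incremental update $a^k_j(x^{k-1}_{\downarrow i}) = a^k_j - 2W^k_{ji}x^{k-1}_i$, the observation that $\Delta^k$ inherits the support pattern of $W^k$ (so that for convolutions it is a spatially varying kernel handled by the same transposed-convolution primitive), and the reuse of the common suffix $\Delta^{k}\cdots\Delta^{L}df$ across layers are exactly the paper's arguments. The one item the paper accounts for that you skip is the cost of the discrete gradient $df_i = f(x^L)-f(x^L_{\downarrow i})$ itself, which naively would require $n_L$ evaluations of the head; the paper disposes of it by the same incremental trick on the head's affine layer, giving $O(n_L K)$, so your argument closes with no new idea needed.
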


The proof is constructive in that we specify algorithms how the required computation for all flips as presented in \PSA can be implemented with the same complexity as standard forward propagation. Moreover, in~\cref{sec:impl} we show how to implement transposed convolutions for the case of logistic noise to achieve the FLOPs complexity as low as 2x standard backprop.

We can also verify that in the cases when the exact summation is tractable, such as when there is only one hidden unit in each layer, the algorithm is exploiting it properly:
\begin{restatable}{proposition}{Punbiased}\label{prop:unbiased}
\PSA is an unbiased gradient estimator for networks with only one unit in layers $1\dots L-1$ and any number of units in layer $L$.
\end{restatable}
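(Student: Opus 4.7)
I would start by isolating the single source of approximation in \PSA: the linearization~\eqref{approx-prod} invoked inside each application of~\cref{prop-delta}. All other ingredients in the derivation---the Rao-Blackwellization of $x^k_i$, the closed-form Jacobian $D^k$, and the closing identity~\eqref{last-layer-case}---preserve expectations exactly, so the task reduces to showing that every linearization actually used in constructing $\tilde g^l$ collapses to an identity under the stated hypothesis.

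The core lemma I would establish is a one-line observation: when layer $k{+}1$ contains a single unit $j_0$, the approximation~\eqref{approx-prod} is tight. Indeed, $\prod_j p(x^{k+1}_j \mid x^k)$ reduces to the single factor $p(x^{k+1}_{j_0} \mid x^k)$, so the difference of products in~\eqref{prod-diff} is exactly $-\Delta^{k+1}_{i,j_0}$, while on the linearized side only the $j=j_0$ summand survives (its companion product $\prod_{j' \neq j_0}$ is empty and equals $1$), yielding the same quantity.

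With this lemma in hand I would trace the construction of $\tilde g^l$ layer by layer. For $l = L$ no linearization is needed: $\tilde g^L$ arises directly from the exact formula~\eqref{last-layer-case} applied to $J^L = D^L$, so $\E_x[\hat g^L(x)] = g^L$ for any width of layer $L$. For $l < L$ the recurrence $J^{k+1}_j = \sum_i J^k_i \Delta^{k+1}_{i,j}$ is applied at $k = l, l+1, \ldots, L-1$, each step invoking the linearization at layer $k{+}1$. By hypothesis every intermediate layer $l+1, \ldots, L-1$ has a single unit, so all those linearizations are exact by the lemma. The remaining step at $k = L-1$ is the delicate one; I would handle it by unfolding the Rao-Blackwellization of $x^{L-1}_i$ together with the closing last-layer identity. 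Using that layer $L{-}1$ is single-unit, so $x^{L-1}_{\downarrow i} = -x^{L-1}$, a direct expansion shows that the combination $\sum_j \Delta^L_{i,j} df_j$ has conditional expectation (over $x^L \sim p(\cdot \mid x^{L-1})$) equal to the exact discrete gradient $\sum_{x^L}\bigl[p(x^L \mid x^{L-1}) - p(x^L \mid -x^{L-1})\bigr] f(x^L)$, identifying $\tilde g^l$ with the exact chain rule~\eqref{mc-grad}.

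Combining these observations gives $\tilde g^l = g^l$ for every $l$, and since $\hat g^l(x)$ is simply~\eqref{matrix-prod} evaluated at a sample $x \sim p(x)$, $\E_x[\hat g^l(x)] = \tilde g^l = g^l$, establishing unbiasedness. The inductive structure is mechanical once the single-unit linearization lemma is in place; the hard part of the plan will be the final expansion at layer $L$, which is exactly where the single-unit structure of the intermediate layers is essential --- a careless bookkeeping there would leave a residual that mimics an approximation error.
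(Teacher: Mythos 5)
Your overall strategy is the same as the paper's: the paper's proof is a two--sentence version of your plan, asserting that the product linearization \eqref{approx-prod} ``is not used anywhere in the method when we have a single binary unit in each hidden layer with $l<L$, nor is it used in the last layer,'' hence no approximation is made. Your single-unit lemma (that \eqref{approx-prod} is an identity when layer $k{+}1$ has one unit, so the applications of \cref{prop-delta} at $k=l,\dots,L-2$ are exact) and your treatment of $l=L$ via \eqref{last-layer-case} and \cref{L:singlelayer} match the paper's reasoning exactly.

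The gap is at the step you yourself flag as delicate, $k=L-1$, and the identity you propose there does not hold when layer $L$ has more than one unit. Writing $p_j(\cdot)=p(x^L_j{=}\cdot\,|\,x^{L-1})$ and $q_j(\cdot)=p(x^L_j{=}\cdot\,|\,x^{L-1}_{\downarrow i})$, a direct computation (summing over the two states of $x^L_j$ for each $j$ separately) gives
\begin{align*}
\E_{x^L\sim p(\cdot|x^{L-1})}\Big[\textstyle\sum_j \Delta^L_{i,j}\,df_j\Big]
= \sum_j\sum_{x^L}\Big(\prod_{j'\neq j}p_{j'}(x^L_{j'})\Big)\big(p_j(x^L_j)-q_j(x^L_j)\big)f(x^L),
\end{align*}
which is precisely the \emph{linearized} difference of products \eqref{approx-prod}, not the exact discrete gradient $\sum_{x^L}\big(\prod_j p_j-\prod_j q_j\big)f(x^L)$ that the exact chain rule requires after derandomizing the single unit of layer $L-1$. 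Already for two units in layer $L$ the two expressions differ by $\sum_{x^L}(p_1-q_1)(x^L_1)\,(p_2-q_2)(x^L_2)\,f(x^L)$, which vanishes only if $f$ is additively separable across the units of layer $L$ (or if layer $L$ has a single unit); for a cross-entropy head it is generically nonzero and does not cancel after averaging over $x^{1\dots L-1}$. So the one linearization that survives under the stated hypothesis is exactly the transition into the wide layer $L$, and your expansion reproduces that approximation rather than eliminating it. To be fair, the paper's own proof dispatches this point with the bare assertion that the linearization is ``not used in the last layer,'' which correctly describes the derandomization \eqref{last-layer-case} (exact for any width) but not the $\Delta^{L}$ factor produced by applying \cref{prop-delta} at $k=L-1$. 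As written, your argument establishes unbiasedness only for $l=L$, or for all $l$ under the stronger hypothesis that layer $L$ is also a single unit; to keep the proposition as stated you would need an additional cancellation at the $k=L-1$ step that neither you nor the paper supplies.
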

%
\subsection{The Straight-Through Estimator}
\begin{restatable}{proposition}{PST}\label{prop:ST}
Assume that pre-activations $a^k(x^{k-1};\theta)$ are multilinear functions in the binary inputs $x^{k-1}$, and the objective $f$ is differentiable. Then, approximating $F_Z$ and $f$ linearly around their arguments for the sampled base state $x$ in \cref{alg:PSA}, we obtain the straight-through estimator in~\cref{alg:ST}.
\end{restatable}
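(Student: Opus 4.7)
The plan is to substitute the two linearizations directly into the matrix product $\hat g^l(x) = D^l \Delta^{l+1} \cdots \Delta^L df$ produced by \cref{alg:PSA} and to show, by backwards induction through the chain, that the resulting expression coincides factor-by-factor with the derivative that automatic differentiation of \cref{alg:ST} produces. The comparison should be made on the formulas rather than on the PyTorch graph, so the two quantities to match are: (i) the backpropagated signal $\delta^k_i := (\Delta^{k+1}\cdots\Delta^L df)_i$ in the linearized PSA, and (ii) the signal $\gamma^k_i := \partial f(x^L)/\partial x^k_i$ carried through the ST computational graph in which $x^k$ has derivative generator $\tilde x^k = 2 F_Z(a^k)$.

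First I would evaluate the linearizations. Taylor-expanding $F_Z$ around the sampled $a^{k+1}_j$ gives
\begin{align*}
F_Z(a^{k+1}_j) - F_Z(a^{k+1}_j(x^k_{\downarrow i}; \theta^{k+1})) \approx p_Z(a^{k+1}_j) \bigl(a^{k+1}_j(x^k) - a^{k+1}_j(x^k_{\downarrow i})\bigr),
\end{align*}
and here the multilinearity assumption on $a^{k+1}_j$ in the $\pm 1$ variables $x^k$ gives the exact identity $a^{k+1}_j(x^k) - a^{k+1}_j(x^k_{\downarrow i}) = 2 x^k_i\, \partial a^{k+1}_j/\partial x^k_i$, so that
\begin{align*}
\Delta^{k+1}_{i,j} \approx 2\, x^{k+1}_j\, p_Z(a^{k+1}_j)\, x^k_i\, \partial a^{k+1}_j/\partial x^k_i .
\end{align*}
Linearizing $f$ around the sampled $x^L$ similarly gives $df_i = f(x^L) - f(x^L_{\downarrow i}) \approx 2 x^L_i\, \partial f/\partial x^L_i$. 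Substituting and recursing right-to-left, the claim I would prove by backward induction is $\delta^k_i = 2 x^k_i\, \beta^k_i$ with $\beta^L_i = \partial f/\partial x^L_i$ and $\beta^k_i = \sum_j 2 p_Z(a^{k+1}_j)\, \partial a^{k+1}_j/\partial x^k_i \cdot \beta^{k+1}_j$; the decisive cancellation is that the two $x^{k+1}_j$ factors (one from the inductive hypothesis on $\delta^{k+1}$ and one pulled out of $\Delta^{k+1}$) produce $(x^{k+1}_j)^2 = 1$, removing all dependence on the sampled states along the interior of the chain.

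Finally I would combine with $D^l_i = p_Z(a^l_i)\, x^l_i\, \partial a^l_i/\partial \theta^l$ from \cref{d^l}; the remaining $(x^l_i)^2 = 1$ yields $\hat g^l \approx \sum_i 2 p_Z(a^l_i)\, (\partial a^l_i/\partial \theta^l)\, \beta^l_i$. On the ST side, automatic differentiation of $E = f(x^L)$ backpropagates $\gamma^L_i = \partial f/\partial x^L_i$, multiplies by $\partial \tilde x^k_j/\partial a^k_j = 2 p_Z(a^k_j)$ at every layer boundary, and by $\partial a^{k+1}_j/\partial x^k_i$ for pre-activations, which is exactly the recursion defining $\beta^k$; composing with $\partial a^l_i/\partial \theta^l$ then reproduces the same expression. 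I expect the only real obstacle to be bookkeeping---keeping track of the three independent sources of $\pm 1$ and factor-$2$ contributions (the $x^k_j$ pulled out of $\Delta$, the $(x^k_j)^2=1$ cancellations, and the factors of $2$ from the difference formula for multilinear functions at $\pm 1$)---so that the telescoping is presented transparently and not as a coincidence.
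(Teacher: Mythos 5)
Your proposal is correct and follows essentially the same route as the paper's proof: linearize $f$ to get $df_i \approx 2 x^L_i\,\partial f/\partial x^L_i$, linearize $F_Z$ and use multilinearity to get $\Delta^{k}_{i,j} \approx 2\,x^{k}_{j} x^{k-1}_i\,\partial F_Z(a^{k}_j)/\partial x^{k-1}_i$, and observe that each sign factor appears exactly twice in the product $D^l \Delta^{l+1}\cdots\Delta^L df$ and cancels, leaving the ordinary Jacobian chain that autodiff of \cref{alg:ST} computes. Your explicit backward induction on $\delta^k_i = 2x^k_i\beta^k_i$ is just a more formal packaging of the paper's remark that ``the factors $x^k_{i_k}$ appear exactly twice and thus cancel.''
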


By applying the stated linear approximations, the proof of this proposition shows that the necessary derivatives and Jacobians in~\cref{alg:PSA} can be formally obtained as derivatives of the noise cdf $F_Z$ \wrt parameters $\theta$ and the binary states $x^{k-1}$. Despite not improving the theoretical computational complexity compared to \PSA, the implementation is much more straightforward.
We indeed see that~\cref{alg:ST} belongs to the family of straight-through estimators, using hard threshold activation in the forward pass and a smooth substitute for the derivative in the backward pass. The key difference to many empirical variants being that it is clearly paired with the SBN model and the choice of the smooth function for the derivative matches the model and the sampling scheme. As we have discovered later on, it matches exactly to the original description of such straight-through by~\citet{HintonST} (with the difference that we use $\pm1$ encoding and thus scaling factors $2$ occur).

In the case of logistic noise, we get $2 F_Z(a) = 2\sigma(a) = 1 + \tanh(a/2)$, where the added $1$ does not affect derivatives. This recovers the popular use of $\tanh$ as a smooth replacement, however, the slope of the function in our case must match the sampling distribution. 
In the experiments we compare it to a popular straight-through variant (\eg,~\cite{Hubara-16}) where the gradient of {\em hard tanh} function, $\min(\max(x,-1),1)$ is taken. The mismatch between the SBN model and the gradient function in this case leads to a severe loss of gradient estimation accuracy.
The proposed derivation thus eliminates lots of guesswork related to the use of such estimators in practice, allows to assess the approximation quality and understand its limitations. 

\section{Experiments}\label{sec:experiments}

We evaluate the quality of gradient estimation on a small-scale dataset and the performance in learning on a real dataset. In both cases we use SBN models with logistic noises, which is a popular choice. Despite the algorithm and the theory is applicable for any continuous noise distribution, we have optimized the implementation of convolutions specifically for logistic noises.
\begin{figure*}[t]
\centering
\small
\begin{tabular}{ccc}
\small Layer 1 &  \small Layer 2 & \small Layer 3 \\  
\includegraphics[width=0.33\linewidth]{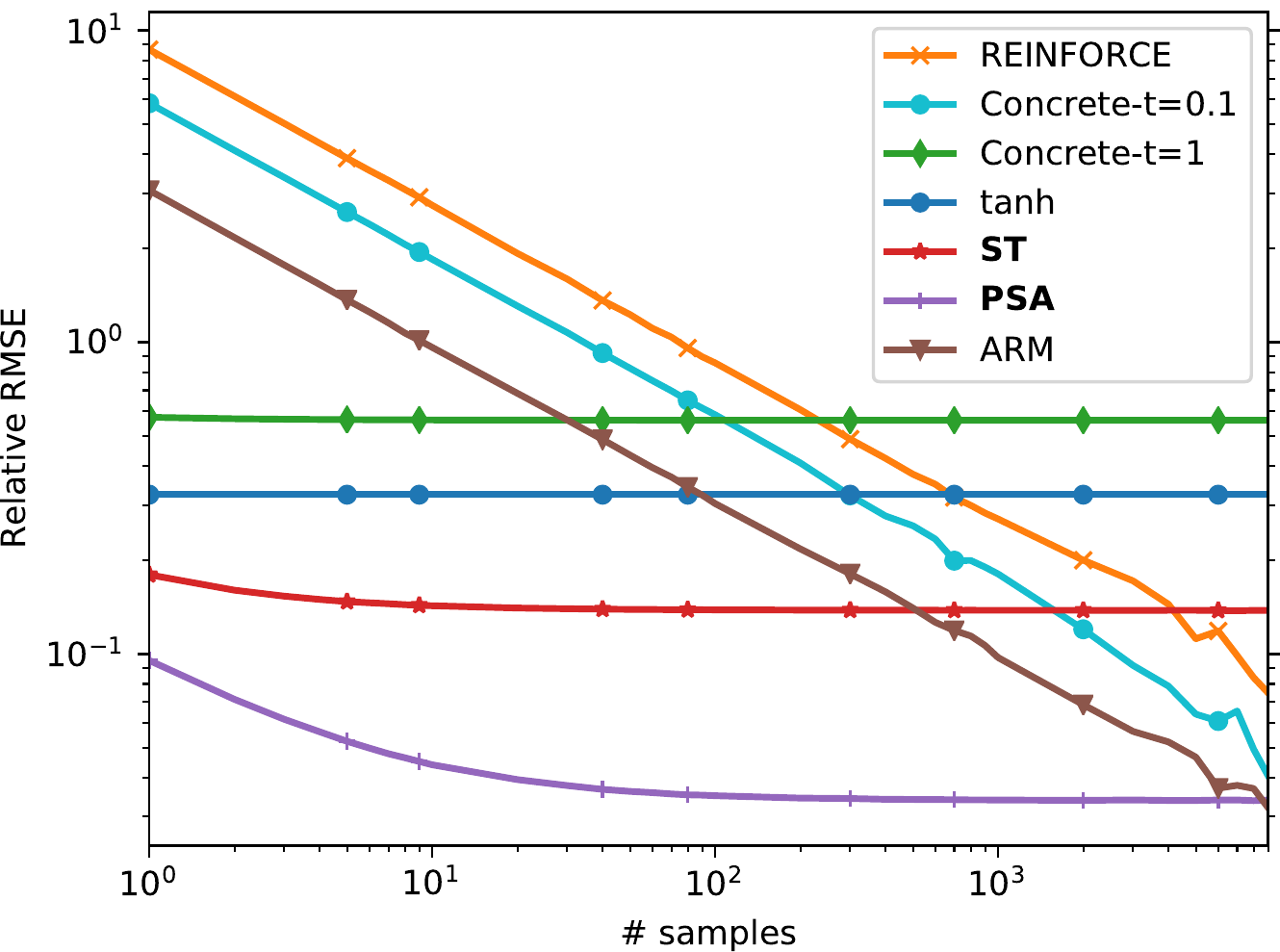}&
\includegraphics[width=0.33\linewidth]{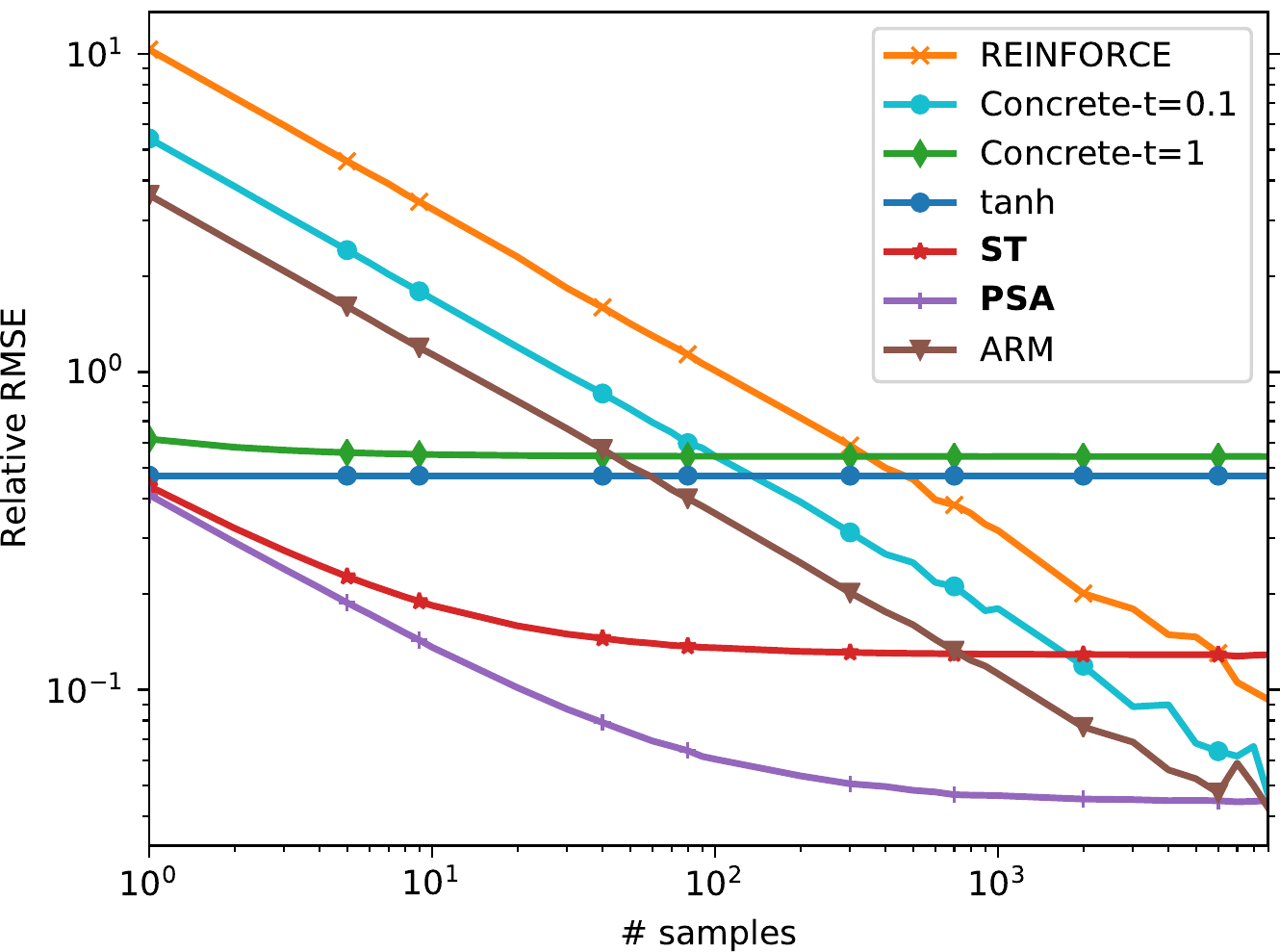}&
\includegraphics[width=0.33\linewidth]{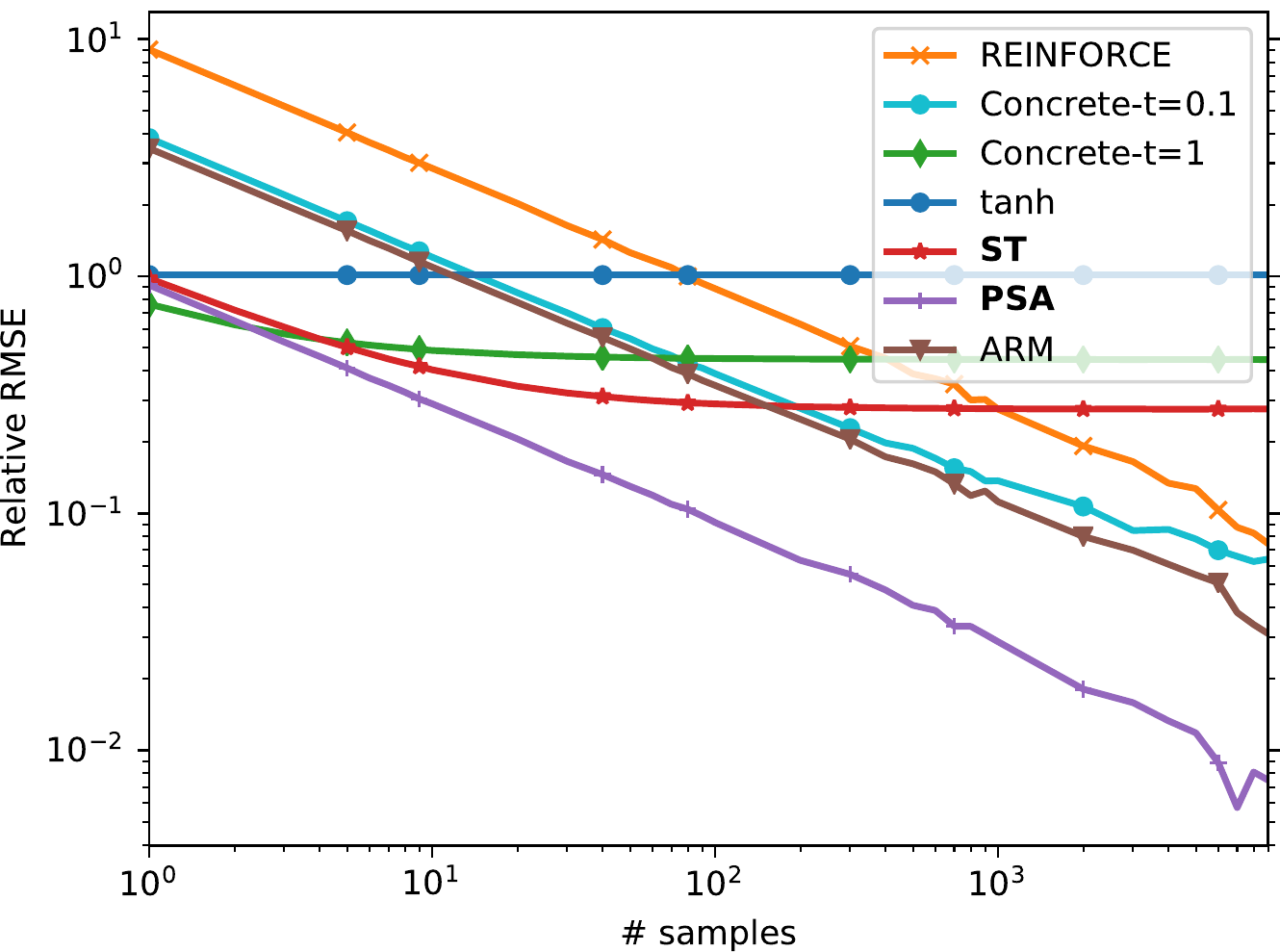}
\end{tabular}
\vskip-0.5\baselineskip
\caption{\label{fig:rmse}
Root mean squared error of the gradient in layers 1 to 3 relative to the true gradient length after epoch 1 of training with \REINFORCE.
Layer 1 parameters correspond to $\theta^1$ in~\eqref{SBN} -- weights and biases defining preactivations of layer $1$ Bernoulli states.
Unbiased estimators always improve with more samples. Biased estimators only improve up to a point.
However, biased methods may be more accurate when using fewer samples and the discrepancy significantly increases with layer depth. 
}
\end{figure*}
\begin{figure*}[t]
\setlength{\figwidth}{0.33\textwidth}
\centering
\begin{tabular}{ccc}
\small After 1 epoch &  \small After 100 epochs & \small After 2000 epochs \\  
\includegraphics[width=0.33\linewidth]{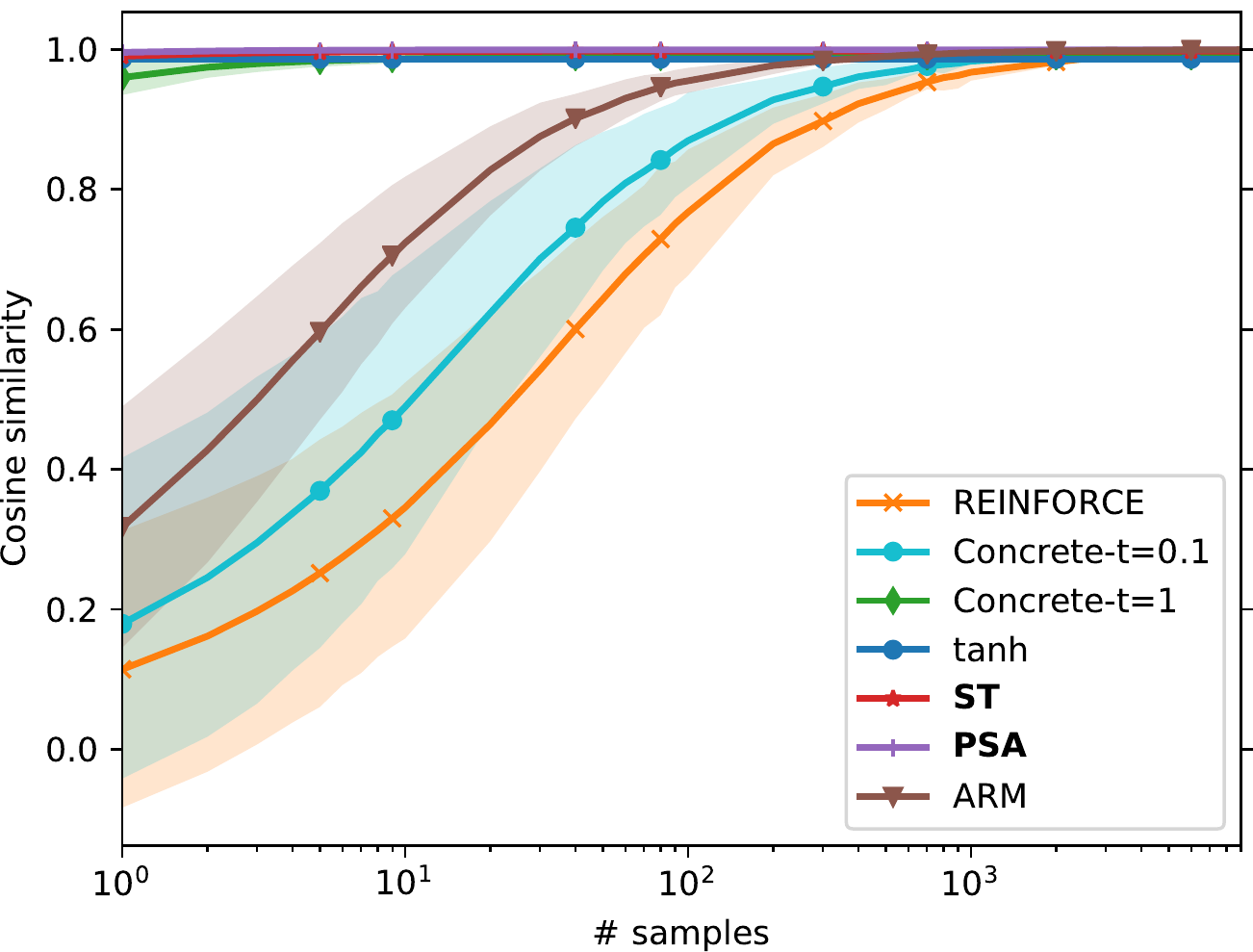}&
\includegraphics[width=0.33\linewidth]{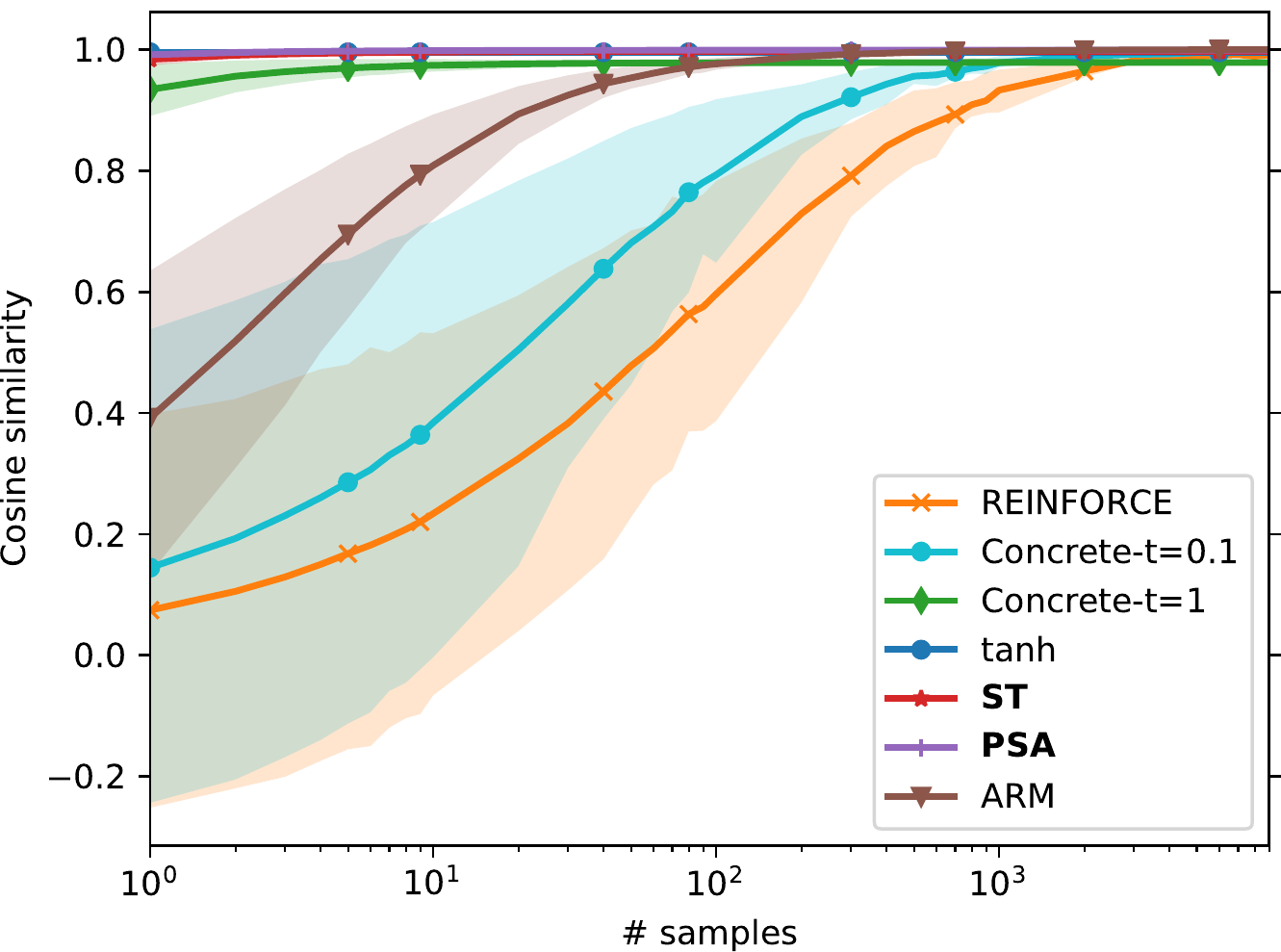}&
\includegraphics[width=0.33\linewidth]{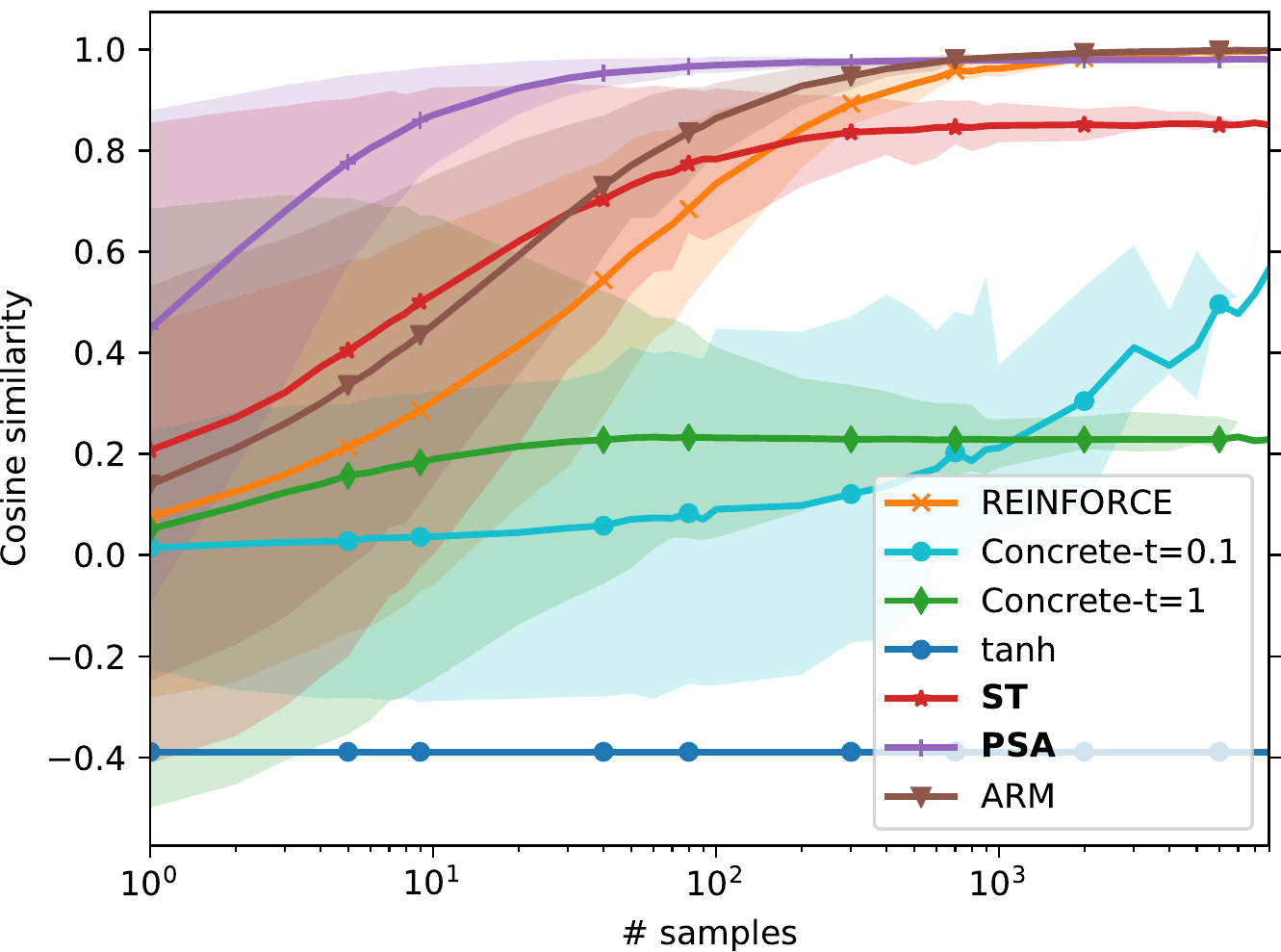}\\
\end{tabular}
\vskip-0.5\baselineskip
\caption{\label{fig:accuracy}
Cosine similarity of the estimated gradient to the true gradient in layer 1 at different points during training. 
The lines show the mean of the cosine similarity of the $N$-sample estimator. The shaded areas shows the interval containing 70\% of the trials, illustrating the scatter of values that can be obtained in a random trial. It is seen that for some estimators there are good chances of failing to produce a positive cosine, \ie a valid descent direction. 
}
\end{figure*}
\begin{figure*}[t]
\small
\setlength{\figwidth}{0.33\textwidth}
\centering
\begin{tabular}{cccc}
&\small No Augmentation & \small Affine Augmentation & \small Flip \& Crop Augmentation \\
\parbox{3mm}{\rotatebox[origin=c]{90}{\small Training Loss}}&
\begin{tabular}{c}
\includegraphics[width=\figwidth]{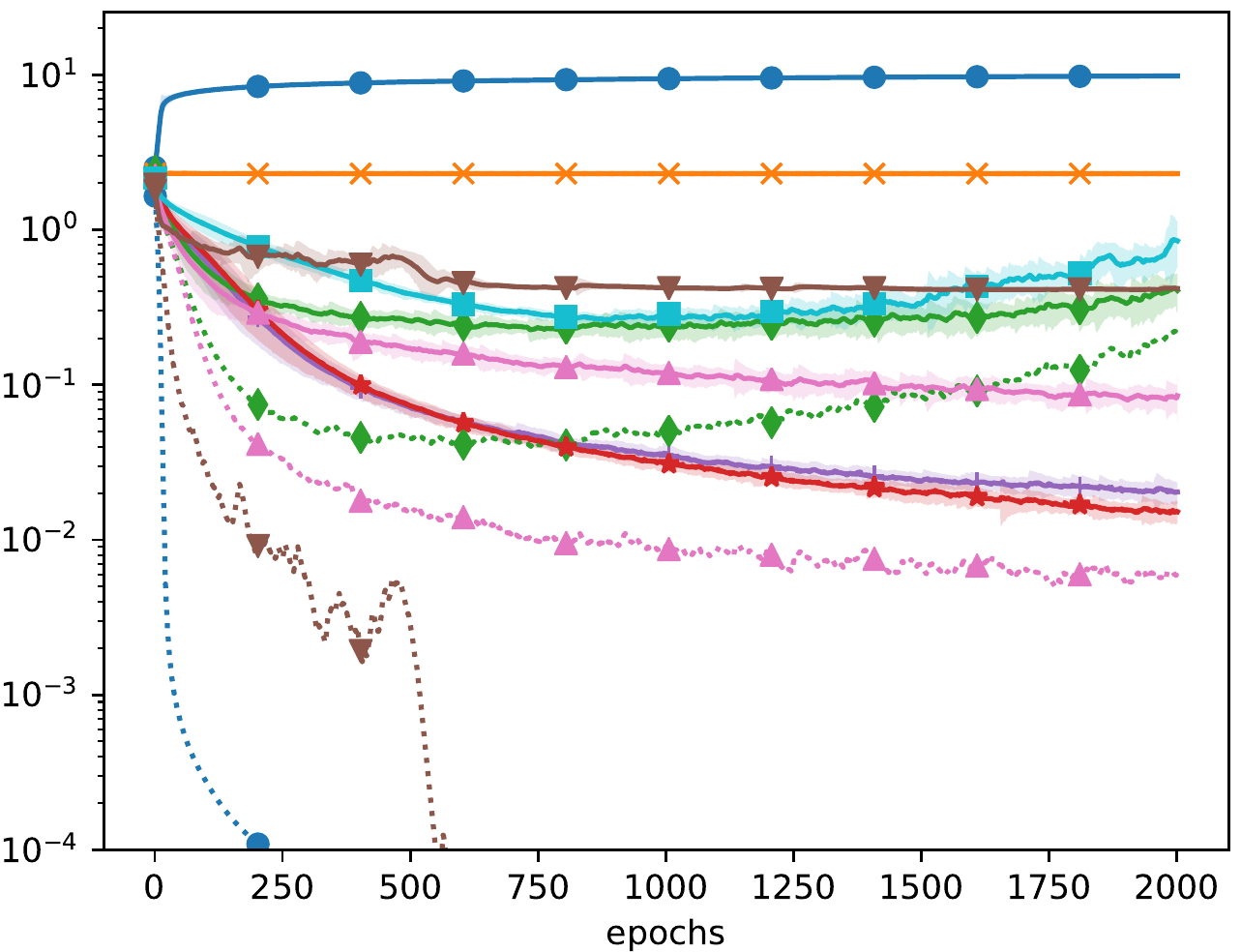}%
\end{tabular}&\begin{tabular}{c}
\includegraphics[width=\figwidth]{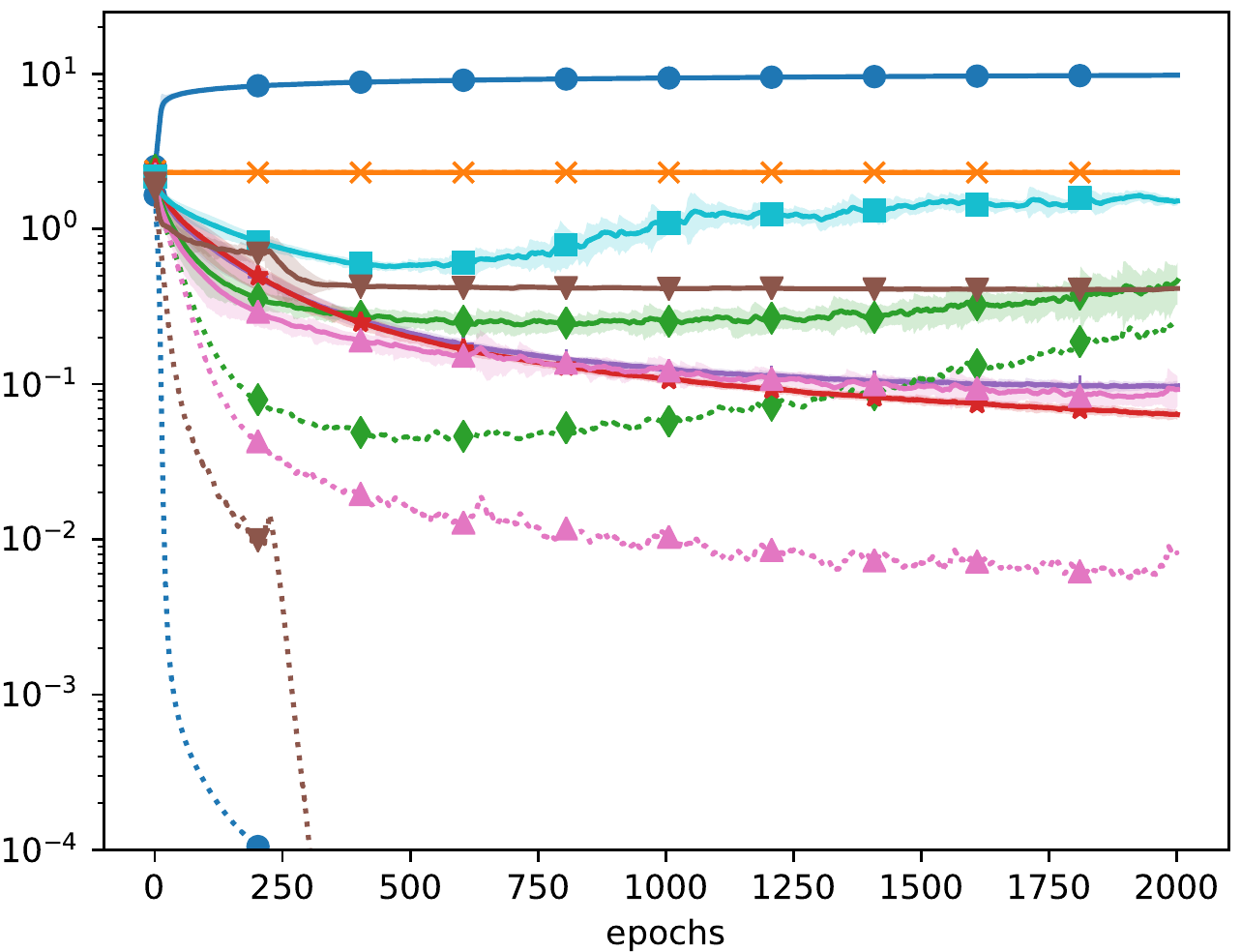}%
\end{tabular}&\begin{tabular}{c}
\includegraphics[width=\figwidth]{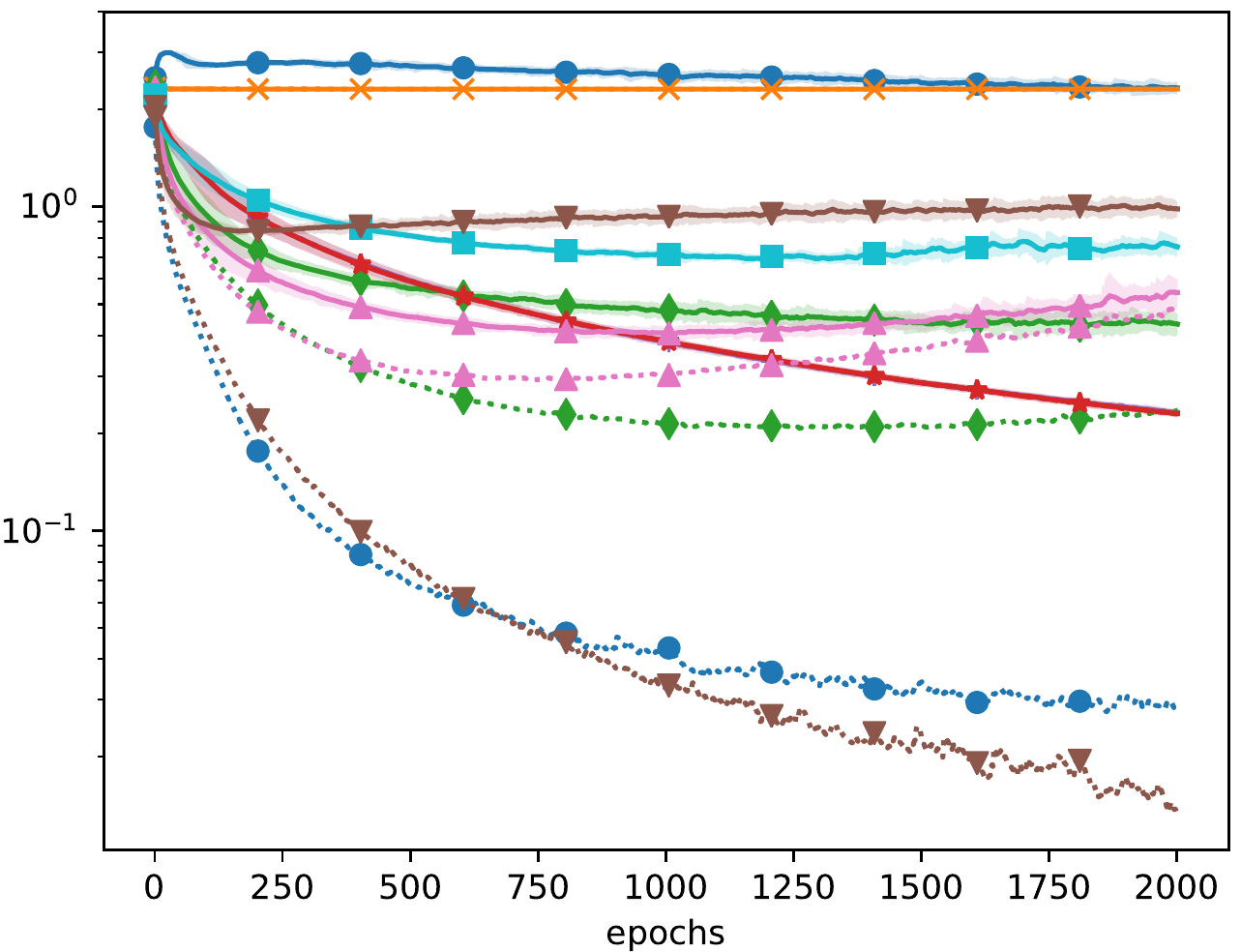}
\end{tabular}\\
\parbox{3mm}{\rotatebox[origin=c]{90}{\small Validation Accuracy, \%}}&
\begin{tabular}{c}
\includegraphics[width=\figwidth]{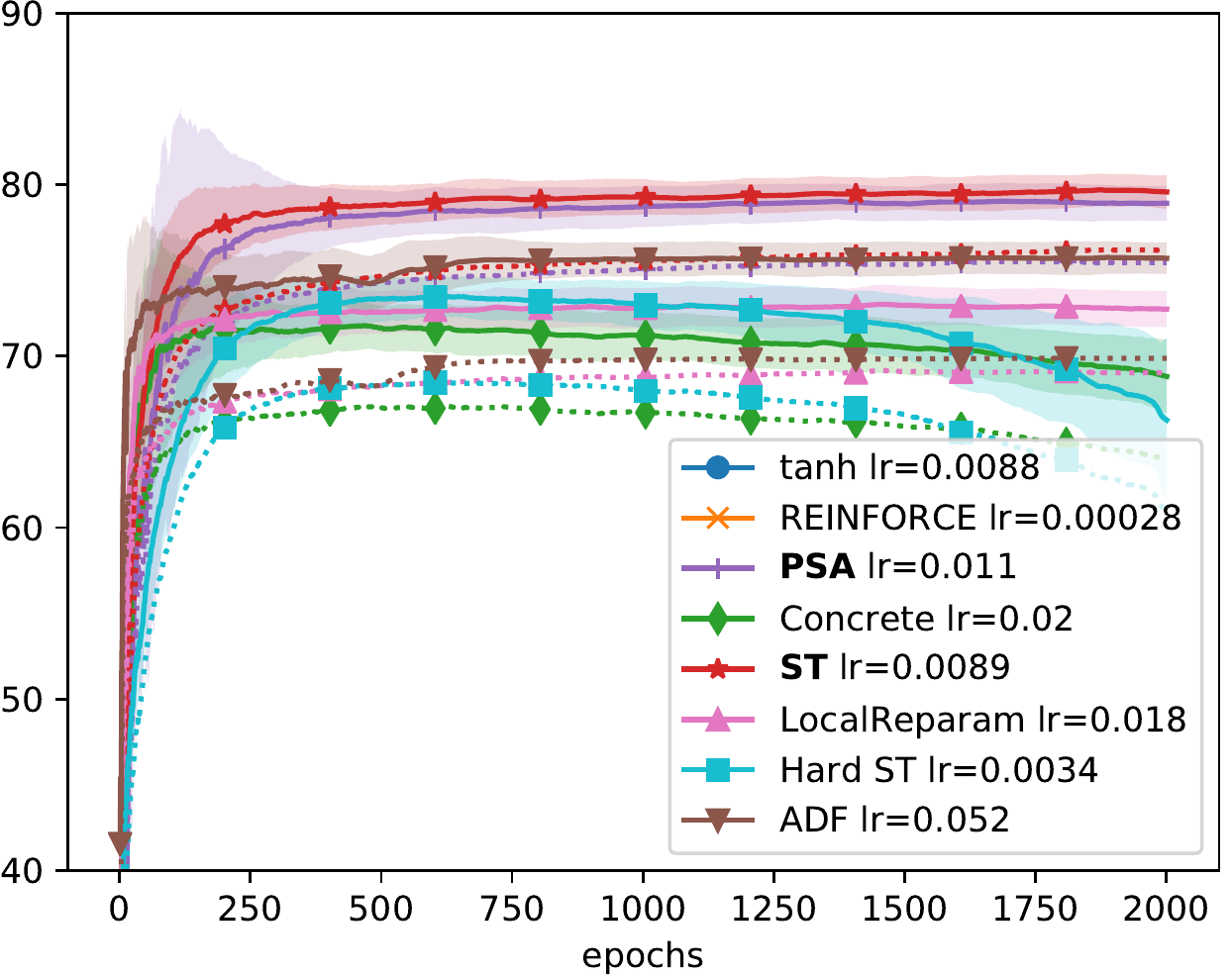}%
\end{tabular}&\begin{tabular}{c}
\includegraphics[width=\figwidth]{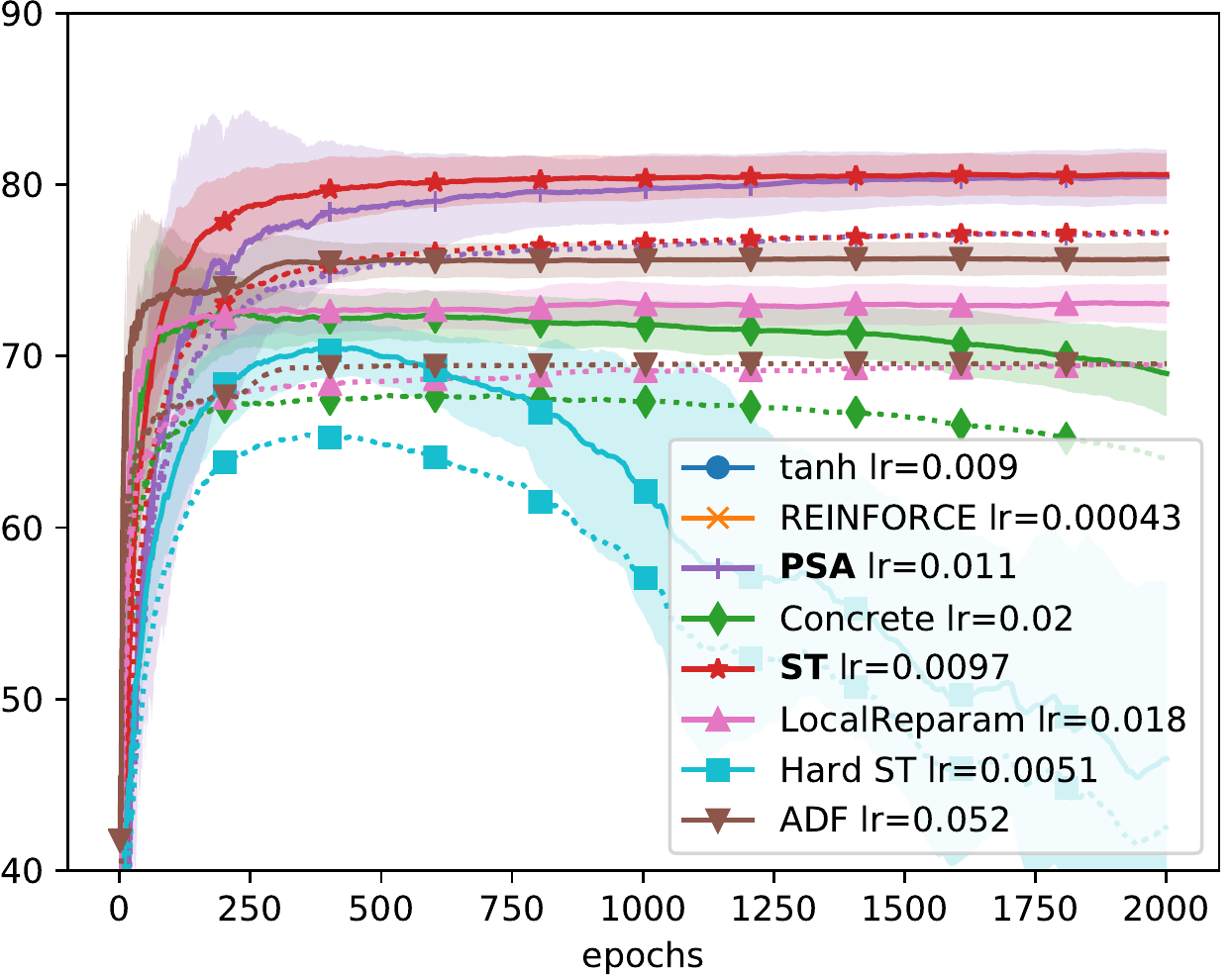}%
\end{tabular}&\begin{tabular}{c}
\includegraphics[width=\figwidth]{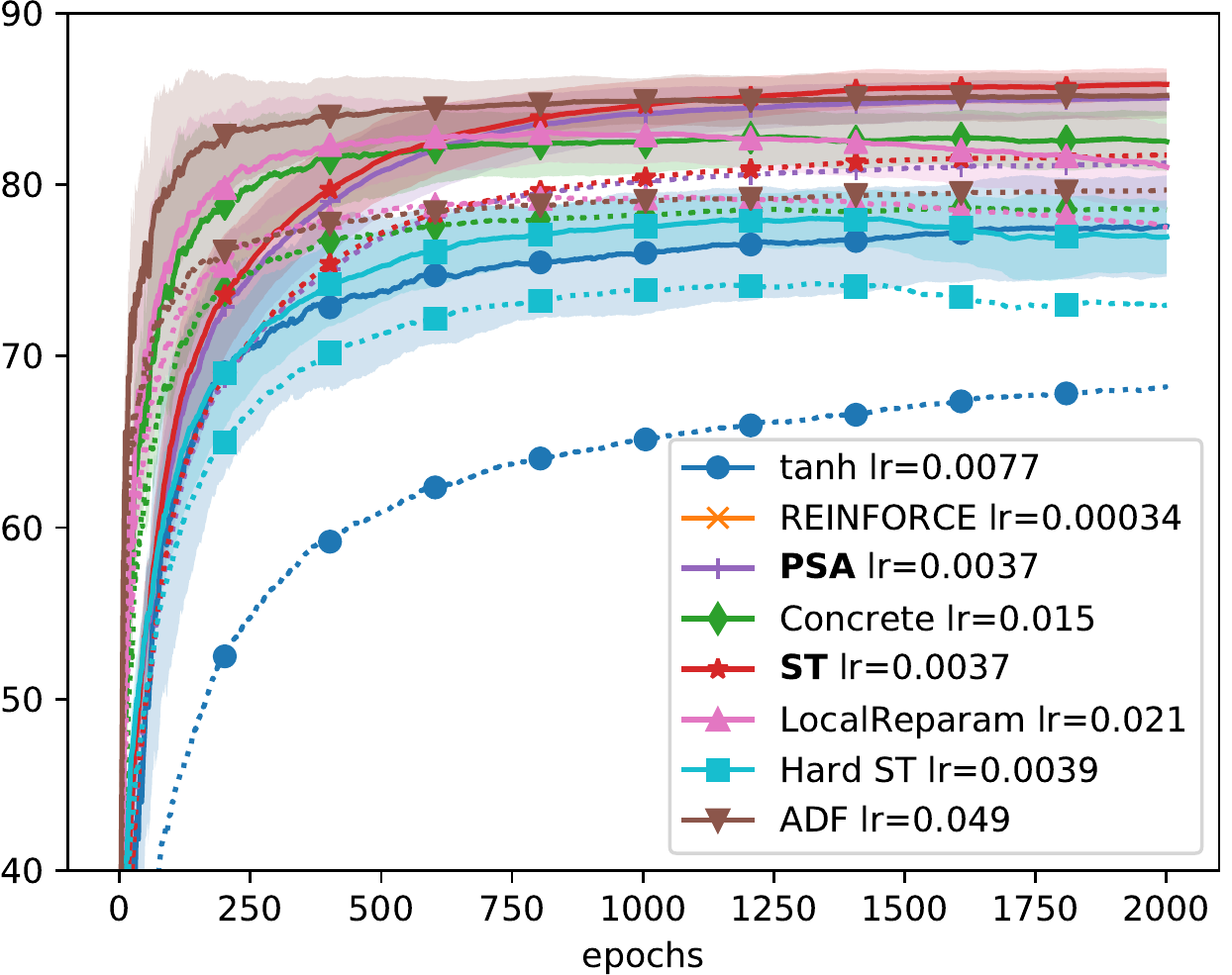}%
\end{tabular}\\
%
%
\end{tabular}\\
\raggedleft
\begin{tabular}{p{0.18\linewidth}p{0.68\linewidth}}
\hline
{\tt \REINFORCE} & Unbiased estimator~\cite{Williams1992}.\\
{\tt Tanh} & Replace $\sign(a - Z)$ by $E_Z[\sign(a-Z)]  = \tanh(a/2)$.\\
{\tt Concrete}-$t$ & Concrete Relaxation~\cite{maddison2016concrete} with the relaxation parameter $t$.\\
{\tt HardST} & STE with the gradient of clamped identity, $\max(\min(a,1),-1)$. \\
{\tt ADF} & Assumed density filtering, \eg,~\cite{shekhovtsov18-cat}, the equivalent of {\sc PBNet} method in~\cite{peters2019probabilistic} for real weights.\\
{\tt LocalReparam} & Approximating pre-activations with normal distribution and sampling them.\\ 
\hline
\end{tabular}
\caption{\label{fig:CIFAR-NLL}
Learning comparison on CIFAR-10. Solid loss curves measure the SBN expected loss. Doted loss curves indicate the relaxed objectives used by respective methods (where applicable). Solid accuracy curves are using 10-sample expected predictive probabilities of SBN and dotted curves only 1-sample predictive probabilities. All curves are smoothed over iterations and shaded areas denote $3{\times}$std \wrt smoothing. The automatically found learning rates are displayed in the legend.
}
\end{figure*}
\paragraph{Gradient Estimation Accuracy}
To evaluate the accuracy of gradient estimators, we implemented the exact method, feasible for small models. We use the simple problem instance shown in \cref{fig:example}(a) with 2 classes and 100 training points per class in 2D and a classification network with 5-5-5 Bernoulli units. 
To study the bias-variance tradeoff we vary the number of samples used for an estimate and measure the Mean Squared Error (MSE). For unbiased estimators using $N$ samples leads to a straightforward variance reduction by $1/N$, visible in the log-log plot in~\cref{fig:rmse} as straight lines.
To investigate how the gradient quality changes with the layer depth we measure Root MSE in each of the 3 layers separately.
It is seen in~\cref{fig:rmse} that the proposed \PSA method has a bias, which is the asymptotic value of RMSE when increasing the number of samples. However, its RMSE accuracy with 1 sample is indeed not worse than that of the advanced unbiased \ARM method with $10^3$ samples.
We should note that in more deterministic points, where the network converges to during the training, the conditions for approximation hold less well and the bias of \PSA may become more significant while unbiased methods become more accurate (but the gradient signal diminishes).
\cref{fig:rmse} also confirms experimentally that \PSA is always more accurate than ST and has no bias in the last hidden layer, as expected.
Additional experiments studying the dependence of accuracy on network width, depth and comparison with more unbiased baselines are given in~\cref{sec:moreexp}~(\cref{fig:acc-width,fig:acc-depth,fig:rmse1}). Both \PSA and ST methods are found to be in advantage when increasing depth and width.

The cosine similarity metric measured in~\cref{fig:accuracy} is more directly relevant for optimization.
If it is close to one, we have an accurate gradient direction estimate. If it is positive, we still have a descent direction. Negative cosine similarity will seriously harm the optimization. 
For this evaluation we take the model parameters at epochs 1, 100 and 2000 of a reference training and measure the cosine similarity of gradients in layer 1. 
We see that methods with high bias may systematically fail to produce a descent direction while methods with high variance may produce wrong directions too often. Both effects can slow down the optimization or steer it wrongly.

The proposed \PSA method achieves the best accuracy in the practically important low-sample regime. The ST method is often found inferior and we know why: the extra linearization does not hold well when there are only few units in each layer. We expect it to be more accurate in larger models.
%

\paragraph{Deep Learning}
To test the proposed methods in a realistic learning setting we use CIFAR-10 dataset and network with 8 convolutional and 1 fully connected layers (\cref{sec:moreexp}). The first and foremost purpose of the experiment is to assess how the methods can optimize the training loss. We thus avoid using batch normalization, max-pooling and huge fully connected layers. 
When comparing to a number of existing techniques, we find it infeasible to optimize all hyper-parameters such as learning rate, schedule, momentum \etc per method by cross-validation. However, compared methods significantly differ in variance and may require significantly different parameters.
We opt to use SGD with momentum with a constant learning rate found by an automated search per method (\cref{sec:moreexp}). 
While this may be suboptimal, it nevertheless allows to compare the behavior of algorithms and analyze the failure cases. We further rely on SGD to average out noisy gradients with a suitable learning rate and therefore use 1-sample gradient estimates. To modulate the problem difficulty, we evaluate 3 data augmentation scenarios: no augmentation, affine augmentation, Flip\&Crop augmentation.  

\cref{fig:CIFAR-NLL} and \cref{fig:CIFAR-extra} show the training performance of evaluated methods. Both \PSA and \ST achieve very similar and stable training performance. This verifies that the extra linearization in \ST has no negative impact on the approximation quality in large models. For methods {\tt Tanh}, {\tt Concrete}, {\tt ADF}, {\tt LocalReparam} we can measure their relaxed objectives, whose gradients are used for the training (\eg the loss of a standard neural network with $\tanh$ activations for {\tt Tanh}). The training loss plots reveal a significant gap between these relaxed objectives and the expected loss of the SBN. While the relaxed objectives are optimized with an excellent performance, the real objective stalls or starts growing. This agrees with our findings in~\cref{fig:accuracy} that biased methods may fail to provide descent directions. {\tt HardST} method, seemingly similar to our ST, performs rather poorly. Despite its learning rate is smaller than that of ST, it diverges in the first two augmentation cases, presumably due to wrongly estimated gradient directions. 
As we know from preceding work, good results with these existing methods are possible, in particular we also see that {\tt ADF} with Flip\&Crop augmentation achieves very good validation accuracy despite poor losses. We argue that in methods where bias may become high there is no sufficient control of what the optimization is doing and one needs to balance with empirical guessing.
Finally, the \REINFORCE method requires a very small learning rate in order not to diverge and the learning rate is indeed so small that we do not see the progress. We investigate if further by attempting input-dependent baselines in~\cref{sec:muprop}. While it can optimize the objective in principle, the learning rate stays very small.

Please see further details on implementation and the training setup in~\cref{sec:impl,sec:moreexp}. The implementation is available at~{\small \url{https://github.com/shekhovt/PSA-Neurips2020}}.

\section{Conclusion}
We proposed a new method for estimating gradients in SBNs, which combines an approximation by linearization and a variance reduction by summation along paths, both clearly interpretable. We experimentally verified that our \PSA method has a practical bias-variance tradeoff in the quality of the estimate, runs with a reasonable constant factor overhead, as compared to standard back propagation, and can improve the learning performance and stability. The {\sc st} estimator obtained from \PSA gives the first theoretical justification of straight-through methods for deep models, opening the way for their more reliable and understandable use and improvements. Its main advantage is implementation simplicity. While the estimation accuracy may suffer in small models, we have observed that it performs on par with \PSA in learning large models, which is very encouraging. However, the influence of the systematic bias on the training is not fully clear to us, we hope to study it further in the future work.
%
\section*{Broader Impact}
The work promotes stochastic binary networks and improves the understanding and efficiency of training methods. We therefore believe ethical concerns are not applicable. 
At the same time, developing more efficient training methods for binary networks, we believe may further increase the researchers and engineers interest in low-energy binary computations and aid progress in embedded applications such as speech recognition and vision. In the field of stochastic computing, which is rather detached at the moment, the stochasticity is treated as a source of errors and accumulators are used in every layer just to mimic smooth activation function~\cite{Kim-16,Lee:2017}. It appears to us that when stochastic binary computations are made useful instead, the related hardware designs can be made more efficient and stable \wrt to errors. 

\begin{ack}
We thank the anonymous peers for pointing out related work and helping to clarify the presentation.
We gratefully acknowledge our funding agencies.
A.S. was supported by the project ``International Mobility of Researchers MSCA-IF II at CTU in Prague'' (CZ.02.2.69/0.0/0.0/18\_070/0010457). V.Y. was supported by Samsung Research, Samsung Electronics. B.F. was supported by the Czech Science Foundation grant no. 19-09967S.
\end{ack}
	
\small
\bibliography{../bib/strings,../bib/neuro-generative,../bib/our}
\bibliographystyle{icml2020}

\newpage
\newpage
\onecolumn
\appendix
\numberwithin{figure}{section}
\pagestyle{plain}
%
\setcounter{figure}{0}
\setcounter{table}{0}
\counterwithin{figure}{section}
\counterwithin{table}{section}
\counterwithin{theorem}{section}
\counterwithin{proposition}{section}
\counterwithin{lemma}{section}
\counterwithin{corollary}{section}
%
%
\addtocontents{toc}{\protect\setcounter{tocdepth}{2}}
%
%
%
\let\Contentsline\contentsline
\renewcommand\contentsline[3]{\Contentsline{#1}{#2}{}}
\makeatletter
\renewcommand{\@dotsep}{10000}
\makeatother

\let\authcount\relax
\def\authcount#1{}
{\centering
 \LARGE 
 \mytitle\ (Appendix) \\[1.5em]
\normalsize
}
\tableofcontents
\section{Learning Formulations}\label{sec:learning}
Here we give a clarification on the learning formulation used in this work. During the training we consider the expected loss of a randomized predictor:
\begin{align}\label{exp-loss}
\E_{(x^0, y)} \E_Z\big[F(\theta)\big] = -\E_{(x^0, y)} {\red \E_Z} \big[\log p(y| X^L; \theta)\big],
\end{align}
where $(x^0, y) \sim \texttt{data}$. However at the test time we evaluate the expected predictor $\E_Z[ p(y| X^L; \theta)]$, considering $Z$ as latent variables, which can be interpreted as an {\em ensemble} of binary networks (see ~\cref{fig:example} c-f). The loss of this marginal predictor would be rather given by:
\begin{align}\label{marg-loss}
- \E_{(x^0, y)}\big[ \log {\red \E_{Z}} \big[ p(y| X^L; \theta) \big]\big].
\end{align}
This setup is similar to dropout~\cite{srivastava14a} with latent multiplicative Bernoulli noises. The expected loss~\eqref{exp-loss} upper bounds the marginal loss~\eqref{marg-loss} (by Jensen's inequality), so that minimizing it also minimizes~\eqref{marg-loss}. However, unlike with dropout, the following observation holds for SBN models.
\begin{restatable}{proposition}{Pdetermenistic}\label{prop:deterministic}
In the family of models~\eqref{SBN} with free scale and bias in all coordinates of $a^k$ there is always an effectively deterministic model (with no injected noises) that achieves the same or better expected loss~\eqref{exp-loss}.
\end{restatable}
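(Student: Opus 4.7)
The plan is to derandomize the injected noise by freezing it at a favourable realization $z^*$ and then absorbing $z^*$ into the per-coordinate biases while sending the per-coordinate scales to infinity.

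First I would apply Fubini to rewrite the expected loss as $L(\theta) = \E_Z[\tilde L_Z(\theta)]$, where $\tilde L_z(\theta) := \E_{(x^0,y)}[F(\theta; x^0, y, z)]$ is the data-averaged loss obtained when all injected noises are clamped to the single common value $z$. Since $\tilde L_Z(\theta)$ is a random variable with mean $L(\theta)$, the elementary fact that some value of a random variable lies at or below its mean yields a realization $z^*$ in the support of $Z$ with $\tilde L_{z^*}(\theta) \leq L(\theta)$.

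Next I would use the free per-coordinate scales $s^k_j$ and biases $b^k_j$ to realize within $\mathcal{F}$ an effectively deterministic model whose forward pass reproduces, for every input, the output of the original SBN with noise clamped to $z^*$. Setting $b^k_j = -s^k_j z^{*k}_j$ gives $X^k_j = \sgn\bigl(s^k_j(a^k_j - z^{*k}_j) - Z^k_j\bigr) = \sgn\bigl(a^k_j - z^{*k}_j - Z^k_j/s^k_j\bigr)$, which converges almost surely to $\sgn(a^k_j - z^{*k}_j)$ as $s^k_j \to \infty$. Propagating this layerwise, the whole network becomes a deterministic function of $x^0$ identical to the frozen-noise forward pass, and its expected loss is exactly $\tilde L_{z^*}(\theta) \leq L(\theta)$.

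The main subtlety lies in justifying the scale-to-infinity limit: one needs the expected loss of the scaled model to converge to that of the deterministic limit. This is a routine dominated-convergence argument, since the cross-entropy head loss takes only finitely many values on $\{-1,+1\}^n$ for fixed $\theta^{L+1}$ and the degenerate events $\{a^k_j(X^{k-1}) = z^{*k}_j\}$ are measure zero for generic parameters, so the pointwise almost-sure convergence of the layer states lifts to convergence of $\E_{(x^0,y),Z}[F]$. Strictly speaking the effectively deterministic model lies in the closure of $\mathcal{F}$, matching the proposition's qualifier ``effectively''.
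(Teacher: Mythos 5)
Your proposal is correct and follows essentially the same route as the paper's proof: select a noise realization $z^*$ whose clamped, data-averaged loss is at most the expected loss, absorb $z^*$ into the per-coordinate bias, and send the per-coordinate scale to infinity so the residual noise $Z/s$ vanishes. Your added care about the limit (dominated convergence, measure-zero tie events, the model lying in the closure of the family) is a welcome refinement of a step the paper treats informally, but it does not change the argument.
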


This means that the model will tend to be deterministic and fit the classification boundary but not the data ambiguity (see~\cref{fig:example} b,c). Being aware of this, we note that it is nevertheless a common way to train classification models (\eg, dropout). Furthermore, the upper bound may be tightened by considering a multi-sample bound~\citep{Tang-13, Raiko-14} or variational bounds (applied for shallow models in~\cite{yin18-arm,grathwohl18-relax}). These extensions are left for future work as they require the ability to estimate gradients of the expected loss~\eqref{exp-loss} in the first place. We can nevertheless see from the example in~\cref{fig:example} that the SBN family can be expressive when trained appropriately.

\section{Proofs}\label{sec:proofs}
This section contains proofs, technical details and extended discussion that did not fit in the main paper.
\subsection{Training with Expected Loss tends to Deterministic Models}
\Pdetermenistic*
\citet{Raiko-14} give a related theorem, but do not show the preferred deterministic strategy to be realizable in the model family.
%
%
%
\begin{proof}
Let $\theta$ be parameters of the model optimizing~\eqref{exp-loss}. Let then $z^*$ be a maximizer of $\E_{(x^0, y^*) \sim \text{data}} \Big[f(x^L(z),y;\theta) \Big]$. Consider the case of a linear layer $a(x) = W^T x + b$ with the output $\sgn(W^T x + b - Z)$. Chose as new parameters $W' = s W$, $b' = s(b - z^*)$ for $s \rightarrow \infty$. Since $Z$ has a finite variance, this ensures that $\sgn( W'^T X + b' - Z ) = \sgn( W^T X + b - z^* - Z/s ) \rightarrow \sgn( W^T X + b - z^* )$. The network with new weights is deterministic as it efficiently scales all noises to zero and it achieves same or better expected loss. The same argument applies whenever $a$ has a free scale and bias degrees of freedom.
\end{proof}
\setlength{\figwidth}{0.165\textwidth}
\begin{figure*}[t]
\centering
\begin{tabular}{c}
\includegraphics[width=\figwidth]{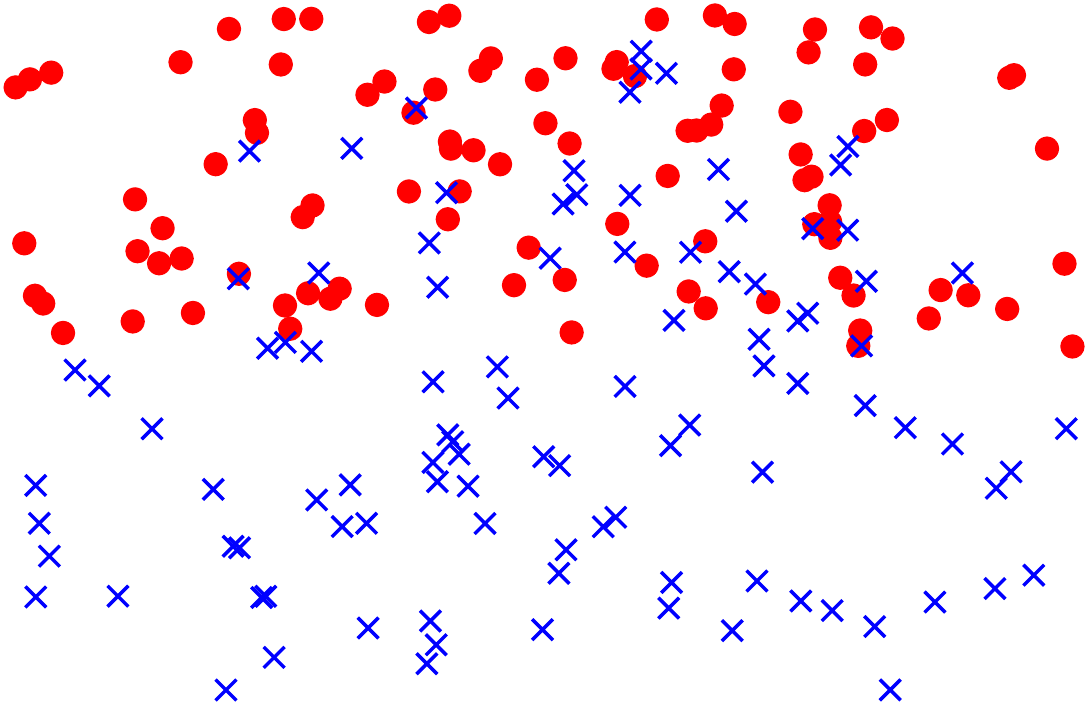}\\%
(a)
\end{tabular}%
\begin{tabular}{c}
\includegraphics[width=\figwidth]{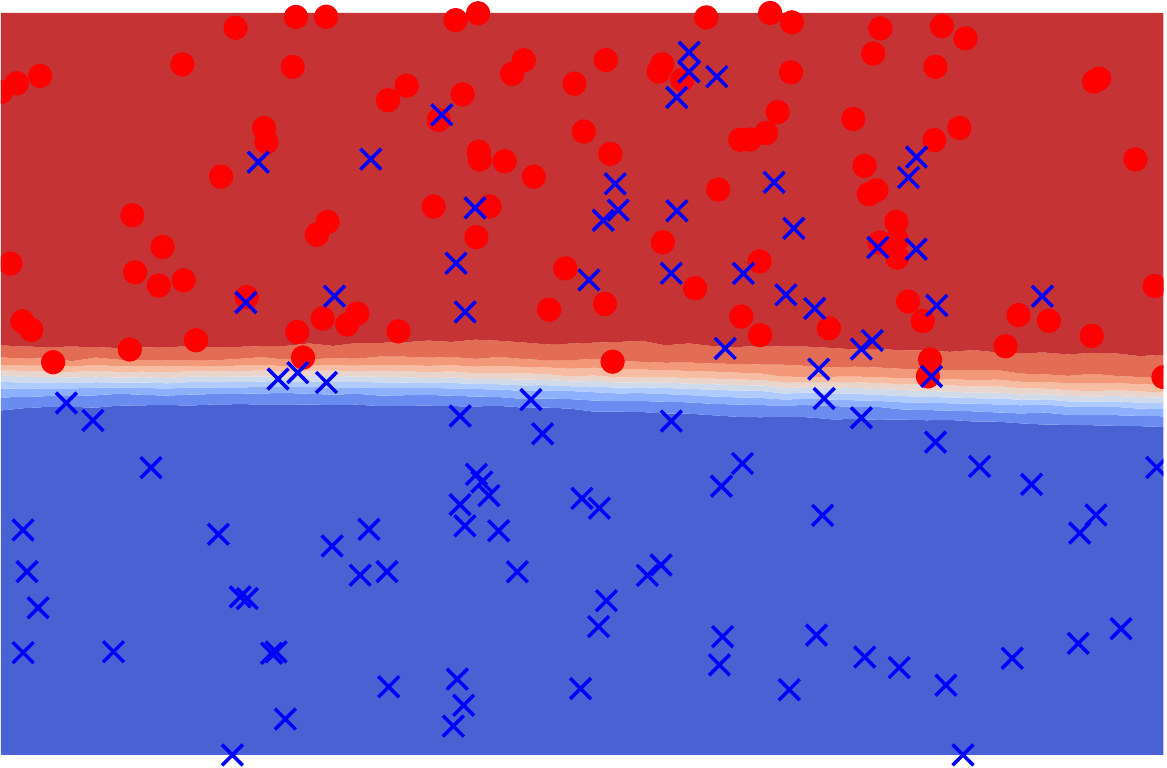}\\%
(b)
\end{tabular}%
\begin{tabular}{c}
\includegraphics[width=\figwidth]{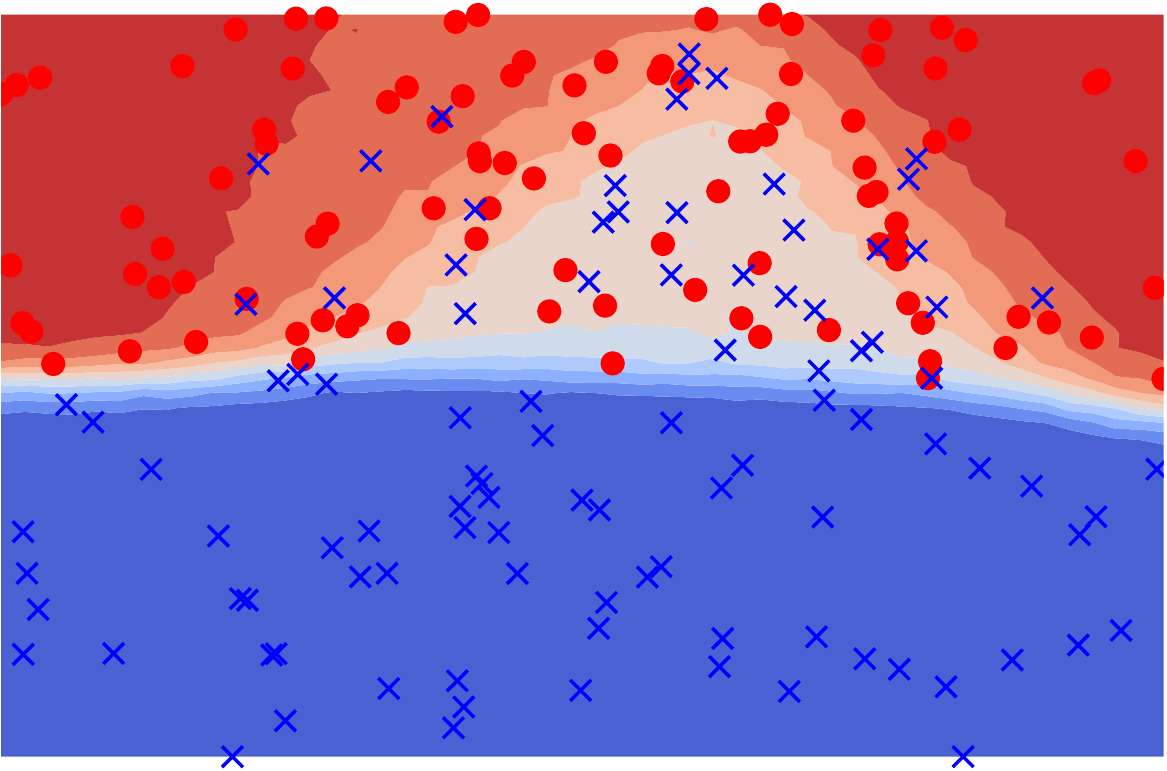}\\
(c)%
\end{tabular}%
\begin{tabular}{c}
\includegraphics[width=\figwidth]{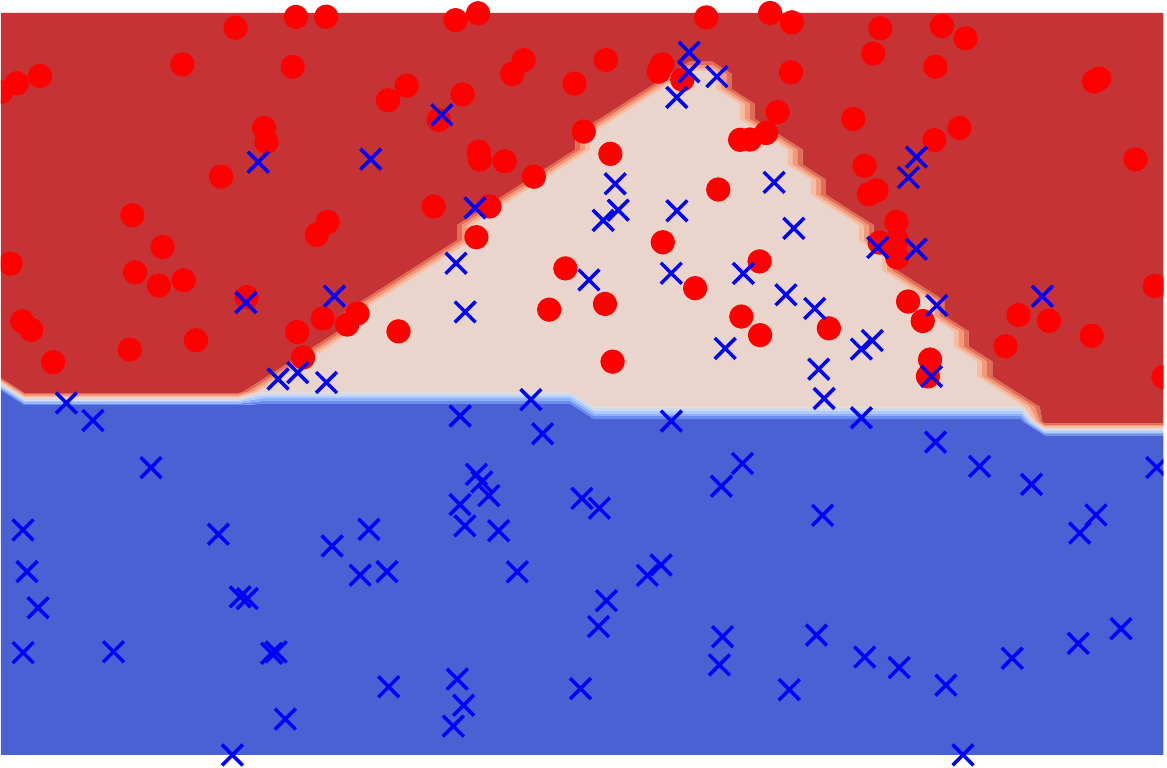}\\
(d)
\end{tabular}%
\begin{tabular}{c}
\includegraphics[width=\figwidth]{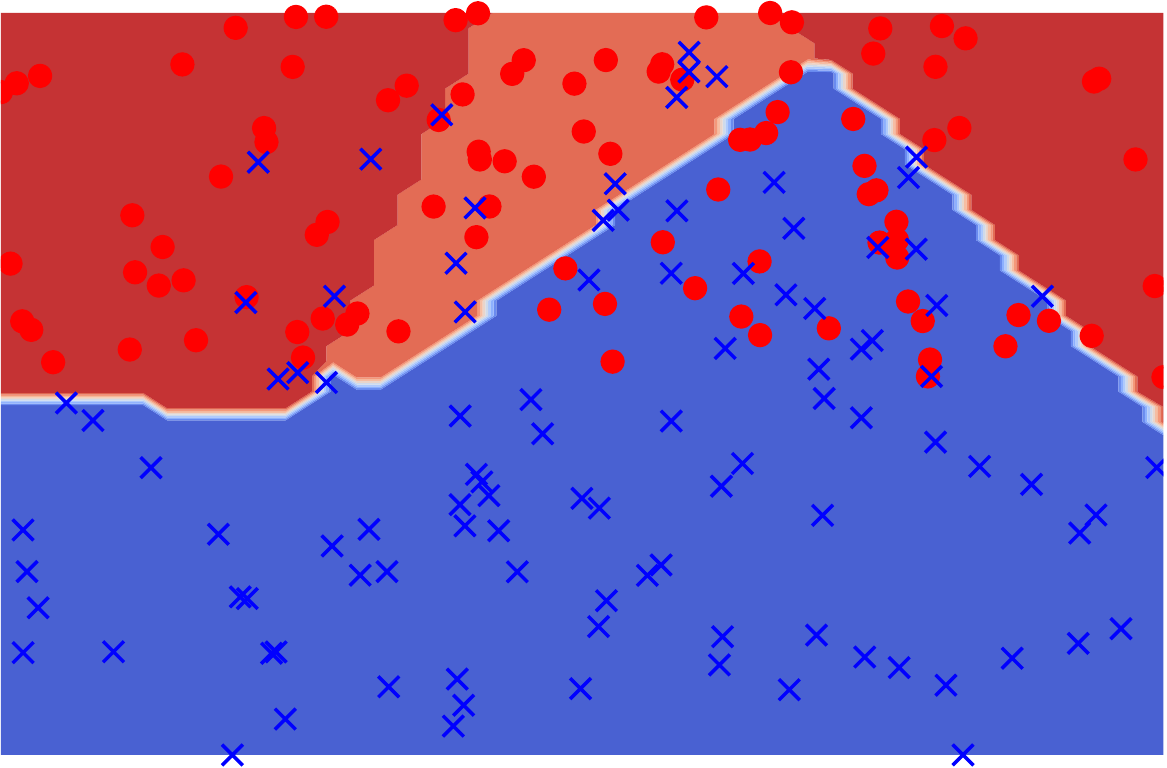}\\
(e)
\end{tabular}%
\begin{tabular}{c}
\includegraphics[width=\figwidth]{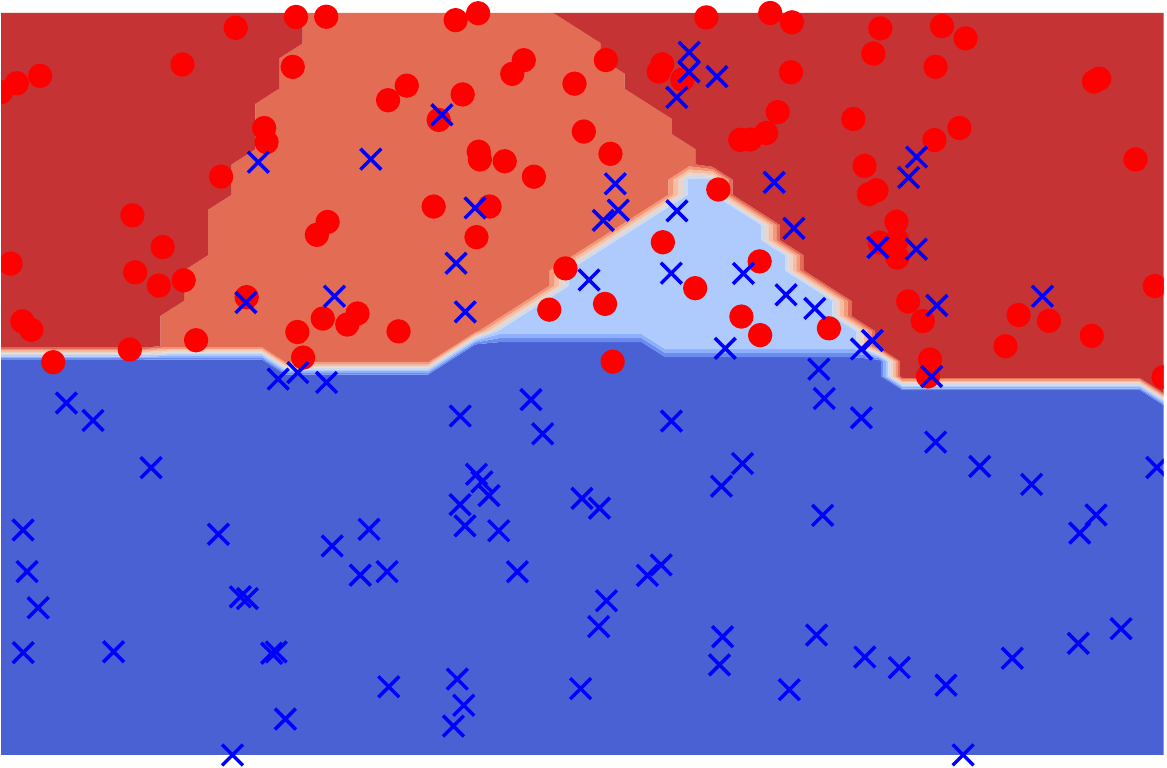}\\
(f)
\end{tabular}%
\vskip-0.5\baselineskip
\caption{\label{fig:example}
Example problem to classify points in 2D with overlapping distributions. (a) Data points. (b) Classification model trained with the expected loss~\eqref{exp-loss}: the optimal solution tends to deterministic prediction. (c) Same model trained with a 10-sample bound~\cite{Raiko-14}, closer to the marginal likelihood~\eqref{marg-loss}. The model fits the uncertainty of the data.
(d-f) Examples of the ensemble members obtained by fixing a particular realization of the noise variables $Z$ in all layer for the model in (c).
}
\end{figure*}
Let us remark that the conditions are not met in the following cases:
\begin{itemize}
\item The pre-activation does not have some degree of freedom, \eg, there is no bias term. This case is obvious.
\item Pre-activations of different outputs do not have independent degrees of freedom per output. \eg, in a convolutional network we can suppress all the noises by scaling them down, however since the noises of all pre-activations are independent (not spatially identical), we cannot represent the bias from $z^*$ with the convolution bias which is spatially homogenous.
\item The network uses parameter sharing in some other way, \eg, a Siamese network for matching. 
\end{itemize}

These exceptions actually imply that training with expected loss a convolutional network in~\cref{sec:experiments} tends to be deterministic but will not collapse to a fully deterministic state as it is suboptimal.

\subsection{PSA Derivation and Properties}
\Pdelta*
\begin{proof}
Starting from LHS of~\eqref{direct-prop-eq1} we take the sum in $x^k_i$ explicitly. The factors involving $x^k_i$ (after cancellation of the denominator with the respective term in $p(x)$) are
\begin{align}\label{term-to-sum}
p(x^{k+1} | x^k) J^k_i(x^{k}),
\end{align}
where we omit the dependance of $J^k_i$ on $x^{1 \dots k-1}$, not relevant for the sum in $x^k_i$. Using the oddness of $J^k_i$, the sum of~\eqref{term-to-sum} in $x^k_i$ can be written as
\begin{align}\label{derandomize_x}
\textstyle p(x^{k+1} | x^k) J^k_i(x^k) + p(x^{k+1} | x^k_{\downarrow i}) J^k_i(x^{k}_{\downarrow i}) 
= \big(p(x^{k+1} | x^k) - p(x^{k+1} | x^k_{\downarrow i})\big) J^k_i(x^{k}).
\end{align}
Though this expression formally depends on $x^k_i$, it is by design invariant to $x^k_i$. Thus $x^k_i$ has been {\em derandomized}.
We multiply~\eqref{derandomize_x} with $1 = \sum_{x^k_i}p(x^k_i | x^{k-1})$ to obtain
\begin{align}\label{derandomize-expect}
\textstyle \sum_{x^k_i}p(x^k_i | x^{k-1}) \big( p(x^{k+1} | x^k) - p(x^{k+1} | x^k_{\downarrow i}) \big) J^k_i(x^{k}),
\end{align}
which allows to put this expression back as a part of the joint sum in $x$ in~\eqref{direct-prop-eq1}. We thus obtain in~\eqref{direct-prop-eq1}:
\begin{align}\label{prop-direct-proof-eq2}
\textstyle \sum_{i, x} p(x^{1 \dots k}) p(x^{k+2 \dots L} | x^{k+1}) 
\big( p(x^{k+1} | x^k) - p(x^{k+1} | x^k_{\downarrow i}) \big) J^k_i(x) f(x^L).
\end{align}
Recalling that $p(x^{k+1} | x^k) = \prod_{j} p(x^{k+1}_j | x^k)$, the product linearization~\eqref{approx-prod} gives
\begin{align}
\hskip-3pt \textstyle p(x^{k+1} | x^k){-}p(x^{k+1} | x^k_{\downarrow i}) \approx \sum_j \frac{p(x^{k+1} | x^k)}{p(x^{k+1}_j | x^k)} \Delta^{k+1}_{i, j}(x),
\end{align}
where the division is used to represent the factor that needs to be excluded.
Substituting this into~\eqref{prop-direct-proof-eq2} we get the resulting expression in~\eqref{direct-prop-eq1}. Finally, $\Delta^{k+1}_{i, j}(x)$ is odd in $x^{k+1}_j$ and $J^k(x)$ does not depend on $x^{k+1}_j$, and therefore $J^{k+1}_{j}$ is odd in $x^{k+1}_j$.
\end{proof}

\paragraph{Case $\bm{l{=}L}$} 
We now prove the expression~\eqref{last-layer-case} claimed as the derandomization result for the last layer when propagating $J^L$. 
Let us consider gradient in parameters of the last layer, $g^L$. In this case, the gradient expression~\eqref{d^l} becomes 
\begin{align}
\textstyle \sum\limits_{x^{1\dots L-1}} p(x) \sum_i \sum\limits_{x^{L}} \frac{d^L_i(x)}{p(x^L|x^{L-1})} f(x^L).
\end{align}
Then derandomization over $x^L$ takes a particular simple form, which can be described by the following standalone lemma.
%
\begin{restatable}{lemma}{Lsinglelayer}\label{L:singlelayer}
Let $X_i$ be independent $\Bool$-valued Bernoulli with probability $p(x_i; \theta)$ for $i=1\dots n$ and $f\colon \Bool^n \to \Real$. Let $x$ be a joint sample and $x_{\downarrow i}$ denote the joint state with $x_i$ flipped. Then
\begin{align}\label{last-layer-grad}
\textstyle \sum_{i}\frac{\partial }{\partial \theta} p(x_i; \theta) \big(f(x) - f(x_{\downarrow i})\big)
\end{align}
is an unbiased estimate of $\frac{\partial}{\partial \theta} \sum_{x}p(x; \theta) f(x)$.
\end{restatable}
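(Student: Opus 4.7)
The plan is to show unbiasedness directly: compute $\E_{x\sim p}$ of the stochastic estimator \eqref{last-layer-grad} and match it to the true gradient after one application of the product rule. The structural observation I would exploit is that $\partial p(x_i;\theta)/\partial\theta$ is an odd function of $x_i\in\Bool$, since $p(x_i{=}1;\theta)+p(x_i{=}{-}1;\theta)=1$ implies $\dot p(x_i{=}1)=-\dot p(x_i{=}{-}1)$. The difference $f(x)-f(x_{\downarrow i})$ is manifestly odd in $x_i$ as well, so their product is \emph{even} in $x_i$, i.e.\ invariant under the flip $x_i\mapsto -x_i$. This is exactly the derandomization mechanism used in \cref{prop-delta}, specialized to the single layer.

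First, using independence $p(x)=p(x_i)p(x_{-i})$, I would rewrite the expectation of the estimator as
\[
\textstyle \sum_i \sum_{x_{-i}} p(x_{-i}) \sum_{x_i} p(x_i)\, \tfrac{\partial p(x_i;\theta)}{\partial \theta}\,\bigl(f(x)-f(x_{\downarrow i})\bigr).
\]
By the oddness argument, the inner summand in $x_i$ is independent of $x_i$, so the inner sum collapses (using $\sum_{x_i}p(x_i)=1$) to the single value $\alpha_i\,(f_+ - f_-)$, where I abbreviate $\alpha_i:=\tfrac{\partial p(x_i{=}1;\theta)}{\partial\theta}$ and $f_\pm:=f(\pm 1,x_{-i})$.

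Next, I would expand the true gradient using the product rule on $p(x;\theta)=\prod_j p(x_j;\theta)$:
\[
\textstyle \frac{\partial}{\partial\theta}\sum_x p(x)f(x) = \sum_i \sum_{x_{-i}} p(x_{-i}) \sum_{x_i} \tfrac{\partial p(x_i;\theta)}{\partial\theta} f(x_i,x_{-i}),
\]
and evaluate the inner sum explicitly as $\alpha_i f_+ + (-\alpha_i) f_- = \alpha_i(f_+ - f_-)$. This agrees term-by-term with the expression obtained for the expected estimator, establishing unbiasedness.

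The only real care required is the sign bookkeeping through the oddness identities; there is no analogue of the product-of-probabilities linearization from \cref{prop-delta} here, because the last layer contributes $f(x^L)-f(x^L_{\downarrow i})$ directly rather than a difference of products. So I do not anticipate a substantive obstacle—the lemma is essentially the clean, exact base case that \PSA recovers when $l=L$.
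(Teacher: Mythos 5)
Your proof is correct and follows essentially the same route as the paper's: both rest on explicitly summing out $x_i$ and on the identity $\tfrac{\partial}{\partial\theta}p(x_i{=}{-}1;\theta)=-\tfrac{\partial}{\partial\theta}p(x_i{=}1;\theta)$, which the paper uses implicitly via $\tfrac{\partial}{\partial\theta}\bigl(p(x_i;\theta)f(x)+(1-p(x_i;\theta))f(x_{\downarrow i})\bigr)$ and you make explicit as an oddness/evenness argument. The only difference is organizational — the paper transforms the true gradient into the derandomized form and then reinserts $\sum_{x_i}p(x_i)=1$ to read it off as an expectation, whereas you collapse both sides to the common value $\alpha_i(f_+-f_-)$ and match them — which is a cosmetic reordering of the same computation.
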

Analogous results exist in the literature, \eg~\cite{cong2018go} considers general discrete and continuous distributions. 
\begin{proof}
We differentiate the product of probabilities in the expectation:
\begin{align}
\textstyle \frac{\partial }{\partial \theta} \sum\limits_x \prod\limits_i p(x_i; \theta) f(x)
= \sum\limits_{x} \sum\limits_{i} \frac{p(x)}{p(x_i)} \frac{\partial }{\partial \theta} p(x_i;\theta) f(x).
\end{align}
We then compute the sum over $x_i$ for each summand $i$ explicitly, obtaining
\begin{align}
\notag \textstyle \sum\limits_i \sum\limits_{x_{\lnot i}} p(x_{\lnot i}) \frac{\partial }{\partial \theta} \Big( p(x_i;\theta) f(x) + (1- p(x_i;\theta)) f(x_{\downarrow i}) \Big),
\end{align}
where $x_{\lnot i}$ denotes excluding the component $i$. Since the expression in the brackets is invariant of $x_i$, we multiply by the factor $1 = \sum_{x_i} p(x_i)$ and get
\begin{align}\label{last-layer-grad-sum}
& \textstyle \sum_{x} p(x) \sum_i \big( f(x) - f(x_{\downarrow i}) \big) \frac{\partial }{\partial \theta} p(x_i).
\end{align}
Thus~\eqref{last-layer-grad} is a single sample unbiased estimate of~\eqref{last-layer-grad-sum}. 
\end{proof}

\Punbiased*
\begin{proof}
The product linearization is not used anywhere in the method when we have a single binary unit in each hidden layer with $l<L$, nor it is used in the last layer. We therefore make no approximations and the 1-sample estimate is unbiased.
\end{proof}

\subsection{Last Layer Enhancement}
In \cref{alg:PSA}~\cref{last-layer} we defined $E$ so that the gradient in  $\theta^{L+1}$ is the common stochastic estimate $\frac{\partial f(x^L;\theta^{L+1})}{\partial \theta^{L+1}}$. We now propose an improvement to this estimate. 
Intuitively, we want to utilize the values $f(x^{L}_{\downarrow i}; \theta^{L+1})$ for all $i$ that we compute anyway. 

Estimating $\sum_x p(x) \frac{\partial }{\partial \theta} f(x^L; \theta)$ means to estimate the expected value of the function $g(x^L) = \frac{\partial }{\partial \theta} f(x^L; \theta)$, without further derivatives involved.
We have the following lemma that applies derandomization over units in the last layer.

\begin{restatable}{lemma}{Lexpectedval}\label{Bernoulli-lemma-2}
Let $X_i$ be independent $\Bool$-valued Bernoulli with probability $p(x_i)$ for $i=1\dots n$ and $g\colon \Bool^n \to \Real$. Let $x$ be a joint sample. Then
\begin{align}
\textstyle g(x) + \gamma \sum_{i}\big(g(x_{\downarrow i}) - g(x)\big) (1-p(x_i))
\end{align}
is an unbiased estimate of $\E_X[g(X)]$. 
\end{restatable}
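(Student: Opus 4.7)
My plan is to treat the term $\gamma \sum_i (g(X_{\downarrow i}) - g(X))(1-p(X_i))$ as a control variate, i.e.\ to show that each summand has zero expectation. By linearity of expectation, adding any scalar multiple of a mean-zero random variable to $g(X)$ preserves the property that it is an unbiased estimate of $\E_X[g(X)]$, and the base term $g(X)$ is trivially unbiased. So the entire argument reduces to verifying $\E[(g(X_{\downarrow i}) - g(X))(1-p(X_i))] = 0$ for every fixed $i$.

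To verify this, I would exploit the independence of the coordinates by conditioning on $X_{\lnot i}$, the collection of all components other than the $i$-th. Once $X_{\lnot i}$ is frozen, the conditional expectation becomes a two-term sum over $X_i \in \{-1,+1\}$. Writing $p_+ = \Pr(X_i=+1)$, $p_- = 1-p_+$, and denoting by $g^\pm$ the two possible values of $g$ obtained by inserting $X_i = \pm 1$ into $X_{\lnot i}$, the two summands both carry the common weight $p(X_i)(1-p(X_i)) = p_+ p_-$ regardless of which state of $X_i$ is sampled, while the difference $g(X_{\downarrow i}) - g(X)$ flips sign under the relabeling $X_i \mapsto -X_i$. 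Thus the two contributions cancel exactly, the conditional expectation is $0$, and the tower property lifts this to the unconditional expectation.

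The main thing to be careful about is simply bookkeeping the sign of $g(X_{\downarrow i}) - g(X)$ when $X_i$ toggles between $+1$ and $-1$: it is this sign flip, combined with the symmetric factor $p_+ p_-$, that produces the cancellation. There is no genuine obstacle and no approximation is required. In particular the identity is exact for any value of $\gamma$, which is consistent with interpreting $\gamma$ as a free control-variate coefficient to be chosen (possibly per-coordinate) to reduce variance, in analogy with how \cref{Bernoulli-lemma-2} is then used to enhance the final-layer estimate of \cref{alg:PSA}.
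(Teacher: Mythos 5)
Your proof is correct and follows essentially the same route as the paper's: both arguments condition on $x_{\lnot i}$ and perform the explicit two-term sum over the states of $x_i$, which makes the correction term vanish in expectation for every $i$ and every $\gamma$. The only cosmetic difference is that the paper presents this as recovering the exact conditional expectation (a ``perfect baseline'' at $\gamma=1$) and reads off the zero-mean identity as a byproduct, whereas you verify the zero-mean property of the control variate directly via the sign-flip cancellation.
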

%
\begin{proof}
We expand for some fixed $i$:
\begin{align}
\textstyle \E_X[g] = \sum_{x}p(x) g(x) &\textstyle = \sum_{x_{\lnot i}}p(x_{\lnot i})\sum_{x_i} p(x_i) g(x)\\
\textstyle &\textstyle =\sum_{x_{\lnot i}}p(x_{\lnot i})\Big(p(x_i) g(x) + p(-x_i) g(x_{\downarrow i}) \Big).
\end{align}
Observe that the bracket does not depend on $x_i$. We can therefore rewrite the expression as
\begin{align}
\textstyle &\textstyle \sum_{x}p(x)\Big(p(x_i) g(x) + p(-x_i) g(x_{\downarrow i}) \Big) \\
\textstyle = &\textstyle \sum_{x}p(x)\Big((1-p(-x_i)) g(x) + p(-x_i) g(x_{\downarrow i}) \Big) \\
\textstyle = & \textstyle \sum_{x}p(x) \Big[ g(x) + \big( g(x_{\downarrow i}) -g(x)) p(-x_i)  \Big].
\end{align}
This shows that
\begin{align}\label{var-red-last-estimate}
\sum_{x}p(x) \big( g(x_{\downarrow i}) -g(x)) p(-x_i) = 0.
\end{align}
So it serves as a variance reduction baseline. Moreover, if we have access to the two values $g(x_{\downarrow i})$ and $g(x)$, it is the perfect baseline, as adding it results in the complete sum in $x_i$. Taking the average over $i$, \ie choosing $\gamma = \frac{1}{n}$ gives an estimate with a decreased variance, however it is not straightforward which value of $\gamma$ gives the best variance reduction as estimates~\eqref{var-red-last-estimate} for all $i$ are not independent.
\end{proof}
Setting $\gamma = \frac{1}{n}$ is a natural choice that guarantees a reduction in variance.
This improvement can be implemented as a simple replacement of the last line of the algorithm to:
\begin{subequations}
\begin{align}
& \textstyle \pi^L_i = 1 - \detach( (x^L_i+1)/2 + x^L_i F_Z(a^L_i))\\
& \textstyle E = f(x^L; \theta) + \sum\limits_i (f(x^L;\theta){-}f(x^L_{\downarrow i};\theta))(q^L_i{-}\frac{\pi^L_i}{n} ).
\end{align}
\end{subequations}

\subsection{Straight-Through}
\PST*
\begin{proof}
Consider the last layer. The linear approximation to $f$ at $x^L$ allows to express
\begin{align}\label{STE-f}
& \textstyle f(x^L_{\downarrow i}) \approx f(x^L) + \frac{\partial f(x^L)}{\partial x^L_i} (-2 x^L_i);\\
& \textstyle df_i = f(x^L) - f(x^L_{\downarrow i}) \approx 2 x^L_i \frac{\partial f(x^L)}{\partial x^L_i}.
\end{align}
We use the expression for the derivative of layer probabilities in parameters~\eqref{d^l}
\begin{align}
D^l_i(x) & \textstyle = p_Z(a^l_i) x^l_i \frac{\partial}{\partial \theta^l} a^l_i(x^{l-1}; \theta^l)
\end{align}
and its oddness in $x^l_i$. The gradient in parameters of the last layer becomes
\begin{align}
\textstyle \sum_i D^L_i df_i = \sum_j p_Z(a^L_i) \frac{\partial}{\partial \theta^L} a^L_i(x^{L-1}; \theta^L) 2 \frac{\partial f(x^L)}{\partial x^L_i},
\end{align}
where we 
have canceled $x^L_i x^L_i = 1$.

%

With the linearization of $F_Z$ at $a^k$, the Jacobians $\Delta$ express linearly as follows:
\begin{align}
\textstyle \Delta^{k}_{i, j} = x^k_j (F_Z(a^k_{j}){-}F_Z(a^k_{j\downarrow i}))
\textstyle  \approx x^{k}_{j} p_Z(a^k_j) \big(a^k_j - a^k_{j\downarrow i}\big),
\end{align}
where $p_Z$ is the noise density, \ie derivative of $F_Z$, and we have denoted $a^k_{j\downarrow i} = a^k_{j}(x^{k-1}_{\downarrow i}; \theta^k)$.
Because $a^k$ is linear in $x^{k-1}_i$, we have similarly to~\eqref{STE-f} that
\begin{align}
\textstyle a^k_j - a^k_{j\downarrow i} = 2 x^{k-1}_i \frac{\partial}{\partial x^{k-1}_i} a^k_j(x^{k-1}).
\end{align}
This allows to express
\begin{align}
\textstyle  \Delta^{k}_{i, j} = 2 x^{k}_{j} x^{k-1}_i p_Z(a^k_j)\frac{\partial}{\partial x^{k-1}_i} a^k_j(x^{k-1}) = 2 x^{k}_{j} x^{k-1}_i \frac{\partial}{\partial x^{k-1}_i} F_Z(a^{k} (x^{k-1})).
\end{align}
%
Note the occurrence of the formal derivative of $F_Z(a^{k} (x^{k-1}))$ in $x^{k-1}_i$, that will be the only derivative that is used in the ST algorithm.

Finally observe that for the derivative in parameters of layer $l$, estimated with the product $D^l \Delta^{l+1} \cdot \dots \cdot \Delta^{L} df$ in PSA, for each $k\geq l$ the factors $x^k_{i_k}$ appear exactly twice and thus cancel.
We recover the product of Jacobians without extra multipliers by $x^k_{i_k}$, which can be implemented with automatic differentiation as proposed in~\cref{alg:ST}.
\end{proof}
The connection to STE pointed out by~\cite{Tokui-17} can be seen as a special case of our construction for a single hidden layer, where local expectation gradients~\cite{Titsias-15} apply and it suffices to linearize $f$.

\subsection{Complexity and Efficient Implementation of PSA}\label{sec:impl}
We have made the following complexity claim.
\Pcomplexity*
For fully connected networks this complexity is indeed linear in the total number of inputs and weights. For convolutional networks it is a bit more tricky because convolutions can generate big output tensors. In this case the complexity can be stated as linear in the total number of inputs, weights and hidden units.

We prove the claim by giving the algorithms how to implement all necessary computations with the same complexity as standard back-propagation.

First, we observe that the numbers $q$ in~\cref{alg:PSA} are only used to determine the derivative and set to zero value by line~\eqref{q^k-detach}. The pre-activations $a$ in~\cref{alg:PSA} are of the same form as in the standard network, evaluating $F_Z$ component-wise does not increase complexity. Therefore backpropagation for these parts takes the same time. We only need to additionally compute the matrices $\Delta$ on the backward pass and explicitly implement the transposed multiplication (resp. transposed convolution) with them to define a custom backprop operation for the update~\eqref{def2:q^k}. 

\paragraph{Fully Connected Layers}
Consider the case of a fully connected layer with pre-activation $a(x) = W x + b$.
Then $\Delta_{ij}$ has the same size as the matrix $W$. Recall it expresses as
\begin{align}\label{delta-summands}
\textstyle \Delta_{i,j} = x^k_j \Big(F_Z( a_{j}) - F_Z( a_{j}(x^{k-1}_{\downarrow i}))\Big).
\end{align}

The first summand can be computed in linear time once $x^k$ and $a$ are known. The second summand is slightly more complex as it involves pre-activations for inputs with a flipped component $i$. For linear layers we have:
\begin{align}
a_{j}(x^{k-1}_{\downarrow i}) = W x^{k-1}_{\downarrow i} + b = a_j - 2 W_{j,i} x_i,
\end{align}
\ie all the numbers $a_{j}(x^{k-1}_{\downarrow i})$ can be computed in time $O(n_{k} n_{k-1})$ for a matrix $W \in \Real^{n_{k} \times n_{k-1}}$.
It follows that computing matrix $\Delta$ takes $O(n_k n_{k-1})$ time, the same as the matrix-vector multiplication $W x^{k-1}$ for forward pass or the transposed multiplication for the backward pass.

\paragraph{Convolutional Layer}
With a convolutional layer, the implementation is more tricky, because computing $\Delta$ in a matrix form is no longer efficient. Consider the convolution pre-activation
\begin{align}
a_{o,j} = \sum_{c, @i} w_{o,c,i-j} x_{c, i},
\end{align}
where $c$ and $o$ are input and output channels, $i$ and $j$ are 2d indices of spatial locations and $@i$ denotes that the range of $i$ is given by the output location $j$ and the weight kernel size: $j-h/2\leq i \leq j + h/2$. This notation makes it more easier to match with the equations in the matrix form.

For the gradient in the standard network, a transposed convolution occurs:
\begin{align}
 \frac{\partial}{\partial x^{k-1}_{c,i}} = \sum_{o, @j} w_{o,c,j-i} \frac{\partial}{\partial x^k_{o,j}}.
\end{align}

For the gradient in PSA, we need to implement the following sum with $\Delta$:
\begin{align}\label{delta-t-prod}
 \frac{\partial}{\partial q^{k-1}_{c,i}} = \sum_{o, j} \Delta_{o,c,j,i} \frac{\partial}{\partial q^k_{o,j}},
\end{align}
where in the case of convolution and symmetric noise, $\Delta$ is given by
\begin{align}\label{delta-conv-summmands}
\Delta_{o,c,j,i} = x^k_{o,j} \Big( F_Z( a_{o,j}) - F_Z( a_{o,j}(x^{k-1}_{c, \downarrow i})) \Big).
\end{align}
Using that $a_{o,j}(x^{k-1}_{c, \downarrow i})$ itself is given by the convolution, we have 
\begin{align}
a_{o,j}(x^{k-1}_{c, \downarrow i}) = 
\begin{cases}
a_{o,j} - 2 w_{o,c,i-j} x^{k-1}_{c,i}, & \IF -h/2\leq i \leq h/2;\\
a_{o,j}, & \OTHERWISE.
\end{cases}
\end{align}
Therefore $\Delta_{o,c,j,i}$ has the same support in indices $i,j$ as the convolution, but it is different in that it cannot be represented as just a function of $i-j$. It can be interpreted as a convolution with a kernel, which is spatially varied, \ie at all locations a different kernel is applied. 

Due to the structure of $\Delta$, the sum~\eqref{delta-t-prod} takes the same range of indices as the convolution, we may write:
\begin{align}\label{delta-t-conv}
 \frac{\partial}{\partial q^{k-1}_{c,i}} = \sum_{o, @j} \Delta_{o,c,j,i} \frac{\partial}{\partial q^k_{o,j}}.
\end{align}

The elements of the kernel $\Delta_{o,c,j,i}$ are computed in $O(1)$ time, therefore the full backprop operation~\eqref{delta-t-prod} has the same complexity as backprop with standard network (assuming small kernel size, where convolution uses straightforward implementation and not FFT).


We take one step further, to show that convolution with $\Delta$ for logistic noise needs literally the same amount of operations.
Computing the convolution~\eqref{delta-t-prod} with the first summand of $\Delta$ is easy. As it does not involve the index $i$, it reduces to multiplication in the output space (indices $o,j$), summation over $o$ and a convolution with just the support indicator in $j$.

It remains to compute the convolution~\eqref{delta-t-conv} with the second summand of $\Delta$, \ie:
\begin{align}\label{delta-t-conv}
\sum_{o, @j} x^k_{o,j} F_Z(a_{o,j} - 2 w_{o,c,i-j} x^{k-1}_{c,i}) g_{o,j},
\end{align}
where we denoted the gradient \wrt the output as $g$. The factor $x^k_{o,j}$ is easily accounted for, by introducing $\tilde g_{o,j} = g_{o,j} x^k_{o,j}$. We can further expand for logistic noise:
\begin{align}\label{delta-t-conv}
\sum_{o, @j} \frac{1}{1 + e^{a_{o,j}} e^{-2 w_{o,c,i-j} x^{k-1}_{c,i}}} \tilde g_{o,j}.
\end{align}
Since $x^{k-1}_{c,i}$ takes only two possible values, for each input gradient coordinate $c,i$ we need the convolution with $e^{\pm 2 w_{o,c,i-j}}$. 
We can precompute $A_{o,j} = \exp(a_{o,j})$, $W^{\pm}_{o,c,l} = \exp(\pm 2 w_{o,c,i-j})$, \ie the expensive $\exp$ operations need to be performed only for the output and the kernel alone, and not inside the convolution. For the dominant complexity part involving $c,o,i,@j$ indices, we only need to compute and aggregate
\begin{align}\label{ratio_conv}
\sum_{o, @j} \frac{\tilde g_{o,j}}{1 + A_{o,j} W^{\pm}_{o,c,i-j}}.
\end{align}
Compared to standard convolution, this costs only one extra addition and division operation. We call the operation~\eqref{ratio_conv} a {\em ratio convolution} and implemented it in CUDA. Since our implementation is not fully optimized and we need to load twice as much data ($g$ and $A$) for the input and $W^{\pm}$ for the ``kernel'', the actual run-time is 3-4 times slower than that of cuDNN standard convolution.
%
%
%

\paragraph{Head Function}
For the head function $f(x^L)$ that is a composition of a linear layer $W x^L +b$ and some fixed function $h$, in order to compute $f(x^L) - f(x^L_{\downarrow i})$ we need again to form all pre-activations $a_j(x^L_{\downarrow i})$ that takes $O(n_L K)$ time, where $K$ is the number of classes (more generally, the dimensionality of the network output). This is of the same size as the matrix $W$. Assuming that the final loss function $h \colon \Real^K \to \Real$ (\eg, cross-entropy) takes time $O(K)$, the computation in the last layer has complexity $O(n_L K)$ as we need to call this function for all input flips. This is however still of the same complexity as size of the matrix $W$. 

\section{Details of Experiments and Additional	 Comparisons}\label{sec:moreexp}

In this section we describe details of the experimental setup and measuring techniques and offer some auxiliary experiments.

\subsection{Gradient Estimation Accuracy}
We use the simple problem instance shown in \cref{fig:example}(a) with 2 classes and 100 training points per class in 2D and a classification network with 5-5-5 Bernoulli units. The data was generated as follows. Points of class 1 (resp. 2) are uniformly distributed above $y=0$ (resp. below $y= \cos(x)$) for $x \in [-\pi/2, \pi/2]$. The implementation is available in {\tt gradeval/expclass.py}.
We have experimented with several configurations varying the number of units and layers. Generally, with a smaller number of units {\sc arm} gets more accurate and {\sc st} gets less accurate, but the overall picture stays. We therefore demonstrate the comparison on the 5-5-5 configuration.

The training progress is shown in~\cref{fig:toy-train}. In this problem unbiased estimators perform well and an extra variance reduction of PSA is not essential. 
To measure RMSE and cosine similarity errors in~\cref{fig:rmse}, we collect $T = 10^4$ total samples for each estimator. For each value of the number of samples $M$ shown on the x-axis, we calculate the mean and variance of an $M$-sample estimator by using the $T/M$ sample groups to estimate these statistics. The same $T$ samples are used to estimate the values for varying $M$. Towards $M\rightarrow T$ the estimates become more noisy, so the rightmost parts of the plots shouldn't be considered reliable.

Comparisons with additional unbiased methods is proposed in~\cref{fig:rmse1} and \cref{fig:accuracy1}. We compare to the following techniques: 1. \REINFORCE with the constant input-dependent baseline set to the true expected function value of the loss objective per data sample. This choice represents the best of what one can expect to get with input-dependent baselines constant or trained with a neural network as NVIL~\cite{mnih14}. 2. MuProp~\cite{MuProp}, which uses a linear baseline around deterministically propagated points. The constant part of the baseline may also optionally set to the input-dependent true value. It is seen from the results that the linear baseline of MuProp does not improve accuracy in deep layers, possibly connected to the fact that the loss depends only on the state of the last layer and thus its linear approximation using states of the first layers is not helpful. It is also seen in~\cref{fig:accuracy1} that the linear baseline of MuProp degrades at 2000 epochs. For this experiment we used only $T=2000$ samples.

%
\subsection{Deep Learning}
\begin{figure*}[t]
\small
\setlength{\figwidth}{0.33\textwidth}
\centering
\begin{tabular}{cccc}
&\small No Augmentation & \small Affine Augmentation & \small Flip \& Crop Augmentation \\
\parbox{3mm}{\rotatebox[origin=c]{90}{\small Validation Loss}}&
\begin{tabular}{c}
\includegraphics[width=0.33\linewidth]{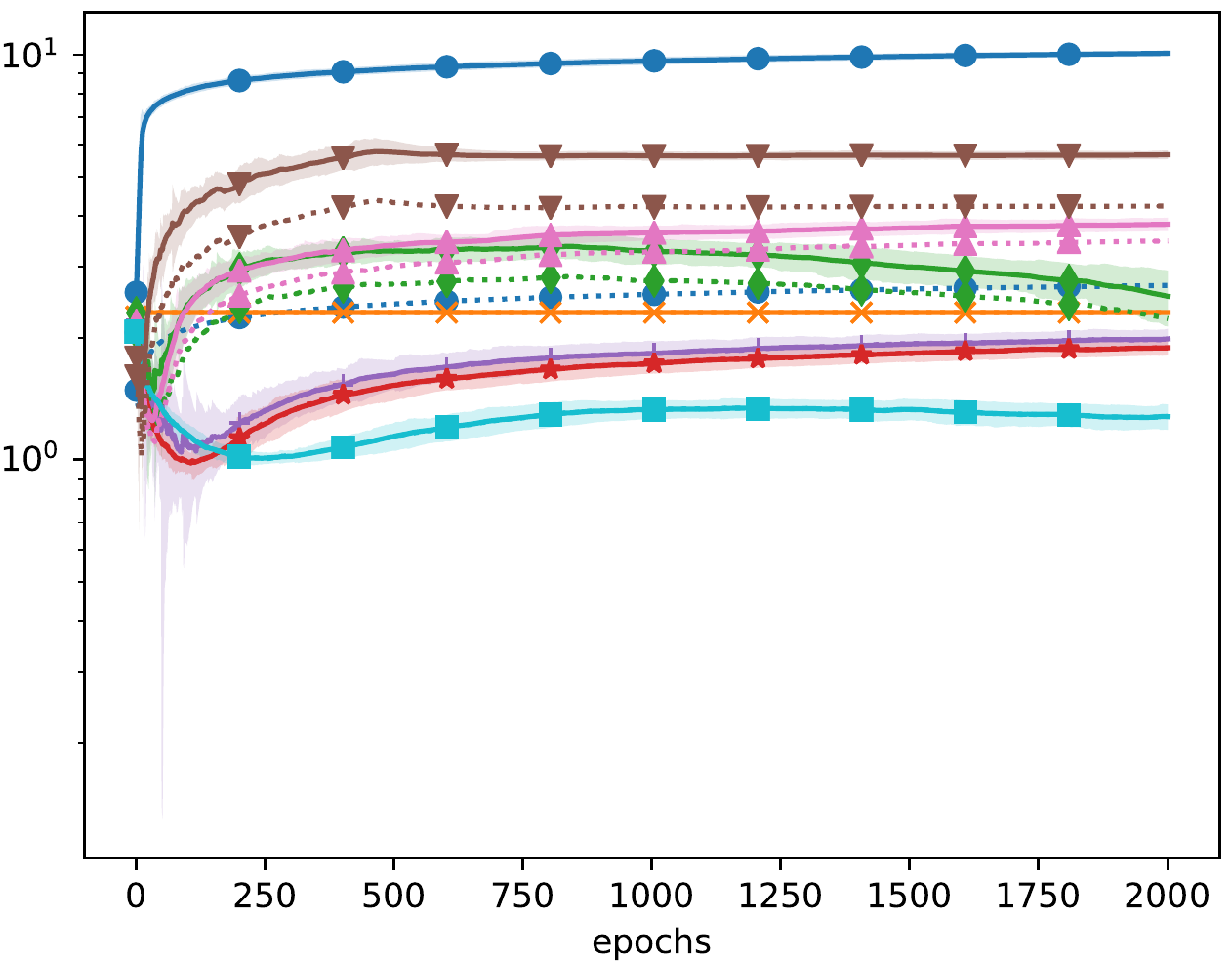}%
\end{tabular}&\begin{tabular}{c}
\includegraphics[width=0.33\linewidth]{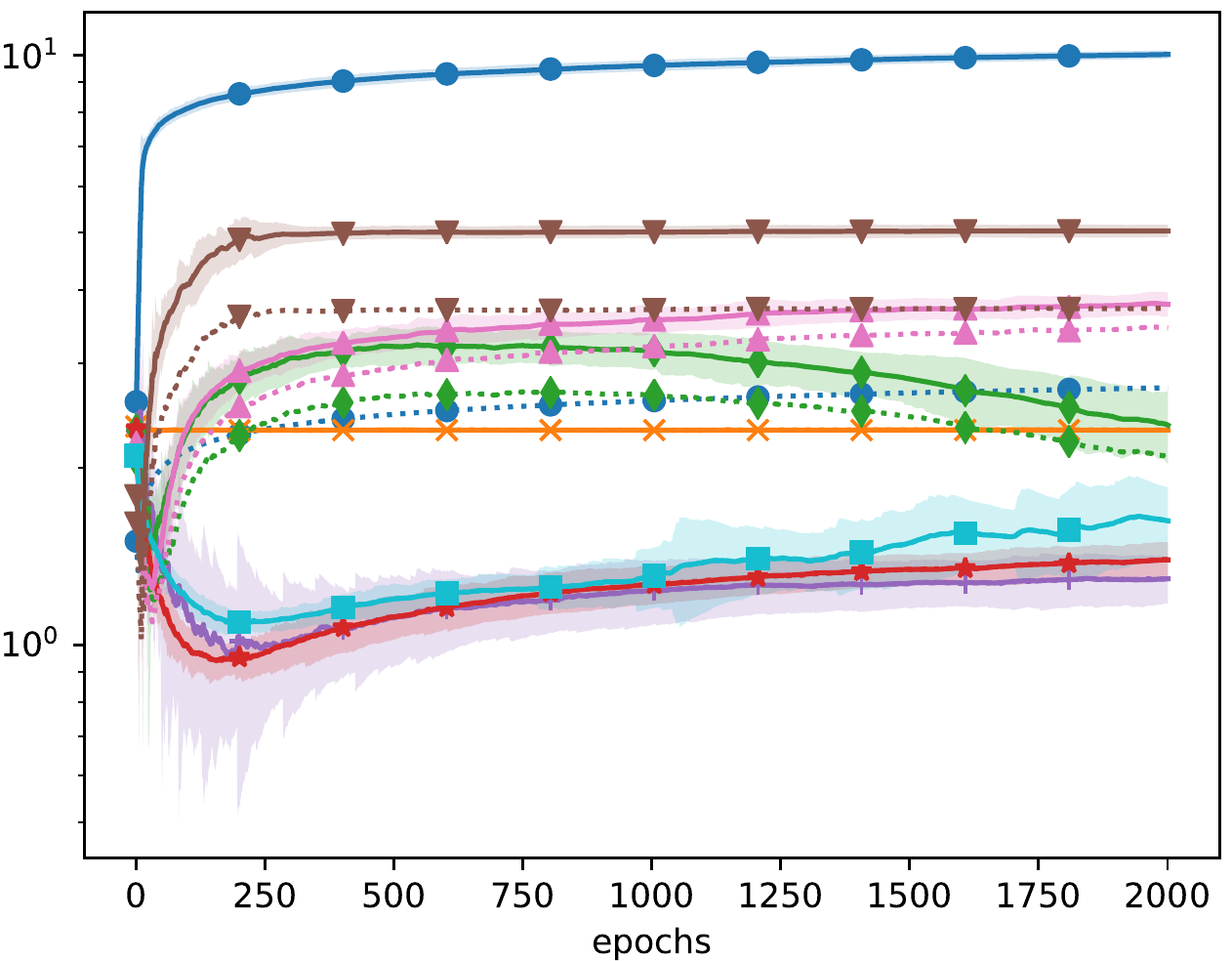}%
\end{tabular}&\begin{tabular}{c}
\includegraphics[width=0.33\linewidth]{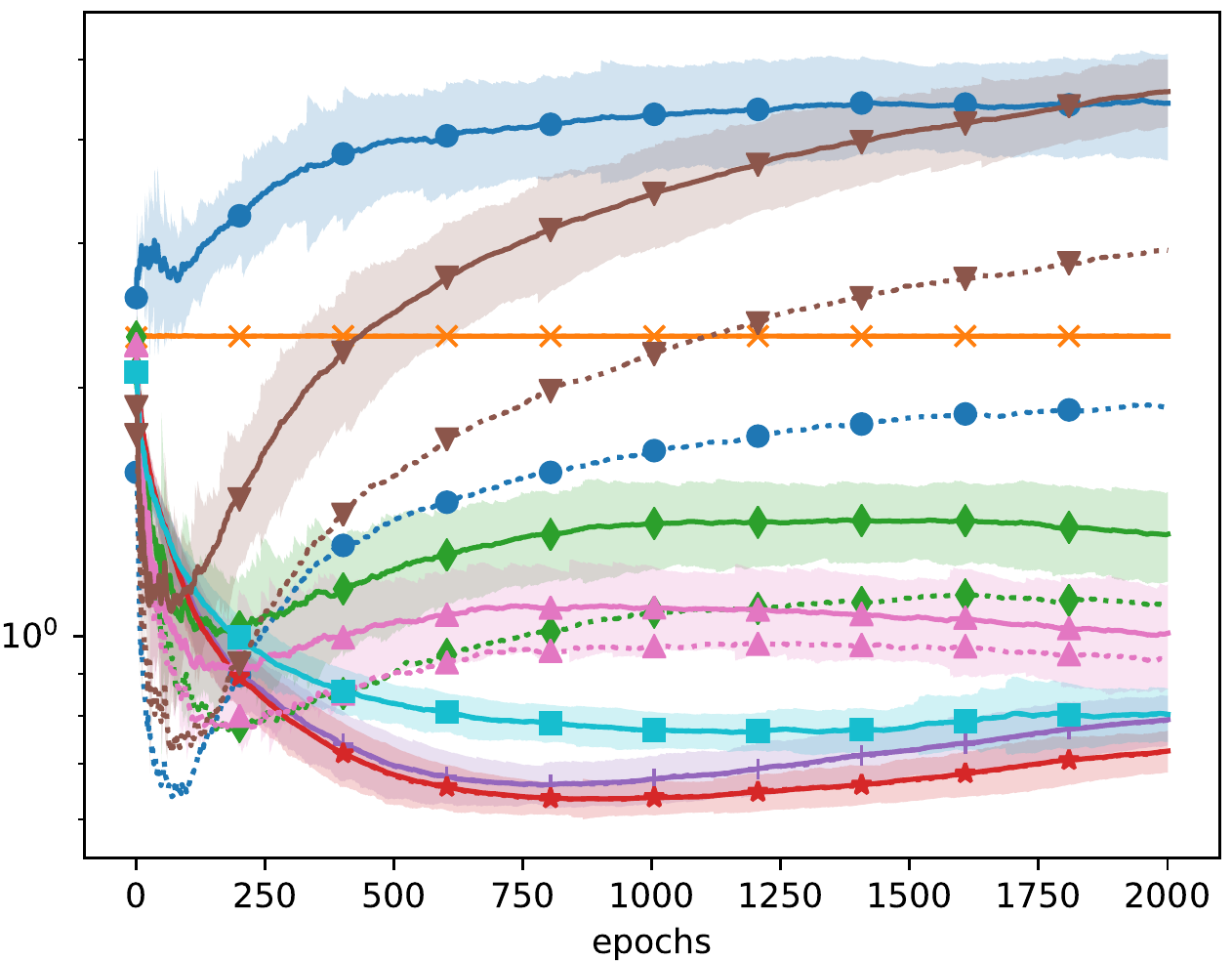}%
\end{tabular}\\
\parbox{3mm}{\rotatebox[origin=c]{90}{\small Training Accuracy}}&
\begin{tabular}{c}
\includegraphics[width=0.33\linewidth]{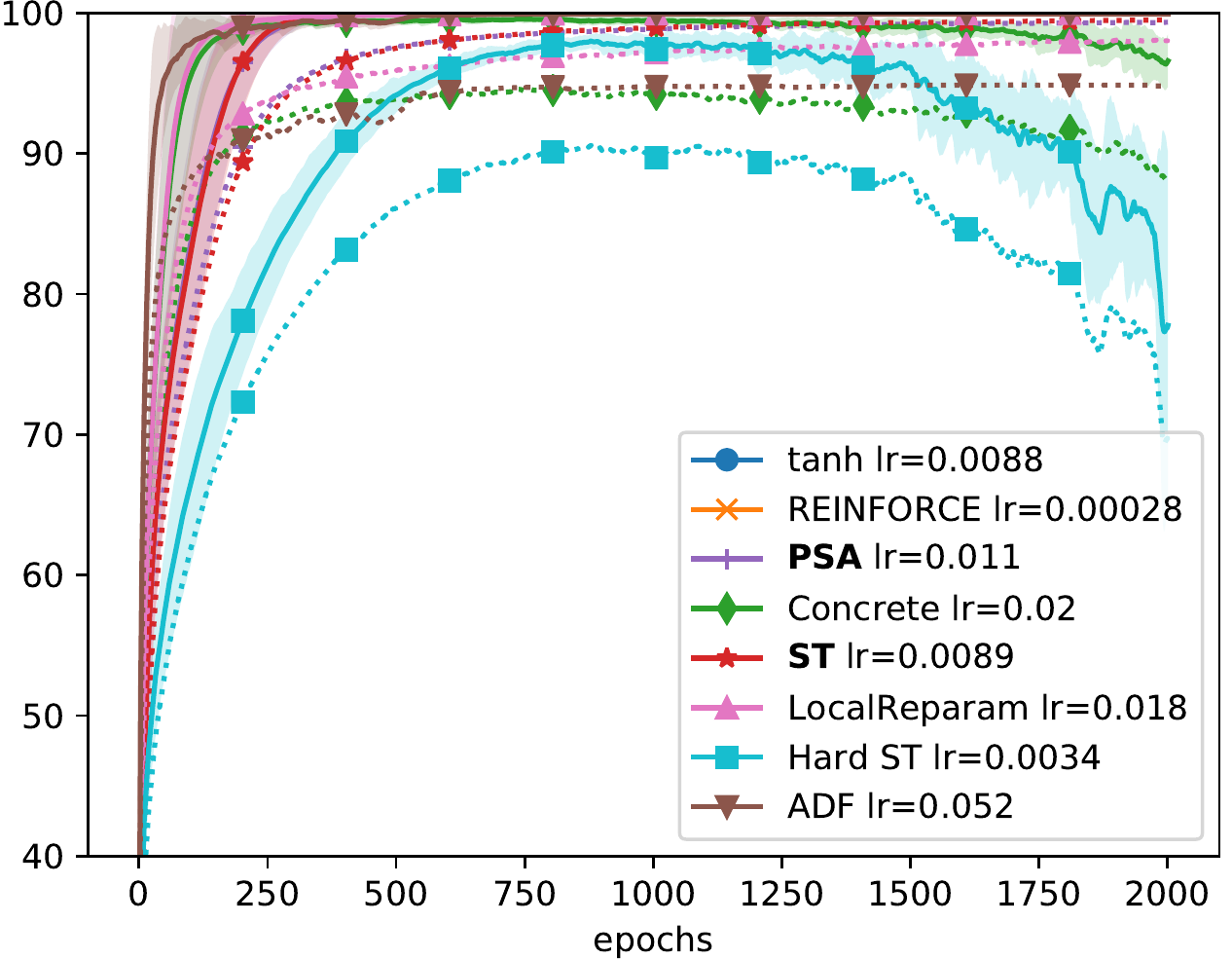}%
\end{tabular}&\begin{tabular}{c}
\includegraphics[width=0.33\linewidth]{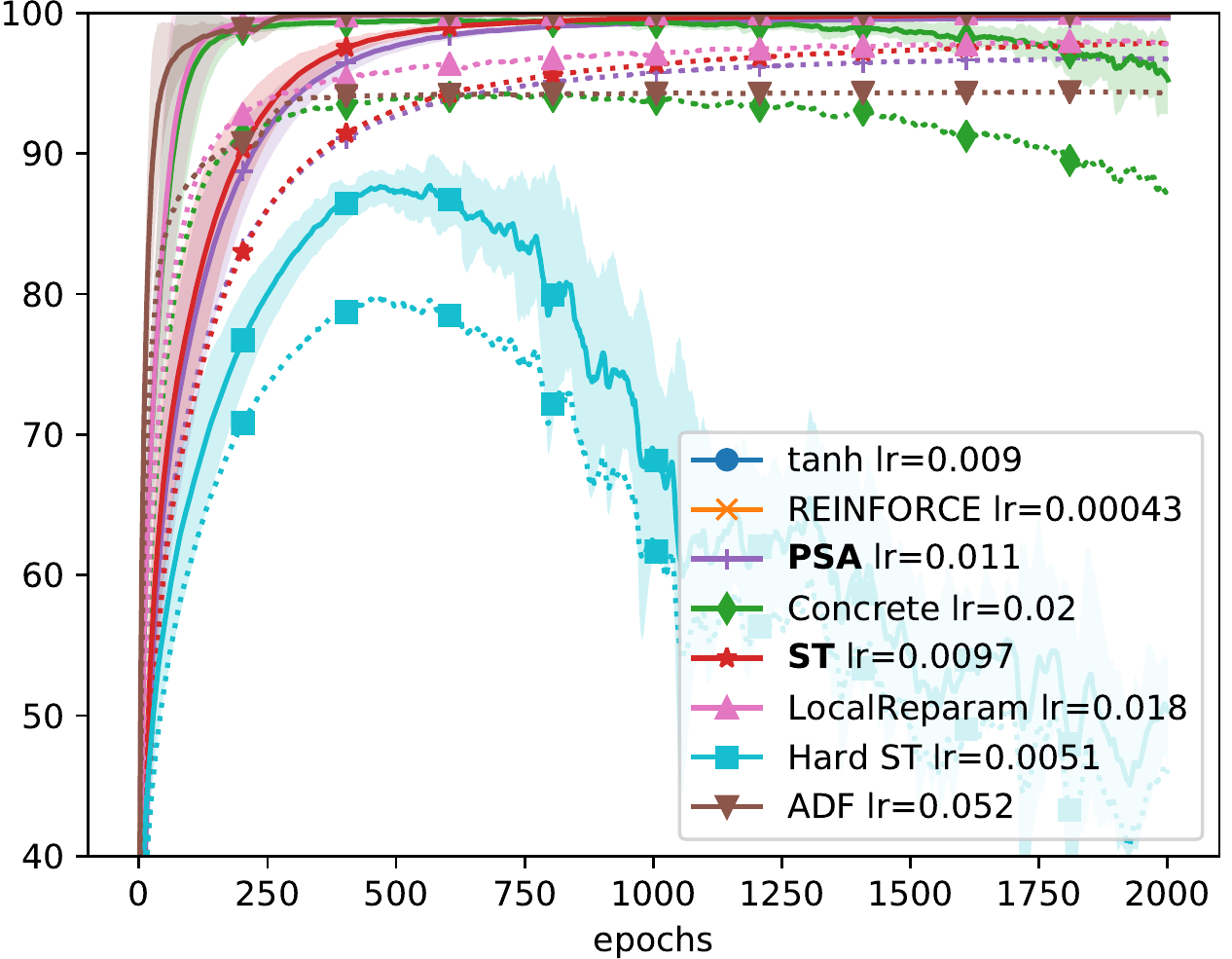}%
\end{tabular}&\begin{tabular}{c}
\includegraphics[width=0.33\linewidth]{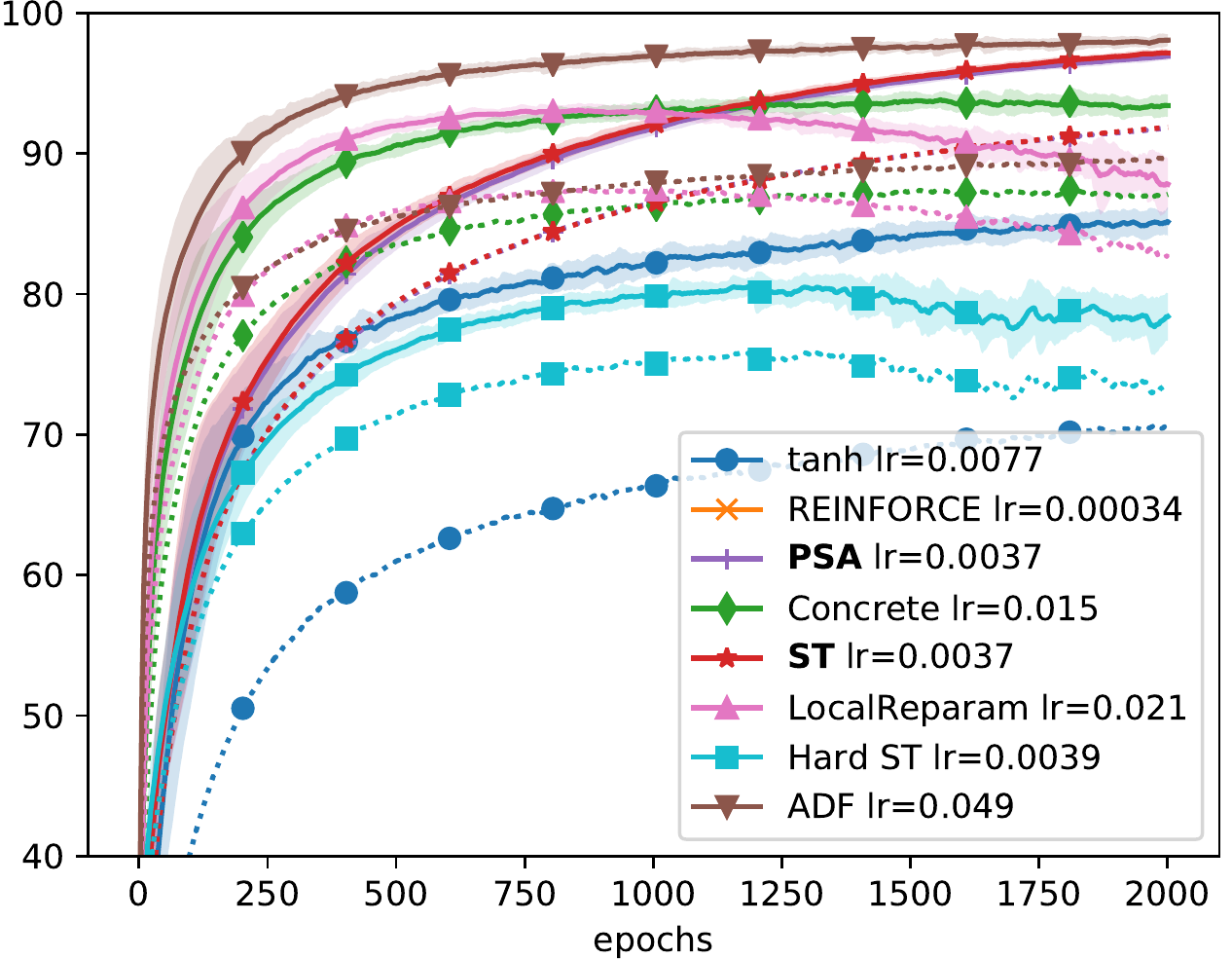}%
\end{tabular}
\end{tabular}
\caption{\label{fig:CIFAR-extra}
Additional plots to~\cref{fig:CIFAR-NLL}: validation losses and training accuracies.
}
\end{figure*}
\paragraph{Dataset}
The learning experiments are performed on CIFAR10 dataset\footnote{\url{https://www.cs.toronto.edu/~kriz/cifar.html}}. The dataset contains a training set and a test set. Following the common approach, we withhold 5000 samples (10 percent) of the training set as a validation set. Since we do not perform any hyper-parameter tuning or model selection based on the validation set, it provides an independent and unbiased estimates.

\paragraph{Augmentation}
For Affine augmentation experiment we used random affine transforms that included shifts by $\pm5\%$ and rotation by $\pm10\deg$, linearly interpolated.
Flip\&Crop is the commonly applied augmentation for this dataset. It consists of random horizontal flips and random shifts with zero padding by $\pm4$ pixels ({\tt \small transforms.RandomCrop(32, 4)}). Fro the training plots and loss values achieved we can hypothesize that this augmentation is more "diverse" and harder to fit that the Affine augmentation above.

\paragraph{Network}
For simplicity, we used a variant of All convolutional network (Springenberg et al. 2014), with strides instead of max pooling.
The network structure is as follows:
\begin{lstlisting}
ksize = [3,  3,  3,  3,   3,   3,   3,   1  ]
stride= [1,  1,  2,  1,   1,   2,   1,   1  ]
depth = [96, 96, 96, 192, 192, 192, 192, 192]
\end{lstlisting}
There is no padding and the output is a $192 \times 2 \times 2$ binary tensor, which is then flattened and passed to the head function consisting of an affine transform to the $10$-dimensional class logits. This network is smaller than VGG-type networks commonly used~\cite{Hubara-16}, however significantly smaller is size esp. considering the last fully connected layer. Our FC has size $768{\times}10$, the ones in~\cite{Hubara-16} (there are three) are: $8192{\times}1024$, $1024{\times}1024$, $1024{\times}10$.
No batch normalization is used for the purity of comparison experiments. If we were chasing the highest accuracy, we can confirm that BN improves the results.

\paragraph{Optimizer}
For the optimization we used batch size 64 and SGD optimizer with Nesterov Momentum 0.9 (pytorch default) and a constant learning rate. Because different methods have different variance and biases, for a fair comparison we tried to find the optimal learning rate for each method individually.
We selected the learning rate by a numerical optimization based on the performance of the model in 5 epochs as measured by the objective optimized by the method (\ie the sample-based estimate of the expected loss or the approximated expected loss) on the training set. We used exponentially weighted average on the objective value to reduce its variance. We used
\begin{lstlisting}
scipy.optimize.minimize_scalar(f, method='bounded', bounds=(-6, 0),maxiter=10)
\end{lstlisting}
for the numerical search of the optimal $\log$ of the learning rate. Arguably, this learning rate selection optimizing the short-horizon performance may be sub-optimal in a longer run, but is the first best approximation to deal with this issue.

Parameters of linear and convolutional layers were initialized as uniformly distributed. We then perform one iteration, computing mean and variance statistics over a batch and spatial dimensions and whitening pre-activations using these statistics, similar to batch normalization. This is performed only as a data-dependent initialization to make sure that activations are in a reasonable range on average and the gradients are initially non-vanishing.

\paragraph{Test Metrics}
In our experiments all hyperparameters including the learning rates are tuned exclusively on the training set as detailed above. Hence the validation set provides an unbiased estimate of the test error and we report only it.

\paragraph{Methods}
Here we specify additional details on the baseline methods used in the learning experiment.
The ADF method, called AP2 in \cite{shekhovtsov-18-norm}, propagates means and variances through the hidden layers fully analytically. This method is the equivalent of the {\sc PBNet} method in~\cite{peters2019probabilistic} when the weights are deterministic. We only sample the states of the last binary layer are sampled, as a general solution suitable with different head functions. The ADF family of methods includes expectation propagation~\cite{Minka-2001} designed for approximate variational inference in graphical models. It computes an approximation to summation~\eqref{mc-model} by fitting and propagating a fully factorized approximation to marginal distributions $p(x^k|x^0)$ with forward KL divergence. The gradient of this approximation is then evaluated. The PSA method differs in that it approximates the gradient directly and does not make a strong factorization assumption. From the experiments we observe that ADF performs very well in the beginning, when all weights are initially random and then it over-fits to the relaxed objective.

In the {\tt LocalReparam} method we sample pre-activations from their approximated Normal distribution and computed probabilities of the outputs analytically. This is related but different from the {\sc PBNet-S} method~\cite{peters2019probabilistic}, which samples activations from the concrete relaxation distribution and is not directly applicable to real-valued deterministic weights. The implementation of our methods and these baselines is available.

\paragraph{Infrastructure}
The experiments were run on linux servers with NVIDIA GTX1080 cards and Tesla P100 cards.

\paragraph{Running Time}
We measure running time for a single batch of size 64 and the running time for one epoch of training / validation. The training loop includes measuring statistics using 10 samples of the model per data point, which increases amount of forward computations performed.
Please consider that these numbers are indicative only, as we did not optimize for performance beyond providing the CUDA kernel for \PSA.
We also have lots of overhead from implementing padding and strides in pytorch, externally to the kernel.
Times are given in seconds.

\setlength{\tabcolsep}{5pt}
\begin{tabular}{lllp{0.15\linewidth}p{0.15\linewidth}}
Method & Forward batch &  Backward batch & Train Epoch with measurements & Val Epoch with measurements\\
\hline
{\tt PSA} &  0.013 & 0.056 & 105 & 14 \\
{\tt Gumbel} & 0.009 & 0.012 & 77 & 13 \\
{\tt Tanh} & 0.005 & 0.011 & 77 & 12.5 \\
{\tt REINFORCE} & 0.014 & 0.009 & 77 & 14 \\
{\tt ADF} & 0.012 & 0.031 & 91.5 & 14 \\
{\tt LocalReparam} & 0.015 & 0.029 & 98 & 14 \\
{\tt ST} & 0.009 & 0.011 & 77 & 13 \\
{\tt HardST} & 0.009 & 0.011 & 77 & 13 \\
\hline
\end{tabular}

\subsection{MuProp and REINFORCE with Baselines}\label{sec:muprop}
For fairness of comparison we have additionally tried to use \REINFORCE with a centering variance reduction and MuProp. As a baselines for these methods we use the exponentially weighted average (EWA) of the loss function per data point with momentum=0.9. Since the learning rate needed for these methods is rather small, the EWA is close to the true expected loss value per data point. Since the EWA is kept per data point, this is an accurate input-dependent constant baseline. We made several learning trials with automatically selected learning rates (denoted with stars in~\cref{fig:muprop}) as well as manually set learning rates. Since automatic learning rates appear to be too high as they lead do divergence, it only makes sense to decrease these learning rates. We tried setting the learning rates smaller. The results in~\cref{fig:muprop} show that with some careful choice of parameters, \REINFORCE and MuProp can make progress. However, the learning rates required are 1-2 orders smaller than with biased  methods, which leads to much slower performance. Indeed, comparing~\cref{fig:muprop} and~\cref{fig:CIFAR-NLL} (note the scale difference of $y$-axis), we observe that variance-reduced \REINFORCE and MuProp in 1000 epochs have barely made the progress of biased estimators in 10 epochs.

%
%

\begin{figure*}[t]
\small
\centering
\includegraphics[width=0.4\linewidth]{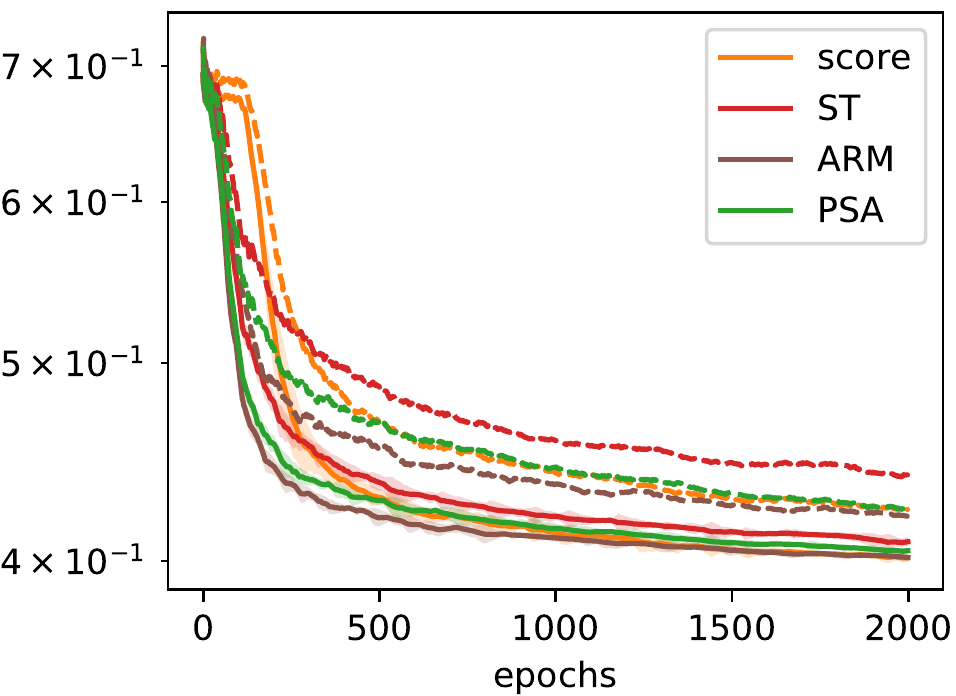}%
\caption{\label{fig:toy-train}
Training losses when training a small network 5-5-5 on the data in~\cref{fig:example}(a) with several methods. Here the dashed curves show the SBN objective, the expected loss of randomized predictor~\eqref{exp-loss}, which upper bounds the the loss of the ensemble~\eqref{exp-loss} shown with solid curves. We see that \REINFORCE (score) does not do very well in the beginning, but eventually gets into a mode that allows it to optimize the objective. The methods \ARM and \PSA  perform similar and \ST is somewhat slower due to its bias for this small model.
}
\end{figure*}

\begin{figure*}[t]
\centering
\small
\begin{tabular}{ccc}
\small Layer 1 &  \small Layer 2 & \small Layer 3 \\  
\includegraphics[width=0.33\linewidth]{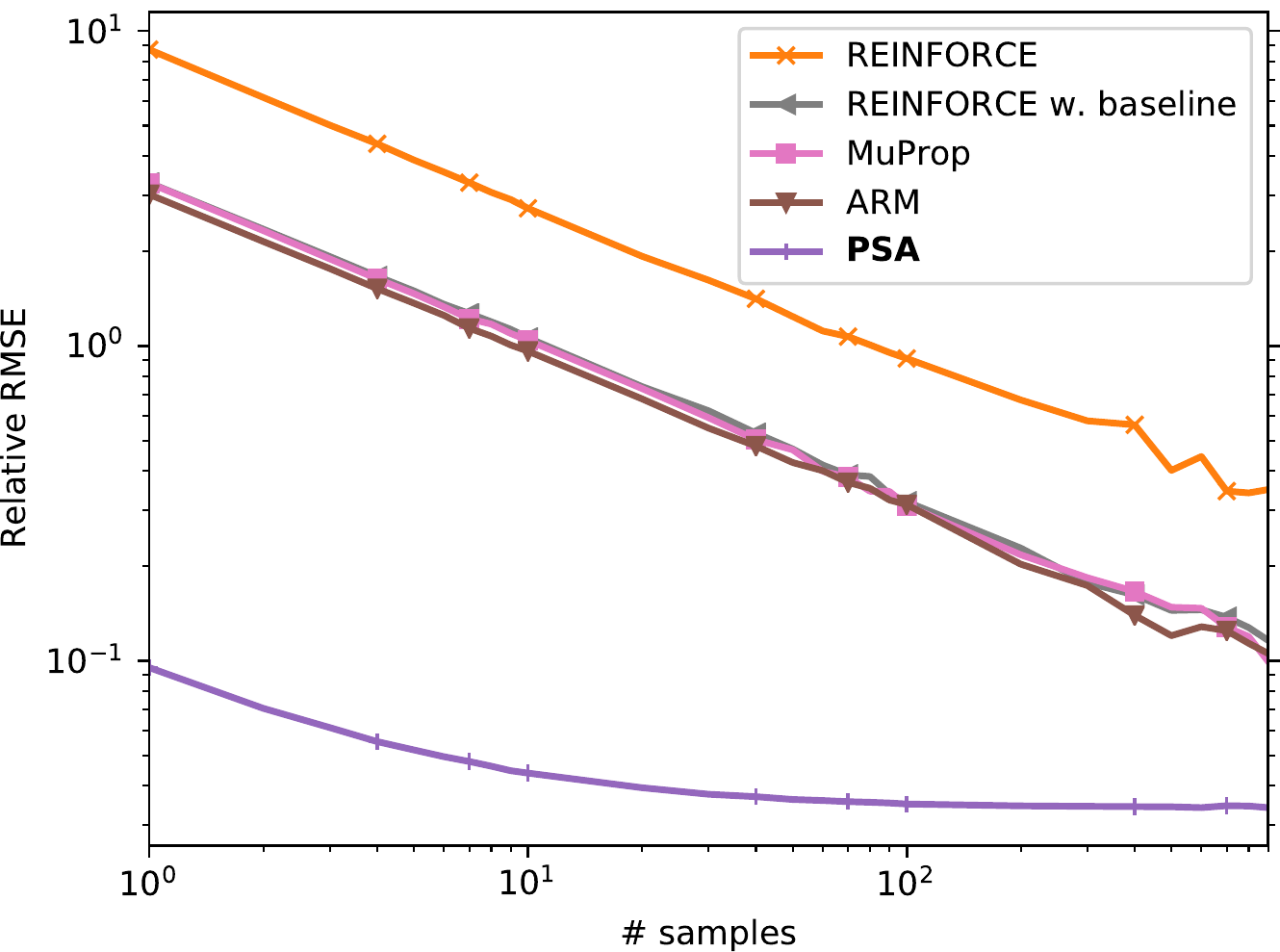}&
\includegraphics[width=0.33\linewidth]{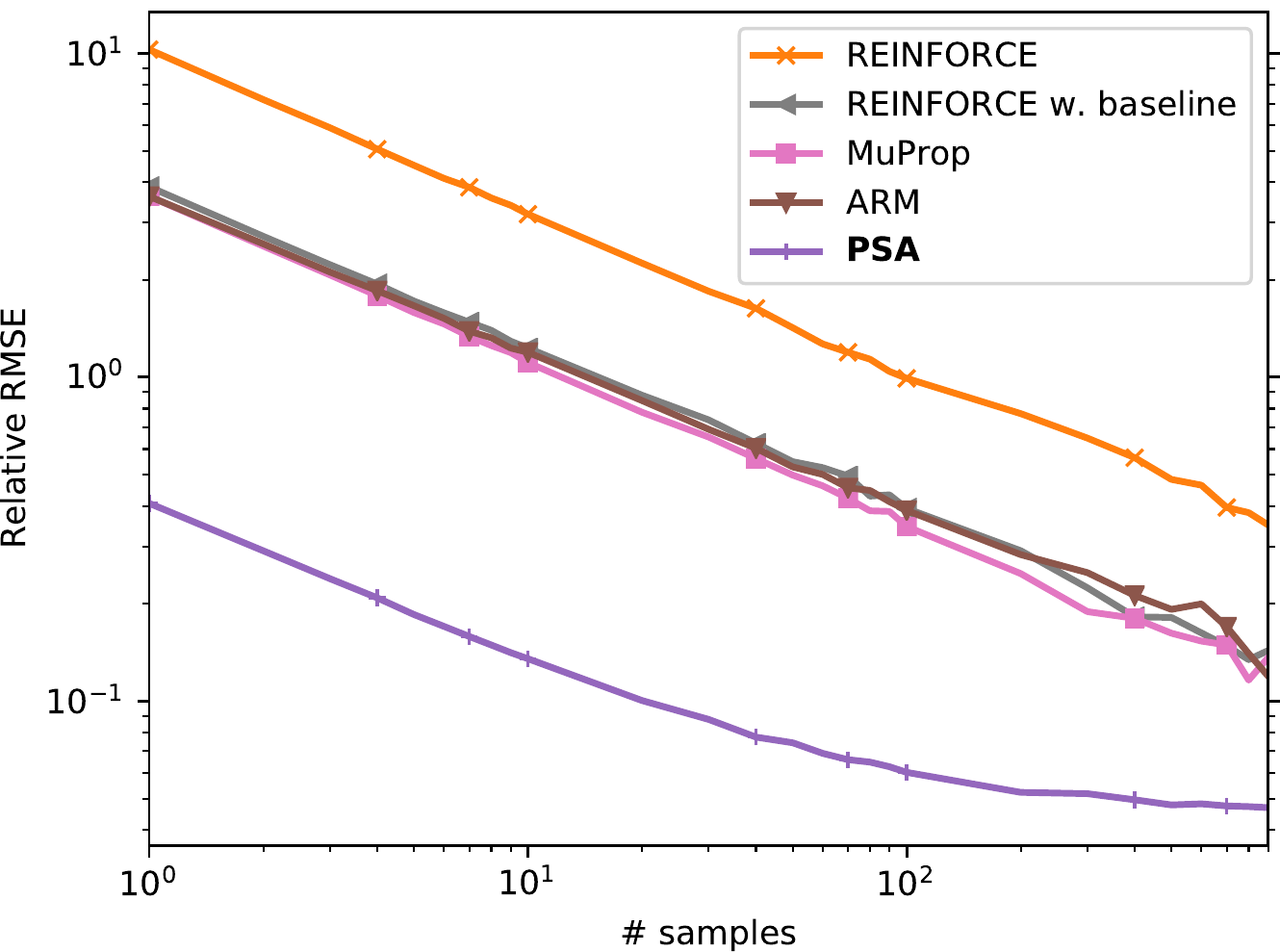}&
\includegraphics[width=0.33\linewidth]{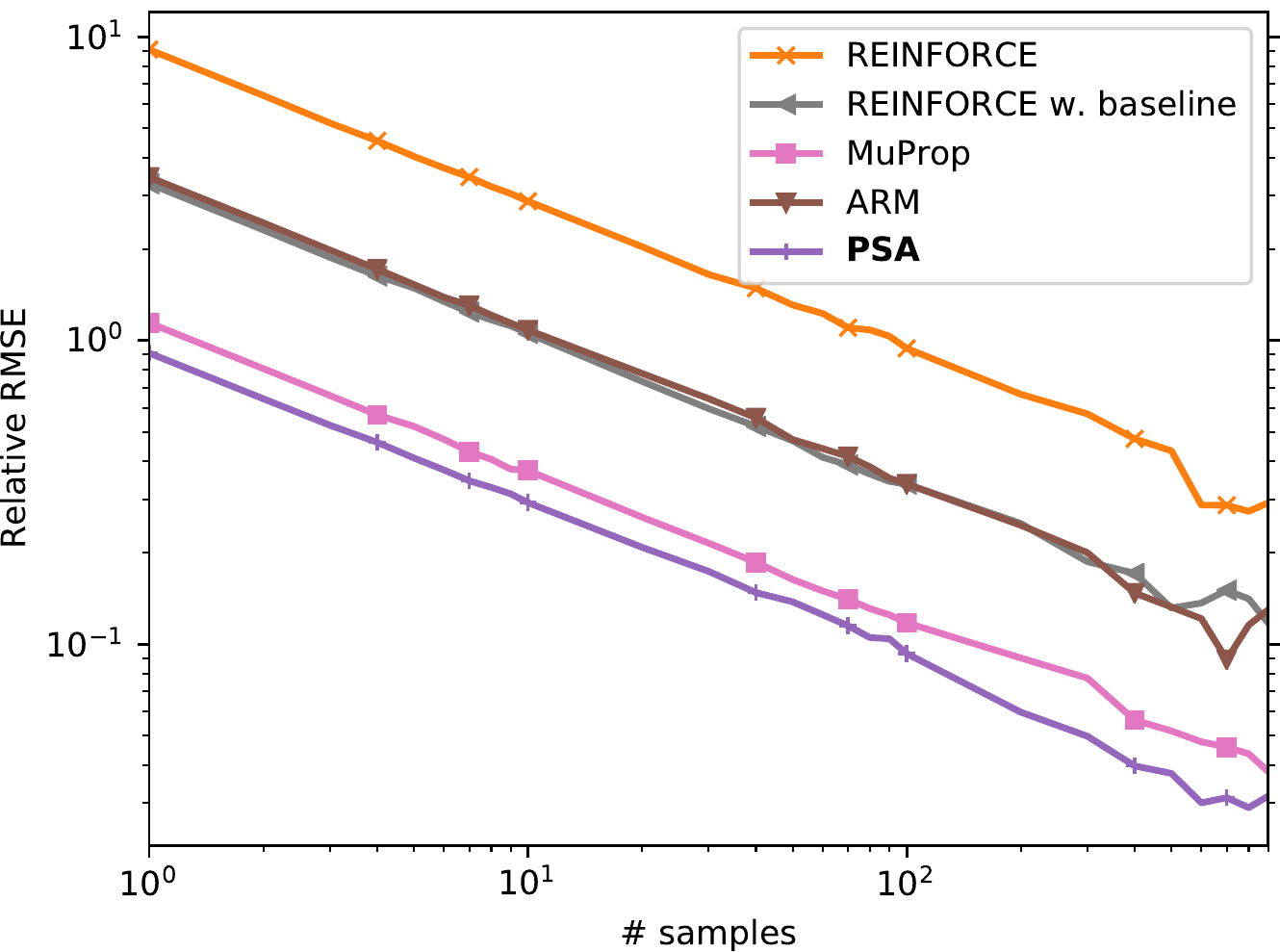}
\end{tabular}
\vskip-0.5\baselineskip
\caption{\label{fig:rmse1}
Root mean squared error of the gradient in layers 1 to 3 relative to the true gradient length after epoch 1.
Comparisons with more unbiased methods in the same setting as~\cref{fig:rmse}.
}
\end{figure*}
\begin{figure*}[t]
\setlength{\figwidth}{0.33\textwidth}
\centering
\begin{tabular}{ccc}
\small After 1 epoch &  \small After 100 epochs & \small After 2000 epochs \\  
\includegraphics[width=0.33\linewidth]{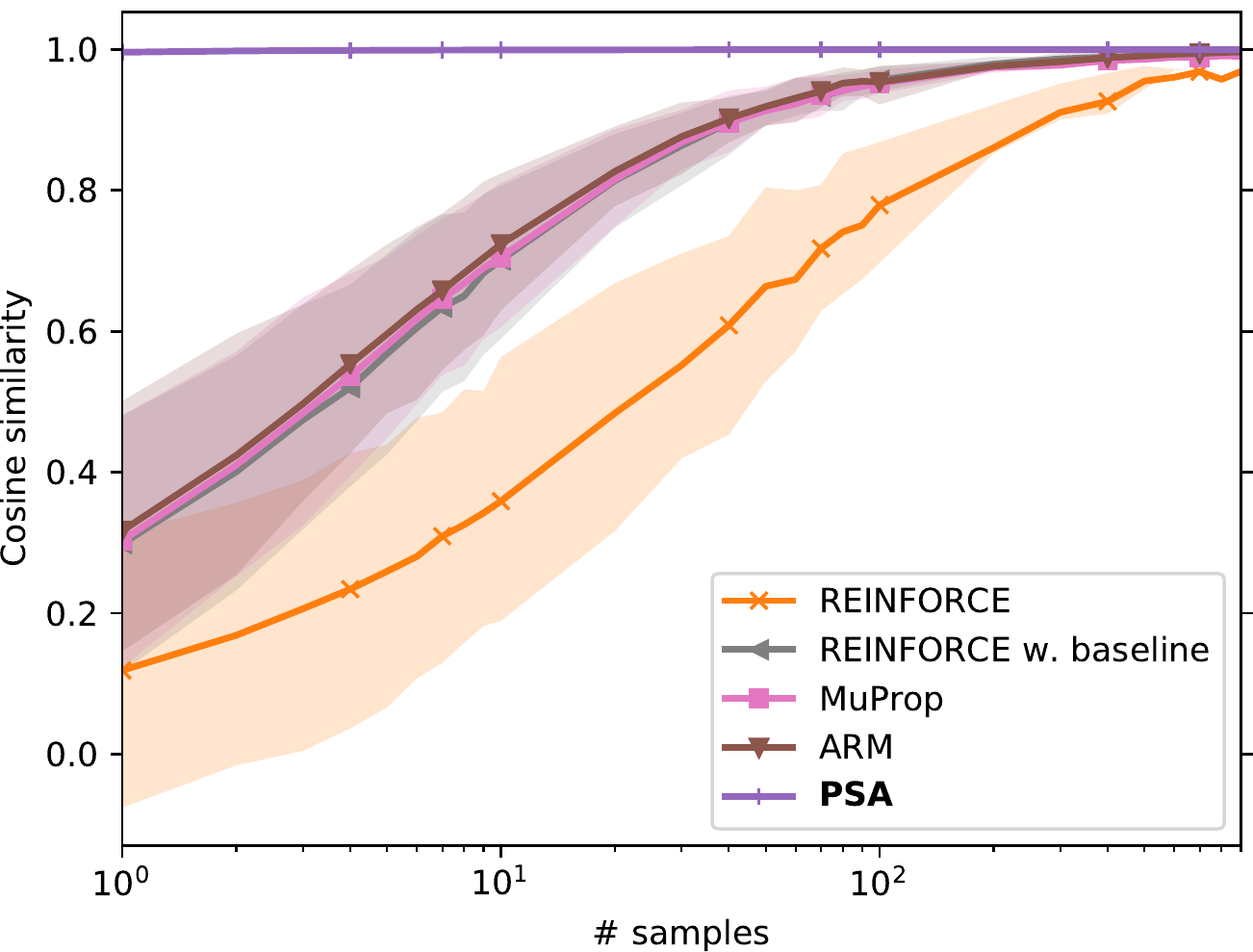}&
\includegraphics[width=0.33\linewidth]{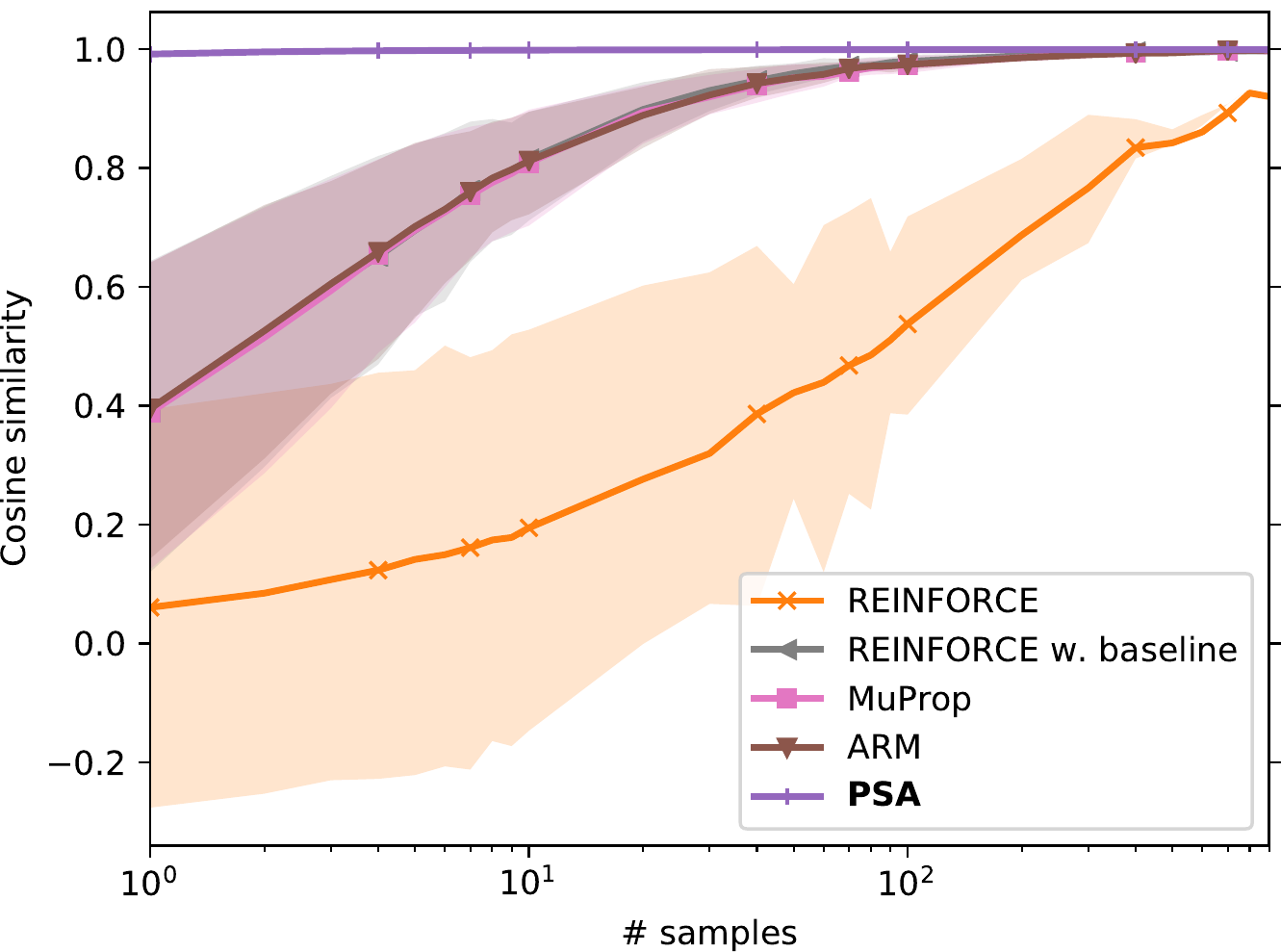}&
\includegraphics[width=0.33\linewidth]{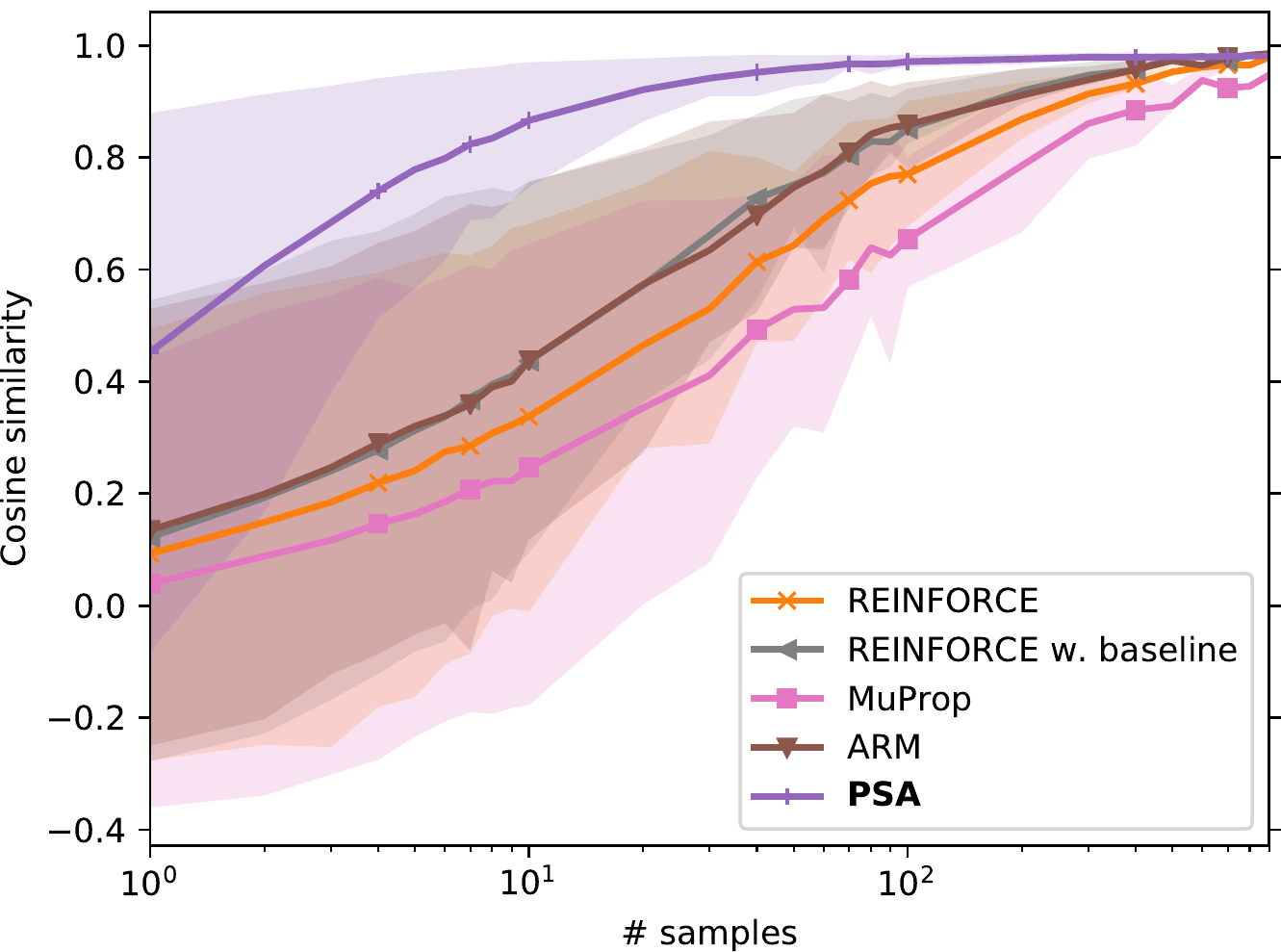}\\
\end{tabular}
\caption{\label{fig:accuracy1}
Cosine similarity of the estimated gradient to the true gradient in layer 1 at different points during training. 
Comparisons with more unbiased methods in the same setting as~\cref{fig:accuracy}.
}
\end{figure*}

\begin{figure*}[t]
\centering
\small
\begin{tabular}{ccc}
\small 3-3-3 &  \small 5-5-5 & \small 10-10-10 \\  
\includegraphics[width=0.33\linewidth]{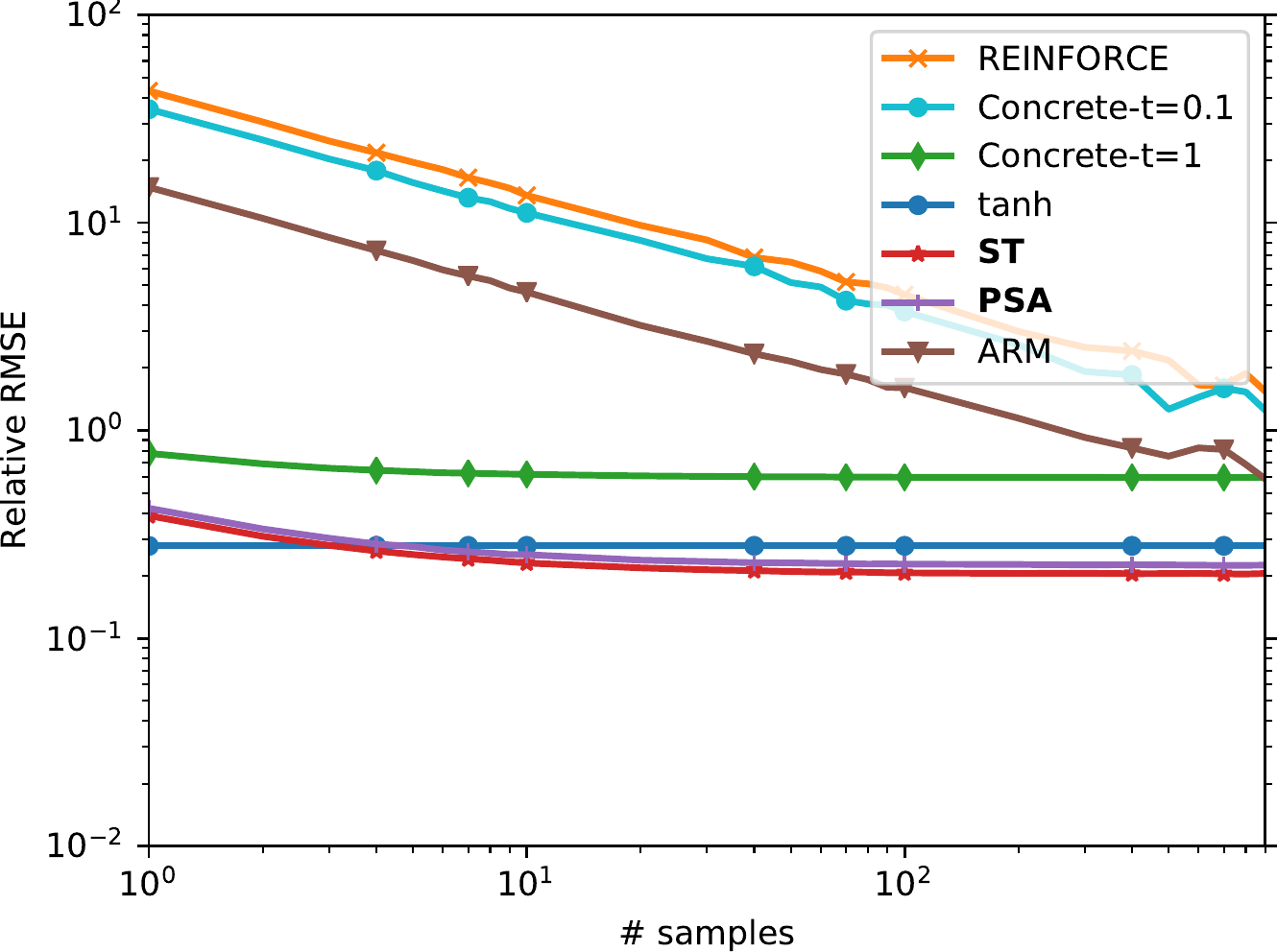}&
\includegraphics[width=0.33\linewidth]{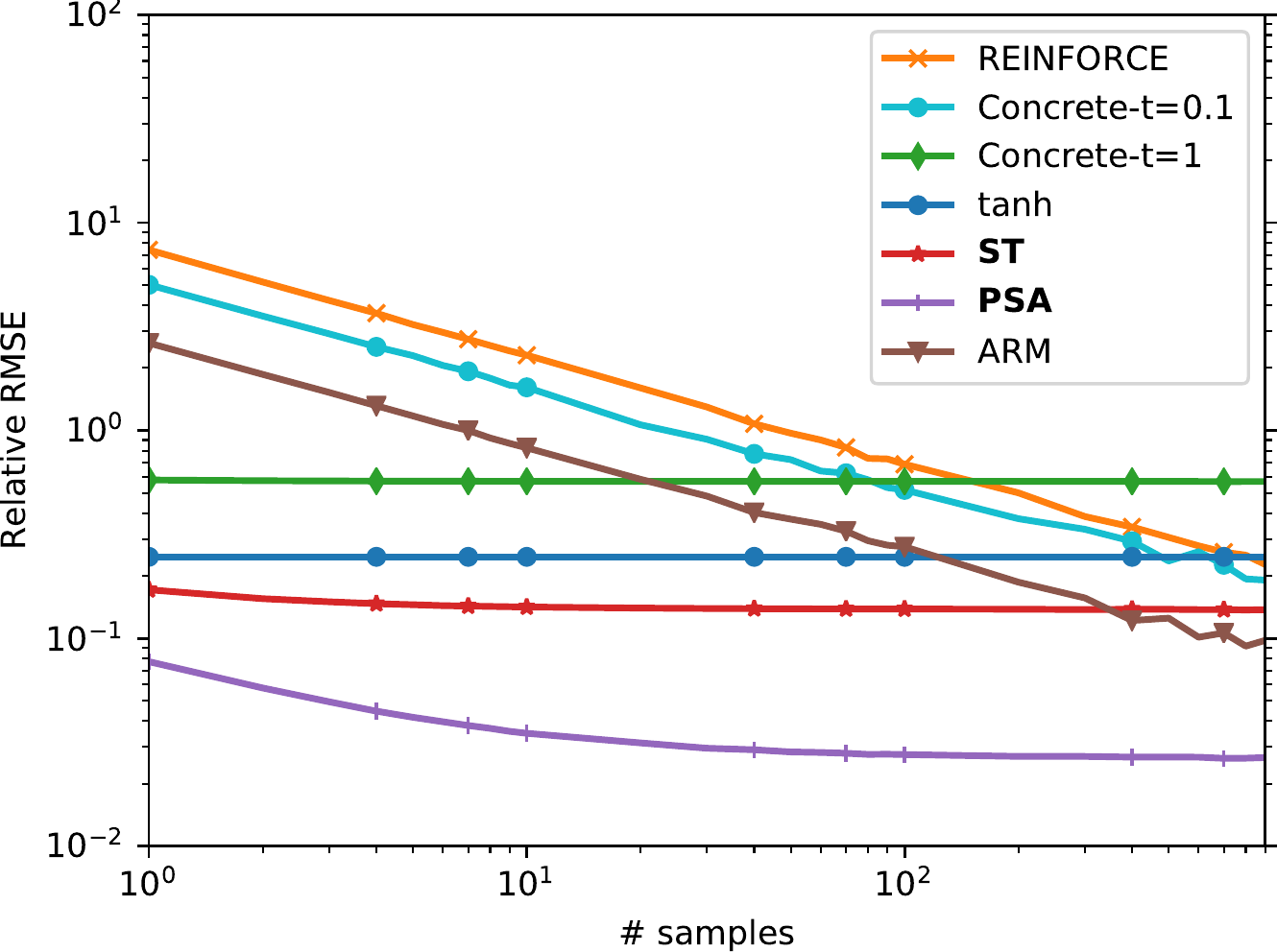}&
\includegraphics[width=0.33\linewidth]{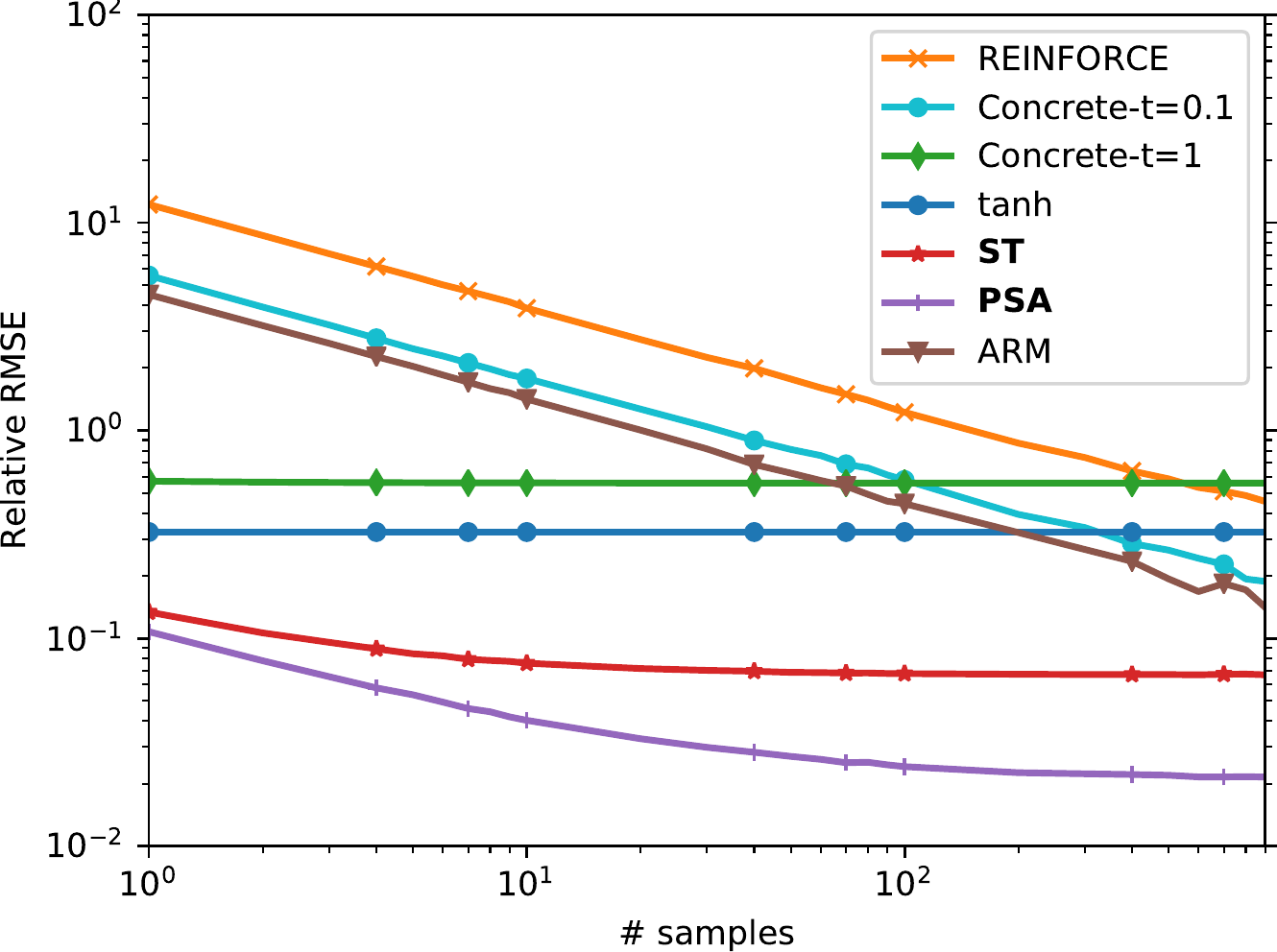}
\end{tabular}
\caption{Dependence on the network width. Shown gradient RMES in Layer 1 after epoch 1 (close to randomly initialized network). Bias of PSA and ST is more prominent in small models.\label{fig:acc-width}
}
\end{figure*}

\begin{figure*}[t]
\centering
\small
\begin{tabular}{ccc}
\small 2 layers &  \small 5 layers & \small 7 layers \\  
\includegraphics[width=0.33\linewidth]{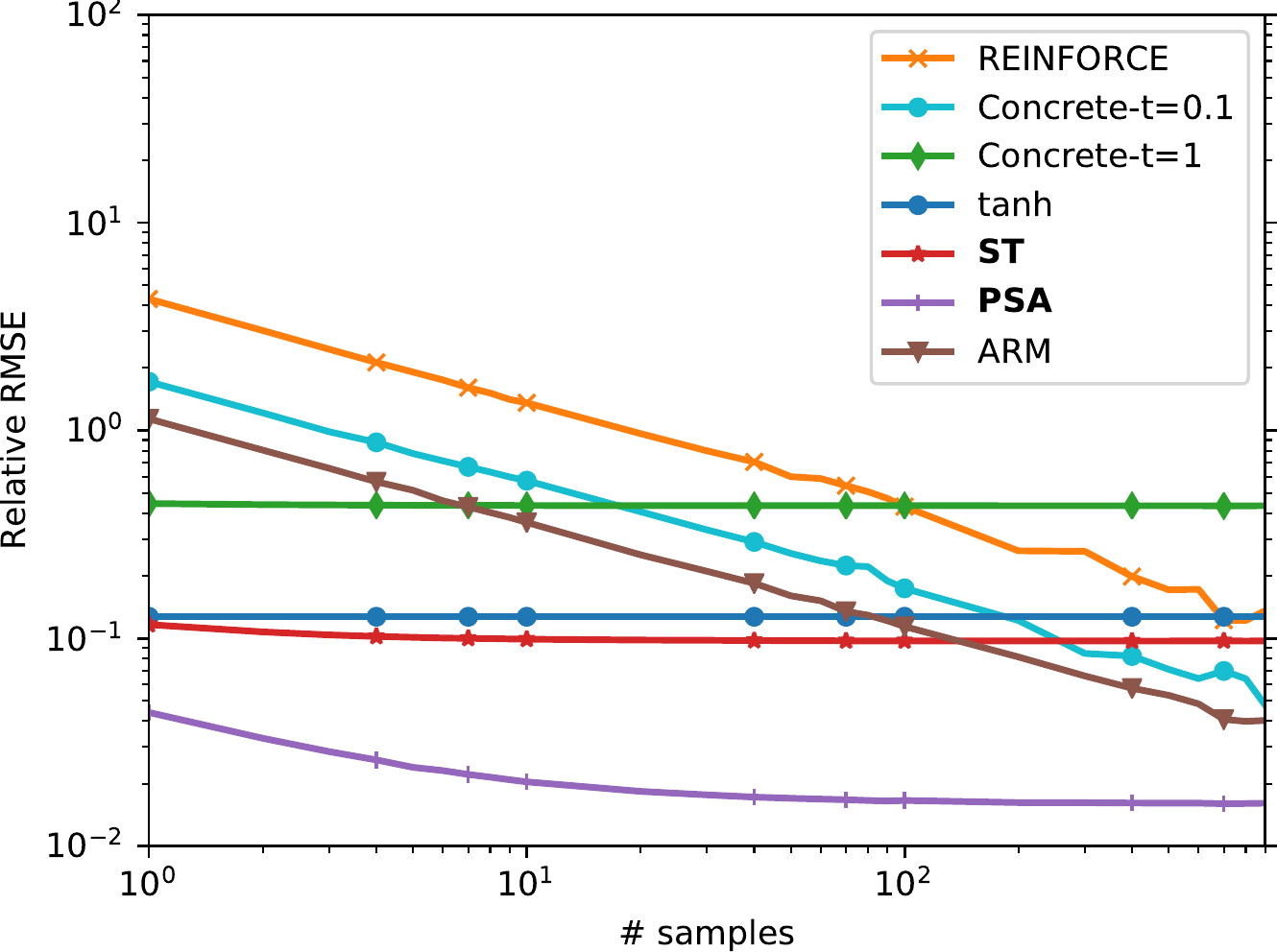}&
\includegraphics[width=0.33\linewidth]{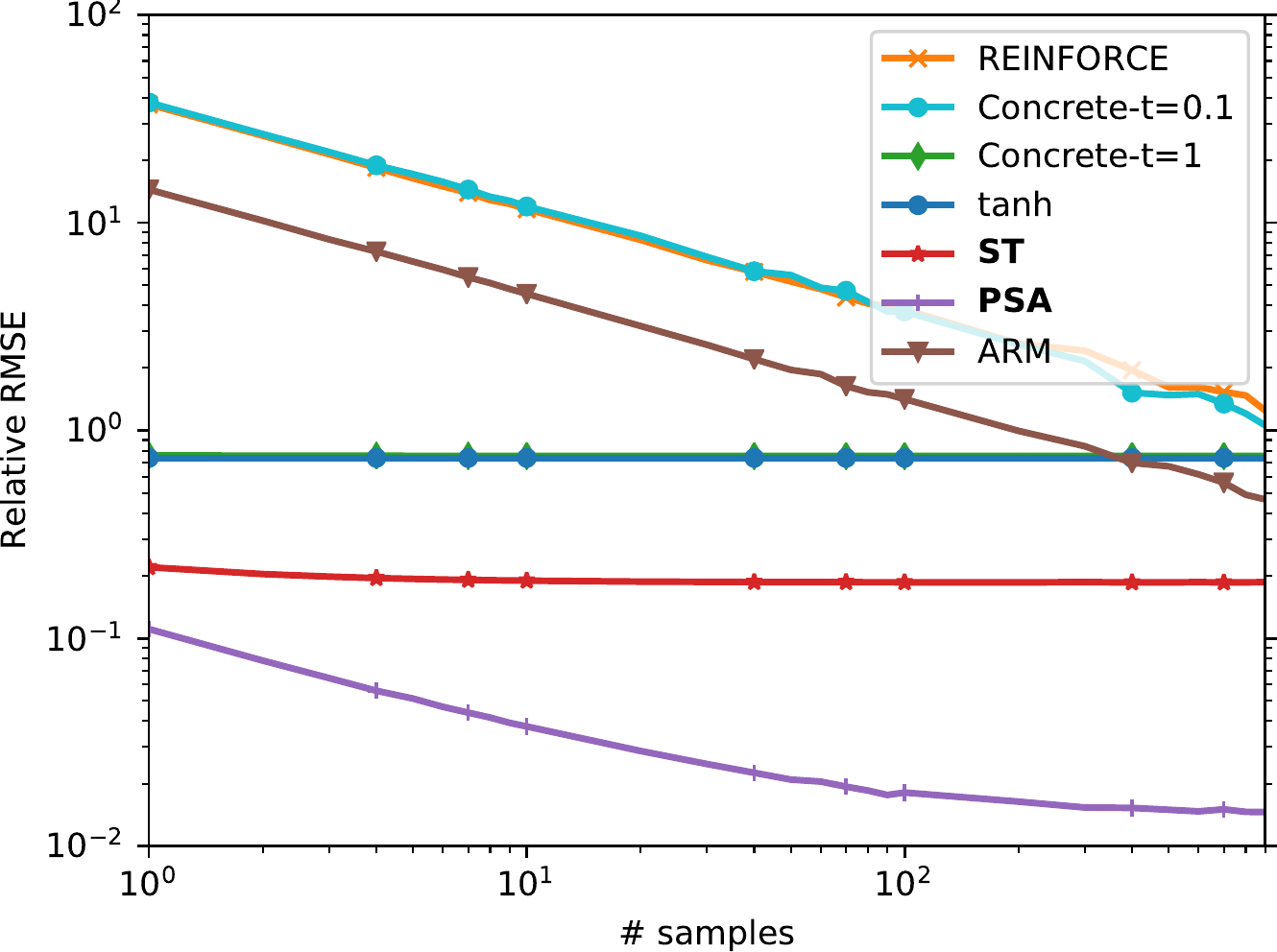}&
\includegraphics[width=0.33\linewidth]{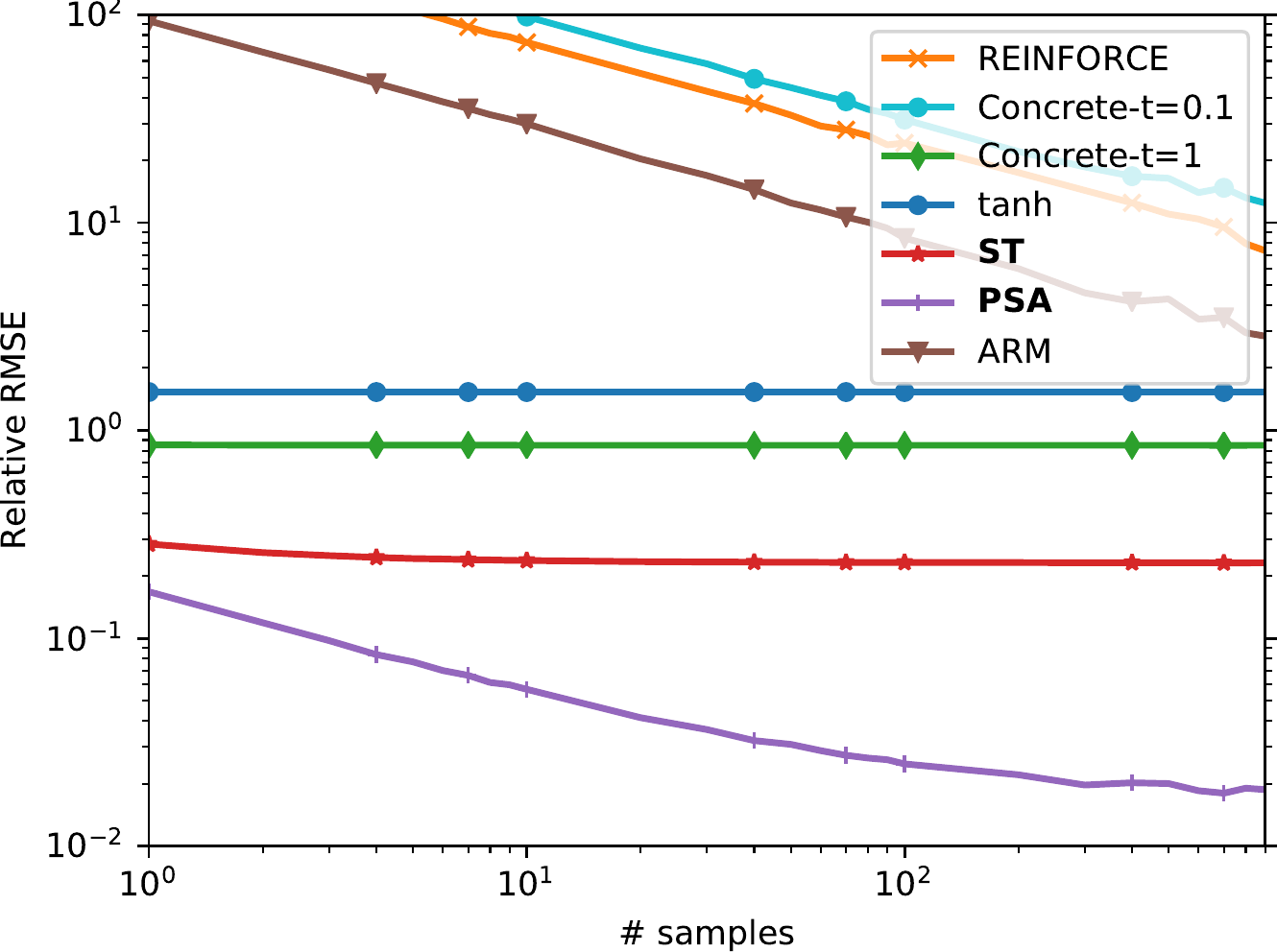}
\end{tabular}
\caption{\label{fig:acc-depth}
Dependence on the network depth. Shown gradient RMES in Layer 1 after epoch 1 (close to randomly initialized network). The variance of unbiased estimators grows much faster when increasing the network depth. The network width is fixed to 5 units per layer.
}
\end{figure*}

\begin{figure*}[t]
\small
\setlength{\tabcolsep}{0pt}
\setlength{\figwidth}{0.7\textwidth}
\centering
\begin{tabular}{cc}
& \small Flip \& Crop Augmentation \\
\parbox{1em}{\rotatebox[origin=c]{90}{\small Training Loss}}&
\begin{tabular}{c}
\includegraphics[width=\figwidth]{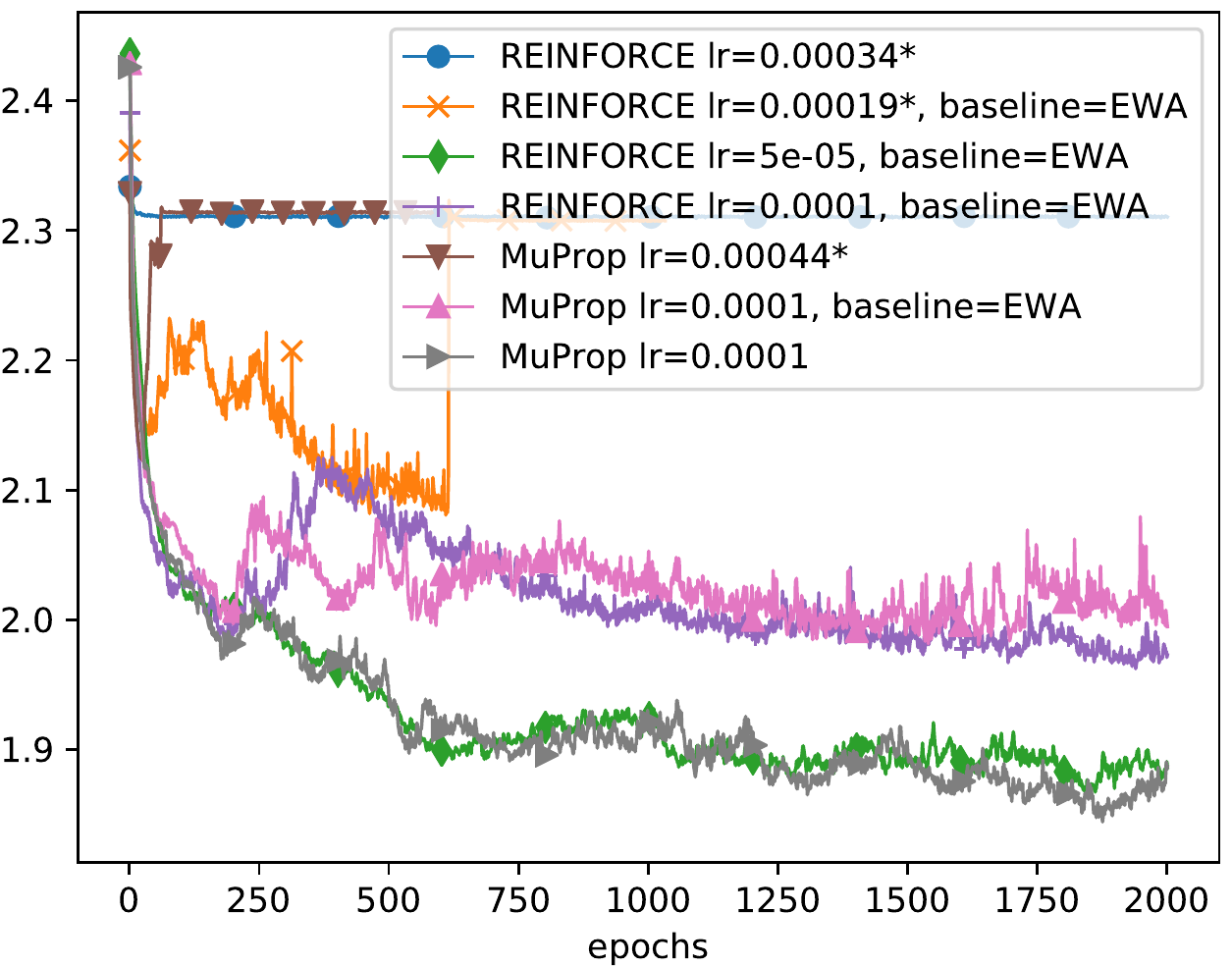}%
\end{tabular}
\end{tabular}
\caption{\label{fig:muprop}
Performance of \REINFORCE and MuProp with different learning rates and input-dependnet constant baselines estimated using a running averages.
No smoothing is applied across epochs in this plot. Note the limits of $y$-axis in comparison to~\cref{fig:accuracy}.
}
\end{figure*}

\end{document}